 \title{Efficient Near-Optimal Algorithm for Online Shortest Paths in Directed Acyclic Graphs with Bandit Feedback Against Adaptive Adversaries}
\author{Arnab Maiti\thanks{University of Washington. arnabm2@uw.edu (Equal contribution) } \and
Zhiyuan Fan \thanks{MIT. fanzy@mit.edu (Equal contribution)} \and
Kevin Jamieson \thanks{University of Washington. jamieson@cs.washington.edu}\and
Lillian J. Ratliff \thanks{University of Washington. ratliffl@uw.edu}
\and
Gabriele Farina \thanks{MIT. gfarina@mit.edu } 
}
\date{}
\begin{document}

\maketitle
\begin{abstract}%

In this paper, we study the online shortest path problem in directed acyclic graphs (DAGs) under bandit feedback against an adaptive adversary. Given a DAG $G = (V, E)$ with a source node $\source$ and a sink node $\sink$, let $\mathcal{X} \subseteq \{0,1\}^{|E|}$ denote the set of all paths from $\source$ to $\sink$. At each round $t$, we select a path $\bfx_t \in \mathcal{X}$ and receive bandit feedback on our loss $\langle \bfx_t, \bfy_t \rangle \in [-1,1]$, where $\bfy_t$ is an adversarially chosen loss vector. Our goal is to minimize regret with respect to the best path in hindsight over $T$ rounds. We propose the first computationally efficient algorithm to achieve a near-minimax optimal regret bound of $\tilde{\mathcal{O}}(\sqrt{|E|T\log |\mathcal{X}|})$ with high probability against any adaptive adversary, where $\tilde{\mathcal{O}}(\cdot)$ hides logarithmic factors in the number of edges $|E|$. Our algorithm leverages a novel loss estimator and a centroid-based decomposition in a nontrivial manner to attain this regret bound.

As an application, we show that our algorithm for DAGs provides state-of-the-art efficient algorithms for $m$-sets, extensive-form games, the Colonel Blotto game, shortest walks in directed graphs, hypercubes, and multi-task multi-armed bandits, achieving improved high-probability regret guarantees in all these settings.

\end{abstract}
\section{Introduction}
Online decision-making is a well-studied area with applications in various domains, including recommendation systems, resource allocation, web ranking, shortest path planning, and portfolio selection (e.g., \citet{lin2020novel,chen2017online,frigo2022online,gordon2006no,das2014online}). In a typical online decision-making problem, a learner interacts with an adversary over multiple rounds. The learner is given a set of arms $\mathcal{X}$, and in each round $t$, selects an arm $\mathbf{x}_t \in \mathcal{X}$. Simultaneously, an adversary chooses a loss function $\mathbf{y}_t:\calX\to \mathbb{R}$. The learner then incurs a loss of $\mathbf{y}_t[\mathbf{x}_t] \in [-1,1]$ and observes only the incurred loss, not the full loss function $\mathbf{y}_t$. After $T$ rounds of interaction between the learner and the adversary, the learner’s regret relative to a fixed arm $\mathbf{x} \in \mathcal{X}$ is defined as  
\[
    R_T(\bfx) = \sum_{t=1}^{T} \mathbf{y}_t[\mathbf{x}_t] - \sum_{t=1}^{T} \mathbf{y}_t[\mathbf{x}].
\]  
The learner aims either to minimize the pseudo-regret, defined as $\max_{\mathbf{x} \in \mathcal{X}} \mathbb{E}[R_T(\bfx)]$, or to provide high-probability guarantees on $\max_{\mathbf{x} \in \mathcal{X}} R_T(\bfx)$. The latter is a stronger and more natural notion of regret, as $\mathbb{E}[\max_{\mathbf{x} \in \mathcal{X}} R_T(\mathbf{x})]$ can far exceed the pseudo-regret against an adaptive adversary.  

A major breakthrough in this area was achieved by \citet{auer2002nonstochastic}, who proposed the EXP3.P algorithm, which attains a regret of $\tilde{O}(\sqrt{KT})$ with high probability against an adaptive adversary, where $K$ is the number of arms in $\mathcal{X}$. This bound is optimal, as there exists a minimax lower bound of $\Omega(\sqrt{KT})$ for this problem.  

However, one can hope for better guarantees when the loss functions and the set of arms exhibit additional structure. \citet{awerbuch2004adaptive} took a step in this direction by considering the online shortest path problem in directed acyclic graphs (DAGs) under bandit feedback. In this setting, an adversary assigns loss values to each edge of a given DAG, and the learner must select a path from the source to the sink. Here, the set of arms consists of all such paths, and the loss of a path is defined as the sum of the losses of its edges. Applying EXP3.P to this problem results in regret that scales exponentially with the number of edges in the DAG, as the number of paths can be exponentially large. To overcome this, \citet{awerbuch2004adaptive} designed an algorithm that achieves a pseudo-regret of $T^{2/3}$ that also scales polynomially with the number of edges. Later, \citet{gyorgy2007line} extended this result, showing that a regret bound of $T^{2/3}$ can be achieved with high probability against adaptive adversaries, while maintaining polynomial dependence on the number of edges.  

Motivated by research on DAGs, a series of works have explored combinatorial linear bandits, where $\mathcal{X} \subseteq \{0,1\}^d$ and $\mathbf{y}_t$ is a linear loss function. Notably, the problem on DAGs is a special case of combinatorial linear bandits. Several algorithms have been developed in this setting, including Geometric Hedge \citep{dani2007price}, ComBand \citep{cesa2012combinatorial}, and EXP2 with John’s exploration \citep{bubeck2012towards}. The best known pseudo-regret bound, $\mathcal{O}(\sqrt{dT\log |\mathcal{X}|})$, is achieved by EXP2 with John’s exploration. Later, \citet{zimmert2022return} established a high-probability regret bound of $\mathcal{O}(\sqrt{dT\log |\mathcal{X}|})$ against adaptive adversaries for EXP3 with Kiefer-Wolfowitz exploration.  

While the above algorithms achieve low regret, they can be computationally inefficient. A series of works have addressed this issue for continuous sets in $\mathbb{R}^d$. \citet{abernethy2008competing} were the first to propose a computationally efficient algorithm that achieved a pseudo-regret of $\text{poly}(d) \cdot \sqrt{T}$. They also proposed a computationally efficient algorithm for the online shortest path problem on DAGs with the same pseudo-regret. The best known pseudo-regret for a computationally efficient algorithm is $\tilde{O}(d\sqrt{T})$, attained by the algorithms in \citet{hazan2016volumetric} and \citet{ito2020tight}. The efficient algorithm by \citet{hazan2016volumetric} also matches the pseudo-regret of EXP2 with John’s exploration for the online shortest path problem on DAGs. 

For continuous sets in $\mathbb{R}^d$, \citet{lee2020bias} proposed the first efficient algorithm achieving a \textit{high-probability} regret of $\text{poly}(d) \cdot \sqrt{T}$ against an adaptive adversary. Later, \citet{zimmert2022return} developed an efficient algorithm with a regret of $\tilde{\mathcal{O}}(d^2\sqrt{T})$ with high probability against an adaptive adversary, which remains the best known result to date. For a more detailed discussion of all the related works, we refer the reader to \Cref{appendix:related-works}.

In this paper, we revisit the online shortest path problem in DAGs—the motivation behind much of the prior work—and pose the following question:  
\begin{quote}  
\emph{Can we design a computationally efficient algorithm for the online shortest path problem in a directed acyclic graph that, under bandit feedback, achieves a minimax-optimal regret bound with high probability against adaptive adversaries, up to logarithmic factors in the number of edges?}  
\end{quote}  
\subsection{Contributions and Techniques}

In this paper, we answer the above question in the affirmative. For any directed acyclic graph (DAG) $G = (V, E)$ with a set of paths $\mathcal{X} \subseteq \{0,1\}^{E}$ from source to sink, we design the first computationally efficient algorithm to achieve a high-probability regret bound of $\tilde{\mathcal{O}}(\sqrt{|E|T\log |\mathcal{X}|})$ against an adaptive adversary under bandit feedback, where $\tilde{\mathcal{O}}(\cdot)$ hides logarithmic factors in $|E|$. We refer the reader to \Cref{table1} for a comparison of our result with previous algorithms. Moreover, for the class of DAGs with at most $d$ edges and at most $N$ paths, we establish a minimax lower bound of $\Omega\left(\sqrt{dT \log(N)/\log(d)}\right)$. Hence, our algorithm is minimax-optimal upto logarithmic factors.  %

\begin{table}[t]
    \centering
    \begin{tabular}{p{5.5cm}p{3.3cm}cc}
        \toprule
        \textbf{Reference} & \textbf{Regret} & \textbf{Efficient} & \textbf{Adaptive \& high-prob.} \\
         \midrule
        \citet{bubeck2012towards} & $\sqrt{|E|T\log |\calX|}$ & \no & \no \\
        \citet{zimmert2022return} & $\sqrt{|E|T\log |\calX|}$ & \no & \yes \\
        \citet{abernethy2008competing} & $\sqrt{|E|^3 T}$ & \yes & \no \\   
        \citet{hazan2016volumetric} & $\sqrt{|E|T\log |\calX|}$ & \yes & \no \\
        \citet{ito2020tight} & $\sqrt{|E|^2T}$ & \yes & \no \\
        \citet{lee2020bias} & $\sqrt{|E|^{7}T}$ & \yes & \;\yes$^\ddagger$ \\
        \citet{zimmert2022return} & $\sqrt{|E|^{4}T}$ & \yes & \;\yes$^\ddagger$ \\
        \midrule
        \textbf{This paper} (Theorem~\ref{actual:thm}) & $\sqrt{|E|T\log |\calX|}$ & \yes & \yes \\
        \bottomrule
    \end{tabular}
    \caption{Summary of regret guarantees for the online shortest path problem on a directed acyclic graph (DAG) $G = (V, E)$, with the set of paths $\mathcal{X} \subseteq \{0,1\}^E$ from source to sink, ignoring constants and logarithmic factors in $|E|$ and $T$. $^\ddagger$The high-probability guarantee was formally proved only for continuous sets. However, we believe that their analysis extends to discrete decision sets, such as paths in a DAG, using the same techniques as \citet{abernethy2008competing}.}\label{table1}
\end{table}

We further apply our efficient algorithm to combinatorial domains such as hypercubes, multi-task multi-armed bandits (MAB), extensive-form games, walks in directed graphs, the Colonel Blotto game, and $m$-sets, all of which can be represented as DAGs. This results in improved high-probability regret bounds in each setting compared to those in \citet{zimmert2022return}. For a detailed discussion of these improvements, we refer the reader to \Cref{sec:applications-main}. 

Our main technical contribution is a novel algorithmic approach for regret minimization on DAGs. Prior works relied on variants of exponential weights or FTRL, requiring mixing with a fixed distribution before selecting a path. Instead, we use a novel importance-sampling-inspired loss estimator to enable implicit exploration and apply centroid-based decomposition to modify the input graph, achieving a nearly minimax optimal bound.

Our algorithm proceeds in two steps. The first step is to design an algorithm for graphs with $ |V| $ vertices, $ |E| $ edges, and a longest path length of $ K $. While a path can be represented using $ |E| $ bits (one per edge), we introduce an extended representation with additional $\mathcal{O}(|V| + K)$ bits and denote the corresponding set of paths as $\mathcal{X}^\dagger$. We then solve the FTRL optimization problem:  
\[
\tilde{\mathbf{x}}_{t} \leftarrow \argmin_{\mathbf{x} \in \operatorname{co}(\mathcal{X}^\dagger)} \left( \eta \sum_{s=1}^{t-1} \langle \mathbf{x}, \widehat{\mathbf{y}}_s \rangle + F(\mathbf{x}) \right),
\]  
where $F(\cdot)$ is a Legendre function, and in our work, we use the Tsallis-1/2 entropy. We then efficiently sample a path $\mathbf{x}_t$ such that its expectation is $\tilde{\mathbf{x}}_t$. We then introduce a novel importance-sampling-inspired loss estimator $\tilde{\mathbf{y}}_t$, ensuring that the difference $\langle \bfx_1, \tilde{\mathbf{y}}_t \rangle - \langle \bfx_2, \tilde{\mathbf{y}}_t \rangle$ remains an unbiased estimate of the loss difference between any two paths encoded as $\bfx_1, \bfx_2$ in the DAG, even though $\tilde{\mathbf{y}}_t$ itself is not an unbiased estimator of the actual loss vector. Building on this, we perform implicit exploration, similar to standard multi-armed bandits \citep{neu2015explore}, by introducing a bias in $\tilde{\mathbf{y}}_t$ to construct our final estimator $\hat{\bfy}_t$, thereby achieving a high-probability regret bound of $\tilde{\mathcal{O}}(\sqrt{K |E| T})$ against any adaptive adversary.

The second step of our algorithmic approach considers the problem on an arbitrary DAG $G = (V, E)$ with the set of all paths from source to sink denoted by $\calX$ and reduces it to a problem on a newly constructed DAG $G^\dagger = (V^\dagger, E^\dagger)$ that satisfies several key properties. The length of any path from source to sink in $G^\dagger$ is $\calO(\log |\calX|)$, while the number of vertices and edges satisfy $|V^\dagger| = \calO(|V|)$ and $|E^\dagger| = \tilde{\calO}(|E|)$, respectively. Additionally, there exists a bijective mapping between the paths in $G$ and $G^\dagger$, which can be efficiently computed. To achieve this reduction, we introduce a novel centroid--based decomposition approach. Applying our FTRL method to $G^\dagger$, we obtain a high-probability regret bound of $\tilde{\calO}(\sqrt{|E|T\log |\calX|})$ against any adaptive adversary.

\section{Preliminaries}
Let $G = (V, E)$ be a Directed Acyclic Graph (DAG), where $V$ is the set of vertices and $E \subseteq V \times V$ is the set of directed edges. A path $P = (v_0, e_1, v_1, \dots, e_k, v_k)$ of length $k > 0$ is an interleaved sequence of vertices and edges satisfying $v_i \in V$ for $i \in \{0,1,\ldots, k\}$ and $e_i = (v_{i-1}, v_i) \in E$ for $i \in \{1,\ldots,k\}$. Since $G$ is acyclic, no path $P$ can exist with $v_0 = v_k$.

For a vertex $v \in V$, the set of incoming edges is denoted by $\incoming(v) := \{(u, v) \in E\}$, and the set of outgoing edges is denoted by $\outgoing(v) := \{(v, u) \in E\}$. Given a weight function $w: E \to \mathbb{R}$ that assigns a weight to each edge, the shortest path problem seeks to find a path $P$ from a source vertex $\source$ to a sink vertex $\sink$ that minimizes the total weight of the edges along the path, given by  
\[
w(P) := \sum_{i=1}^k w(e_i).
\]  
Without loss of generality, we assume that every vertex $v$ is reachable from $\source$ and can reach $\sink$.

We consider the online shortest path problem with bandit feedback. In each round $t$, an agent selects a path $P_t$ from $\source$ to $\sink$, while an adversary simultaneously selects a weight function $w_t(\cdot)$. The agent then observes \textit{only} the loss, which is the path weight $\ell_t := w_t(P_t)$. The objective is to minimize the cumulative regret against the optimal path:  
\[
\mathrm{Regret}(T) := \sum_{t=1}^T w_t(P_t) - \min_{P \in \mathcal{P}} \sum_{t=1}^T w_t(P),
\]  
where $\mathcal{P}$ is the set of all paths from $\source$ to $\sink$.

Denote by $\mathcal{X} \subseteq \{0, 1\}^{V \cup E}$ the set of all paths in the graph $G$ from $\source$ to $\sink$, indexed by the vertices in $V$ and the edges in $E$. Each vector $\bfx \in \mathcal{X}$ encodes a path in the graph, where $\bfx[v] = 1$ indicates that $v \in V$ appears in the path, and $\bfx[e] = 1$ indicates that $e \in E$ appears in the path. The convex hull of $\mathcal{X}$ forms the flow polytope:
\begin{align*}
    \mathrm{co}(\mathcal{X}) = \left\{ \bfx \in  [0, 1]^{V \cup E}:\; 
    \bfx[\source] = \bfx[\sink] = 1, \;\text{and}\; \bfx[v]=\sum_{e\in \incoming(v)} \bfx[e] = \sum_{e\in \outgoing(v)} \bfx[e], \;  \forall v \in V 
    \right\}.
\end{align*}
Correspondingly, the weight function $w_t(\cdot)$ can be encoded as a vector $\bfy_t \in \mathbb{R}^{V \cup E}$, where $\bfy_t[e] = w_t(e)$ for all edges $e \in E$ and $\bfy_t[v] = 0$ for all vertices $v \in V$. In this formulation, the total path weight can be expressed as the inner product $w_t(P_t) = \langle \bfx_t, \bfy_t \rangle$, allowing the regret to be rewritten as:  
\[
    \mathrm{Regret}(T) = \sum_{t=1}^T \langle \bfx_t, \bfy_t \rangle - \min_{\bfx \in \mathcal{X}} \sum_{t=1}^T \langle \bfx, \bfy_t \rangle.
\]

Finally, we denote by $\mathcal{F}_{t} := \{\bfx_{\tau}, \bfy_{\tau}\}_{\tau=1}^{t}$ the filtration generated by the first $t$ rounds. We further use $\bbP_t[\cdot] := \bbP[\cdot| \calF_{t-1}]$ as the conditional probability and $\bbE_t[\cdot] := \bbE[\cdot|\calF_{t-1}]$ as the conditional expectation. The adversary is allowed to choose loss vector $\bfy_t$ that adapts to the past filtration and the agent's algorithm.

Throughout this paper, we impose the following standard assumption.  
\begin{assumption} \label{ass:reward}  
The adversary can only choose weight function $w$ such that the absolute weight of any path is at most 1. That is, it can only choose $\bfy$ satisfying $\langle \bfx, \bfy \rangle \in [-1, 1]$ for all $\bfx \in \mathcal{X}$.  
\end{assumption}  

\paragraph{General notations.}
We define $\llbracket k \rrbracket:=\{1, 2, \dots, k\}$ and $\llbracket a, b\rrbracket := \{a, a+1, \dots, b\}$. Denote by $2^C$ the power set of set $C$. Let $\emptyset$ denote the empty set. The logarithm of $x$ to base $2$ is denoted as $\log x$. For any pair of tuples $A = (a_1, \ldots, a_n)$ and $B = (b_1, \ldots, b_m)$, let $A \circ B$ denote the tuple $(a_1, \ldots, a_n, b_1, \ldots, b_m)$. Similarly, vectors $\bfx$ and $\bfy$, let $\bfz = \bfx \circ \bfy$ denote the vector obtained by concatenating $\bfy$ to the end of $\bfx$. 
For any edge $e$ and path $P$, $e \in P$ indicates $e$ is part of $P$.

\section{Algorithm for Online Shortest Paths in DAGs}

In this section, we present our algorithm for the online shortest path problem in directed acyclic graphs (DAGs). Our approach differs from the standard method of using exponential weights combined with a fixed distribution, such as Kiefer-Wolfowitz exploration. We start by outlining an efficient algorithm for the case where all paths have equal lengths in \Cref{sec:alg-equal}. In \Cref{sec:alg-make-equal}, we introduce a method to relax this assumption. Finally, in \Cref{sec:alg-centroid}, we show how to achieve a regret bound of $\tilde \calO(\sqrt{|E|T\log|\calX|})$ while maintaining computational efficiency.

\subsection{The Case of Equal Path Lengths}\label{sec:alg-equal}

We first present an algorithm for a DAG $G = (V, E)$, where \emph{every} path from the source $\source$ to the sink $\sink$ contains exactly $K$ edges. In each round $t$, the algorithm chooses a strategy in the flow polytope $\mathrm{co}(\calX)$ by solving the following optimization problem:
\begin{align} \label{eq:optimization1}
    \tilde{\bfx}_{t} \leftarrow \argmin_{\bfx \in \mathrm{co}(\calX)} \left( \eta \sum_{\tau=1}^{t-1} \langle \bfx, \widehat{\bfy}_\tau \rangle + F(\bfx) \right),
\end{align}
where $F(\bfx)$ is some Legendre function, $\eta$ is some learning rate and $\widehat\bfy_\tau$ is some loss estimator that we define later. The actual path $\bfx_t$ is then sampled as follows: Starting from the source $\source$, we traverse to a node $v$ and select an edge $e \in \delta^{+}(v)$ among the outgoing edges of $v$ with probability proportional to $\tilde \bfx_{t}[e]$, moving to the endpoint of edge $e$. This process repeats until we reach the sink $\sink$. We denote the path traversed as $P_t$ and choose the corresponding vector in $\calX$ as $\bfx_t$. It can be easily verified that $\bbE_t[\bfx_t] =\tilde \bfx_t$. We then observe the loss $\ell_t := \langle \bfx_t, \bfy_t \rangle$, construct our loss estimator $\widehat{\bfy}_t$ as shown below, and proceed to the next round.

Recall that $\mathbb{E}_{t}[\cdot]:=\mathbb{E}[\cdot|\calF_{t-1}]$ and $\mathbb{P}_{t}[\cdot]: =\mathbb{P}[\cdot|\calF_{t-1}]$, where $\calF_{t-1}$ is the past filtration. Let $\boldsymbol{\gamma} \in \mathbb{R}_{>0}^{V \cup E}$ be a positive-valued vector indexed by the elements of $V \cup E$. We start with defining our estimator $\hat \bfy_t$ for the loss vector $\bfy_t$ upon receiving the loss $\ell_t:=\langle \bfx_t,\bfy_t\rangle$: 
\begin{align*}
    \hat{\bfy}_t[e] &:= \frac{(1 + \ell_t)\ind[\bfx_t[e] = 1]}{\mathbb{P}_t[\bfx_t[e] = 1]+\boldsymbol{\gamma}[e]}, \quad \forall e \in E, \\
    \hat{\bfy}_t[v] &:= \frac{(1 - \ell_t)\ind[\bfx_t[v] = 1]}{\mathbb{P}_t[\bfx_t[v] = 1]+\boldsymbol{\gamma}[v]}, \quad \forall v \in V\setminus \{\source,\sink\}, \quad\hat{\bfy}_t[\source]=\hat{\bfy}_t[\sink]:=0.
\end{align*}
Note that even though the loss vector satisfies $\bfy_t[v] = 0$ for any vertex $v \in V$, the estimator is still designed to assign weights to it. Next, let us define another estimator $\tilde\bfy_t$ as: 
\begin{align*}
    \tilde{\bfy}_t[e] &:= \frac{(1 + \ell_t)\ind[\bfx_t[e] = 1]}{\mathbb{P}_t[\bfx_t[e] = 1]}, \quad \forall e \in E, \\
    \tilde{\bfy}_t[v] &:= \frac{(1 - \ell_t)\ind[\bfx_t[v] = 1]}{\mathbb{P}_t[\bfx_t[v] = 1]}, \quad \forall v \in V\setminus \{\source,\sink\},\quad\tilde{\bfy}_t[\source]=\tilde{\bfy}_t[\sink]:=0.
\end{align*}
Observe that $\hat\bfy_t$ is the implicitly biased version of $\tilde \bfy_t$. Although $\tilde\bfy_t$ appears to be a biased estimator of $\bfy_t$, the next lemma shows that it can effectively compare the losses between different paths. 

\begin{lemma}\label{lm:est-expt}
    For any path with representation $\bfx \in \calX$, it holds that $\bbE_t[\langle \bfx, \tilde \bfy_t \rangle] = \langle \bfx, \bfy_t \rangle + \|\bfx\|_1-2$.
\end{lemma}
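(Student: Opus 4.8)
The plan is to expand $\langle \bfx, \tilde\bfy_t\rangle$ coordinate by coordinate, push the conditional expectation $\bbE_t[\cdot]$ inside, and then use the Markovian structure of the flow-sampling of $P_t$ to collapse a telescoping sum. Fix a path $\bfx\in\calX$ encoding $P = (v_0, e_1, v_1, \dots, e_k, v_k)$ with $v_0 = \source$ and $v_k = \sink$; note that then $\|\bfx\|_1 = (k+1) + k = 2k+1$. Since $\tilde\bfy_t[\source] = \tilde\bfy_t[\sink] = 0$, I would first write $\langle \bfx, \tilde\bfy_t\rangle = \sum_{i=1}^{k}\tilde\bfy_t[e_i] + \sum_{i=1}^{k-1}\tilde\bfy_t[v_i]$. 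Before taking expectations, I would record that, since $F$ is Legendre and (by the standing assumption) every vertex and edge lies on some $\source$--$\sink$ path, $\tilde\bfx_t$ lies in the relative interior of $\mathrm{co}(\calX)$ and hence has all coordinates strictly positive; therefore $\bbP_t[\bfx_t[e]=1] = \tilde\bfx_t[e] > 0$ and $\bbP_t[\bfx_t[v]=1] = \tilde\bfx_t[v] > 0$, so every $\tilde\bfy_t[\cdot]$ is well defined.

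Next I would take $\bbE_t$ termwise. Using that $\bfy_t$ is $\calF_{t-1}$-measurable and writing $\ell_t = \langle\bfx_t,\bfy_t\rangle$, one gets $\bbE_t[\tilde\bfy_t[e_i]] = 1 + \bbE_t[\ell_t \mid e_i\in P_t]$ and $\bbE_t[\tilde\bfy_t[v_i]] = 1 - \bbE_t[\ell_t\mid v_i\in P_t]$, where $\bbE_t[\cdot\mid\cdot]$ denotes the corresponding conditional expectation given $\calF_{t-1}$. Summing the $k$ edge terms and $k-1$ vertex terms, the constant contributions add up to $k + (k-1) = 2k - 1 = \|\bfx\|_1 - 2$, which is exactly the additive term in the statement. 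So the lemma reduces to the identity $\sum_{i=1}^{k}\bbE_t[\ell_t\mid e_i\in P_t] - \sum_{i=1}^{k-1}\bbE_t[\ell_t\mid v_i\in P_t] = \sum_{i=1}^{k}\bfy_t[e_i] = \langle\bfx,\bfy_t\rangle$.

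The crux is a prefix--suffix decomposition of these conditional expectations. For a vertex $u$ on some path, define $\mathrm{pre}(u) := \bbE_t\big[\sum \bfy_t[e']\,\big|\,u\in P_t\big]$, where the sum is over the edges of $P_t$ traversed before $u$, and $\mathrm{suf}(u)$ analogously for edges traversed after $u$. Because $P_t$ is generated by a Markov chain that walks along the edges of the DAG, conditioning on $\{e_i\in P_t\}$ pins down the step from $v_{i-1}$ to $v_i$ but leaves the portion of $P_t$ before $v_{i-1}$ distributed exactly as under $\{v_{i-1}\in P_t\}$ and the portion after $v_i$ distributed exactly as under $\{v_i\in P_t\}$. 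I would make this precise and conclude $\bbE_t[\ell_t\mid e_i\in P_t] = \mathrm{pre}(v_{i-1}) + \bfy_t[e_i] + \mathrm{suf}(v_i)$ and, similarly, $\bbE_t[\ell_t\mid v_i\in P_t] = \mathrm{pre}(v_i) + \mathrm{suf}(v_i)$. Substituting into the reduced identity, the terms $\mathrm{pre}(v_1),\dots,\mathrm{pre}(v_{k-1})$ and $\mathrm{suf}(v_1),\dots,\mathrm{suf}(v_{k-1})$ telescope away, leaving $\mathrm{pre}(v_0) + \sum_{i=1}^{k}\bfy_t[e_i] + \mathrm{suf}(v_k)$; since $v_0 = \source$ has nothing before it and $v_k = \sink$ nothing after it, $\mathrm{pre}(v_0) = \mathrm{suf}(v_k) = 0$, and we obtain $\langle\bfx,\bfy_t\rangle$, as desired.

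I expect the main obstacle to be the prefix--suffix decomposition: one must argue carefully, directly from the sampling procedure, that conditioning on traversing a specific edge perturbs neither the law of the prefix up to that edge's tail nor the law of the suffix from its head — i.e., a clean statement of the Markov property of flow-sampling along a DAG (and, correspondingly, that $\mathrm{pre}(\cdot)$ and $\mathrm{suf}(\cdot)$ are well defined irrespective of whether we condition on the vertex event or an incident edge event). Everything after that is a routine termwise expectation computation and a one-line telescoping cancellation.
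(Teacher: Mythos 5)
Your proposal is correct and follows essentially the same route as the paper's proof: termwise conditional expectation, a prefix/suffix decomposition of $\bbE_t[\ell_t\mid\cdot]$ justified by the Markov property of the flow-based sampling (the paper's $\bbE_t[\ell^-_{t,v}\mid\calE_{t,v}]$ and $\bbE_t[\ell^+_{t,v}\mid\calE_{t,v}]$ are exactly your $\mathrm{pre}(v)$ and $\mathrm{suf}(v)$), and the same telescoping cancellation using $\mathrm{pre}(\source)=\mathrm{suf}(\sink)=0$ and $\|\bfx\|_1=2k+1$. The step you flag as the main obstacle is precisely the Markovian-sampling identity the paper invokes in its equations for conditioning on edge versus vertex events, so no genuinely new idea is needed.
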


\begin{proof}
For some vertex $v \in V$, we denote by $\calE_{t,v}$ the event that vertex $v$ is chosen in the path in round $t$, i,e, $\ind[\bfx_t[v] = 1]$. Under event $\calE_{t,v}$, chosen path $\bfx_t$ can be divided into two subpath: one from $\source$ to $v$, and other from $v$ to $\sink$. Let $\ell^-_{t, v}$ and $\ell^+_{t, v}$ be the total weight of the path from $\source$ to $v$ and the path from $v$ to $\sink$, respectively. According to the linearity of expectation, it satisfies that
\begin{align} \label{eq:lm:est-expt-1}
    \bbE_t[\ell_t \mid \calE_{t,v}] = \bbE_t[\ell_{t,v}^- \mid  \calE_{t,v}] + \bbE_t[\ell_{t,v}^+ \mid \calE_{t,v}].
\end{align}

Note $\ell^-_{t, \source}=\ell^+_{t, \sink}=0$. For some edge $e = (v_-, v_+) \in E$, we similarly define by $\calE_{t,e}$ the event that $\bfx_t[e]=1$. The total weight of the path can also be decomposed into
\begin{align} \label{eq:lm:est-expt-2}
    \bbE_t[\ell_t \mid  \calE_{t,e}] = \bbE_t[\ell_{t,v_-}^- \mid  \calE_{t,e}] + \bfy_t[e] + \bbE_t[\ell_{t,v_+}^+ \mid  \calE_{t,e}].
\end{align}

Observe that our edge sampling procedure is Markovian. That is, under the event $\mathcal{E}_{t,u}$, the probability of choosing an outgoing edge from node $u$ does not depend on the path from $\source$ to $u$. This implies that:
\begin{align} \label{eq:lm:est-expt-3}
    \bbE_t[\ell_{t,v_-}^- \mid  \calE_{t,v_-}] = \bbE_t[\ell_{t,v_-}^- \mid  \calE_{t,v_-} \cap \calE_{t,e}] = \bbE_t[\ell_{t,v_-}^- \mid  \calE_{t,e}]
\end{align}
where the last inequality is given by $\calE_{t,e} \subseteq \calE_{t,v_-}$.
Similarly, we have that 
\begin{align} \label{eq:lm:est-expt-4}
    \bbE_t[\ell_{t,v_+}^+ \mid  \calE_{t,v_+}] = \bbE_t[\ell_{t,v_+}^+ \mid  \calE_{t,e}]
\end{align}

Let $P = (v_0, e_1, v_1, \dots, e_k, v_k)$ be the path that corresponds to the vector $\bfx\in\calX$, where $v_0 = \source$ and $v_k = \sink$. The expectation of the inner product $\langle \bfx, \tilde \bfy_t\rangle$ can be computed as follows:
\begin{align*}
    \bbE_t[\langle \bfx, \tilde \bfy_t \rangle] &= \sum_{i=0}^k \bbE_t\left[\tilde\bfy_t[v_i]\right] + \sum_{i=1}^k \bbE_t\left[\tilde\bfy_t[e_i]\right] \\
    &= \sum_{i=1}^{k-1} \bbE_t\left[\frac{(1 - \ell_t)\ind[\bfx_t[v_i] = 1]}{\bbP_t[\bfx_t[v_i] = 1]} \right] + \sum_{i=1}^k \bbE_t\left[\frac{(1 + \ell_t)\ind[\bfx_t[e_i] = 1]}{\bbP_t[\bfx_t[e_i] = 1]} \right] \\
    &= \sum_{i=1}^{k-1} \bbE_t[1 - \ell_t \mid  \calE_{t,v_i}] + \sum_{i=1}^k \bbE_t[1 + \ell_t \mid  \calE_{t,e_i}] \\
    &= 2k-1 - \sum_{i=1}^{k-1} \Big(\bbE_t[\ell_{t,v_i}^- \mid  \calE_{t,v_i}] + \bbE_t[\ell_{t,v_i}^+ \mid  \calE_{t,v_i}]\Big)  \\
        &\qquad\qquad + \sum_{i=1}^k \Big(\bbE_t[\ell_{t,v_{i-1}}^- \mid  \calE_{t,e_i}] + \bfy_t[e_i] + \bbE_t[\ell_{t,v_i}^+ \mid  \calE_{t,e_i}]\Big) \\
    &= 2k-1 - \sum_{i=1}^{k-1} \bbE_t[\ell_{t,v_i}^- \mid  \calE_{t,v_i}] - \sum_{i=1}^{k-1} \bbE_t[\ell_{t,v_i}^+ \mid  \calE_{t,v_i}]\\
        &\qquad\qquad +\sum_{i=1}^{k-1} \bbE_t[\ell_{t,v_{i}}^- \mid  \calE_{t,v_{i}}] + \sum_{i=1}^{k} \bfy_t[e_i] + \sum_{i=1}^{k-1} \bbE_t[\ell_{t,v_i}^+ \mid  \calE_{t,v_i}] \\
    &= \langle \bfx, \bfy_t \rangle+\|\bfx\|_1 -2.
\end{align*}
where the second equality follows from the definition of $\hat\bfy_t$, the third equality follows from the definition of the events $\calE_{t,v}$ and $\calE_{t,e}$, the fourth equality follows from equations \eqref{eq:lm:est-expt-1} and \eqref{eq:lm:est-expt-2}, and the fifth equality follows from equations \eqref{eq:lm:est-expt-3}, \eqref{eq:lm:est-expt-4}, and $\ell^-_{t, \source}=\ell^+_{t, \sink}=0$.

\end{proof}

If all paths have the same length $K$, then $\mathbb{E}_t[\langle \bfx - \bfx', \tilde{\bfy}_t \rangle] = \langle \bfx - \bfx', \bfy_t \rangle$ for all $\bfx, \bfx' \in \calX$. This equality is crucial for developing a framework in \Cref{appendix:ftrl-framework}, which enables implicit exploration—similar to the framework for standard multi-armed bandits by \citet{neu2015explore}—within an FTRL problem such as the one formulated in this section. The equality ensures the framework's correct application.  Using $F(\bfx) = -\sum_{v \in V} \sqrt{\bfx[v]} - \sum_{e \in E} \sqrt{\bfx[e]}$ as our regularizer, we can apply this framework to achieve a regret bound of at most $\mathcal{O}(\sqrt{K |E| T \log(|E|/\delta)})$ with probability at least $1 - \delta$ against any adaptive adversary. We refer the reader to \Cref{appendix:alg-equal} for the omitted details.

\subsection{Relaxing the Equal Path Length Assumption}\label{sec:alg-make-equal}
In this section, we relax the assumption that all paths have the same length. Denote $K(v)$ as the length of the longest path from $\source$ to vertex $v$. Let $K = K(\sink)$ denote the length of the longest path within the DAG. Note that $K(\source)=0$.

Construct the augmented vector as $\bfx^\dagger := \bfx \circ b(\bfx)$, where $b(\bfx) \in \{0, 1\}^{K-1}$ is given by
\[
b(\bfx)[i] := \ind\big[\exists (u, v) \in E, \bfx[(u, v)] = 1, K(u) < i < K(v)\big].
\]
We denote $\calX^\dagger := \{\bfx^\dagger \mid \bfx \in \calX\}$ as the augmented decision space. Let $\hat{\boldsymbol{\gamma}} \in \mathbb{R}_{>0}^{K-1}$ be a positive-valued vector.
Correspondingly, we construct the augmented loss estimator $\hat\bfy_t^\dagger := \hat\bfy_t \circ \hat \bfc_t$, where $\hat\bfc_t\in \mathbb{R}_{\geq 0}^{K-1}$ is defined as:
\[
\hat \bfc_t[i] := \frac{2 \cdot \ind[b(\bfx_t)[i] = 1]}{\bbP_t[b(\bfx_t)[i] = 1]+\hat{\boldsymbol{\gamma}}[i]},
\]
and $\bfx_t$ is the path chosen according to the selection procedure from the previous section.
We also define $\tilde \bfc_t\in \mathbb{R}_{\geq 0}^{K-1}$ as:
\[
 \tilde \bfc_t[i] := \frac{2 \cdot \ind[b(\bfx_t)[i] = 1]}{\bbP_t[b(\bfx_t)[i] = 1]},
\]
Observe that $\hat \bfc_t[i]$ is the implicitly biased version of $\tilde \bfc_t[i]$. The next lemma establishes a key property of the loss estimator $\tilde\bfy_t^\dagger := \tilde\bfy_t \circ \tilde \bfc_t$, in which the implicit biasing is absent.

\begin{lemma}\label{lm:est-expt-aug}
    For any path with representation $\bfx \in \calX$, it holds that $\bbE_t[\langle \bfx^\dagger, \tilde \bfy_t ^\dagger\rangle] = \langle \bfx, \bfy_t \rangle + 2K - 1$.
\end{lemma}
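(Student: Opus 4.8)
The plan is to split the augmented inner product along the concatenation and handle the two pieces separately. Since $\bfx^\dagger = \bfx \circ b(\bfx)$ and $\tilde\bfy_t^\dagger = \tilde\bfy_t \circ \tilde\bfc_t$, we have
\[
\langle \bfx^\dagger, \tilde\bfy_t^\dagger\rangle = \langle \bfx, \tilde\bfy_t\rangle + \langle b(\bfx), \tilde\bfc_t\rangle .
\]
For the first term, \Cref{lm:est-expt} applies verbatim and gives $\bbE_t[\langle \bfx, \tilde\bfy_t\rangle] = \langle \bfx, \bfy_t\rangle + \|\bfx\|_1 - 2$. For the second term, I would note that each coordinate of $\tilde\bfc_t$ is an ordinary importance-weighted estimator: for any fixed index $i$ with $b(\bfx)[i] = 1$ the path $\bfx$ is sampled with positive probability under the selection rule (as $\tilde\bfx_t$ lies in the relative interior of $\mathrm{co}(\calX)$), so $\bbP_t[b(\bfx_t)[i] = 1] > 0$ and hence $\bbE_t[\tilde\bfc_t[i]] = 2$; by linearity of expectation $\bbE_t[\langle b(\bfx), \tilde\bfc_t\rangle] = 2\,\|b(\bfx)\|_1$.

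It then remains to evaluate $\|b(\bfx)\|_1$ by a short combinatorial count. Write $P = (v_0, e_1, v_1, \dots, e_k, v_k)$ for the path encoded by $\bfx$, with $v_0 = \source$, $v_k = \sink$, so that $K(v_0) = 0$ and $K(v_k) = K$. First I would observe that each edge $e_j = (v_{j-1}, v_j) \in E$ forces $K(v_j) \geq K(v_{j-1}) + 1$, yielding the strict chain $K(v_0) < K(v_1) < \dots < K(v_k)$. By the definition of $b(\bfx)$, the index $i$ is set to $1$ exactly when some edge $e_j$ on the path straddles level $i$, i.e.\ $K(v_{j-1}) < i < K(v_j)$; the set of integers contributed by $e_j$ is the (possibly empty) interval $\llbracket K(v_{j-1})+1, K(v_j)-1 \rrbracket$, of size $K(v_j) - K(v_{j-1}) - 1$. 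Because of the strict monotonicity of the $K(v_j)$, these contributed sets are pairwise disjoint and all contained in $\llbracket K-1\rrbracket$, so $\|b(\bfx)\|_1$ equals the sum of their sizes, which telescopes to $K(v_k) - K(v_0) - k = K - k$.

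Assembling the pieces, and using that a path of length $k$ has $k+1$ vertices and $k$ edges so that $\|\bfx\|_1 = 2k+1$, I obtain
\[
\bbE_t[\langle \bfx^\dagger, \tilde\bfy_t^\dagger\rangle] = \big(\langle \bfx, \bfy_t\rangle + \|\bfx\|_1 - 2\big) + 2\,\|b(\bfx)\|_1 = \langle \bfx, \bfy_t\rangle + (2k-1) + 2(K-k) = \langle \bfx, \bfy_t\rangle + 2K-1,
\]
and the dependence on the path length $k$ cancels — which is precisely the point of the augmentation $b(\cdot)$. I expect the main (and rather mild) obstacle to be the bookkeeping in the count of $\|b(\bfx)\|_1$: one has to check that the sets of levels straddled by distinct edges do not overlap and that the telescoping sum carries the correct off-by-one. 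Once the strict chain $K(v_0) < \dots < K(v_k)$ is in hand, the rest is routine, reducing to \Cref{lm:est-expt} together with the standard unbiasedness of importance weights.
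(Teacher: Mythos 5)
Your proposal is correct and follows essentially the same route as the paper: split $\langle \bfx^\dagger,\tilde\bfy_t^\dagger\rangle$ into $\langle \bfx,\tilde\bfy_t\rangle + \langle b(\bfx),\tilde\bfc_t\rangle$, invoke Lemma~\ref{lm:est-expt} for the first piece, use $\bbE_t[\tilde\bfc_t[i]]=2$ together with the disjoint-interval count $\|b(\bfx)\|_1 = K-k$ for the second, and combine via $\|\bfx\|_1 = 2k+1$. Your extra remark justifying $\bbP_t[b(\bfx_t)[i]=1]>0$ is a harmless (indeed slightly more careful) addition to the paper's one-line statement that $\bbE_t[\tilde\bfc_t[i]]=2$ holds by construction.
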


\begin{proof}
Consider any $\bfx \in \calX$, and its corresponding path $P = (v_0, e_1, v_1, \dots, e_k, v_k)$, where $v_0 = \source$ and $v_k = \sink$. The expectation of the inner product of the auxiliary bits, $\langle b(\bfx), \tilde \bfc_t \rangle$, satisfies  
\[
\bbE_t[\langle b(\bfx), \tilde \bfc_t \rangle] = \sum_{i=1}^{K-1} b(\bfx)[i] \cdot \bbE_t[\tilde \bfc_t[i]].
\]
By construction, $\bbE_t[\tilde \bfc_t[i]] = 2$.  
Since $K(v_{j-1}) < K(v_{j})$ for all $j \in \llbracket k\rrbracket$, for any given index $i \in \llbracket K-1\rrbracket$, there is at most one index $j_i \in \llbracket k\rrbracket$ such that $K(v_{j_i-1}) < i < K(v_{j_i})$. Consequently,
    \begin{align*} 
        \sum_{i=1}^{K-1} b(\bfx)[i] &= \sum_{i=1}^{K-1} \sum_{j=1}^{k} \ind[K(v_{j-1}) < i < K(v_j)] \\
        &= \sum_{j=1}^{k}\sum_{i=1}^{K-1} \ind[K(v_{j-1}) < i < K(v_j)] \\
        &= \sum_{j=1}^{k}\big(K(v_j) - K(v_{j-1}) - 1\big) = K(\sink) - K(\source) - k.
    \end{align*}
    Hence, using the fact that $\|\bfx\|_1 = 2k+1$, which follows from the mapping between $\bfx$ and $P$,  
    \begin{align}
        \bbE_t[\langle b(\bfx), \tilde \bfc_t\rangle] = 2(K(\sink) - K(\source) - k) = 2K - \|x\|_1 + 1.\label{eq:lm:est-expt-aug}
    \end{align}
    As a result,
    \begin{align*}
        \bbE_t[\langle \bfx^\dagger, \tilde \bfy_t ^\dagger\rangle] = \bbE_t[\langle \bfx, \tilde \bfy_t\rangle + \langle b(\bfx), \tilde \bfc_t\rangle]
        &= \langle \bfx, \bfy_t \rangle + \|x\|_1 -2 + 2K - \|x\|_1 + 1 \\
        &= \langle \bfx, \bfy_t \rangle + 2K - 1,
    \end{align*}
    where the second equality follows from Equation \eqref{eq:lm:est-expt-aug} and Lemma~\ref{lm:est-expt-aug}. %
\end{proof}

We can appropriately modify our FTRL algorithm from the previous section to work with the augmented decision space $\calX^\dagger := \{\bfx^\dagger \mid \bfx \in \calX\}$, augmented loss estimator $\hat{\bfy}_t^\dagger$ and augmented regularizer $F(\bfx)=-\sum_{v\in V}\sqrt{\bfx[v]}-\sum_{e\in E}\sqrt{\bfx[e]}-\sum_{i\in \llbracket K-1\rrbracket}\sqrt{\bfx[i]}$ for any $\bfx\in [0,1]^{V\cup E\cup\llbracket K-1\rrbracket}$. Thus, we can apply our FTRL framework for implicitly biased estimators from \Cref{appendix:ftrl-framework} to obtain a regret bound of at most $\mathcal{O}(\sqrt{K |E| T\log(|E|/\delta)})$ with probability at least $1 - \delta$ against any adaptive adversary. Furthermore, we assert that our FTRL approach can be implemented efficiently, as it can be easily shown that the set $\operatorname{co}(\mathcal{X}^\dagger)$ can be represented using a polynomial number of linear constraints. We refer the reader to \Cref{appendix:alg-make-equal} for the omitted details of this section.

\subsection{Achieving a Regret Upper Bound of $\tilde{\mathcal{O}}(\sqrt{|E|T \log |\mathcal{X}|})$}\label{sec:alg-centroid}

\begin{figure}
    \centering
    \centering\begin{tikzpicture}{object/.style={thin,double,<->}}
            \node[nodelbl] (A) at (2/2,2) {\tiny{$A$}};
            \node[nodelbl] (B) at (6/2+0.6,2) {\tiny{$B$}};
            \node[nodelbl] (C) at (1/2,1) {\tiny{$C$}};
            \node[nodelbl] (D) at (3/2,1) {\tiny{$D$}};
            \node[nodelbl] (E) at (5/2+0.6,1) {\tiny{$E$}};
            \node[nodelbl] (F) at (7/2+0.6,1) {\tiny{$F$}};
            \node[nodelbl] (G) at (2/2,0) {\tiny{$G$}};
            \node[nodelbl] (H) at (6/2+0.6,0) {\tiny{$H$}};

            \foreach \edge in {
                (A) -- (B),
                (A) -- (C),
                (C) -- (D),
                (D) -- (E),
                (D) -- (G),
                (E) -- (F),
                (F) -- (H),
            } {
                \StandardPath \edge;
            }

            \foreach \edge in {
                (A) -- (D),
                (B) -- (E),
                (B) -- (F),
                (C) -- (G),
                (E) -- (H),
                (G) -- (H),
            } {
                \StandardPath \edge;
            }

            \node at (2.3, -1) {$G$};
        \end{tikzpicture}
        \raisebox{2.2cm}{$\qquad\Rightarrow\qquad$}
        \begin{tikzpicture}
            \MakeTriplet A {2,2}
            \MakeTriplet B {6,2}
            
            \MakeTriplet C {1,1}
            \MakeTriplet D {3,1}
            \MakeTriplet E {5,1}
            \MakeTriplet F {7,1}
    
            \MakeTriplet G {2,0}
            \MakeTriplet H {6,0}
    
            \foreach \edge in {
                (A2) to[bend left=25] (B3),
                (A2) -- (C3),
                (E1) to[bend right=25] (F2),
                (F2) -- (H3),
                (A1) -- (D2),
                (C1) to[bend right=30] (D2),
                (D2) -- (G3),
                (D2) to[bend left=30] (E3),
                (D2) to[bend right=25] (F3),
                (D2) to[bend right=50] (H3),
            } {
                \StandardPath \edge;
            }
    
            \foreach \edge in {
                (A3) -- (D1),
                (C3) -- (G1),
                (E3) -- (H1),
                (G3) to[bend right=33] (H1),
                (B3) -- (E1),
                (B3) -- (F1),
            } {
                \StandardPath \edge;
            }  
            
            \node at (4, -1) {$G^\dagger$};
        \end{tikzpicture}  
    \caption{
        Example $G$ and $G^\dagger$ according to conversion in \Cref{sec:alg-centroid}. The longest path from source to sink in $G^\dagger$ is upper bounded by $\calO(\log |\calX|)$. See \Cref{fig:enter-label} in Appendix for more details.}\label{main:figure1}
\end{figure}
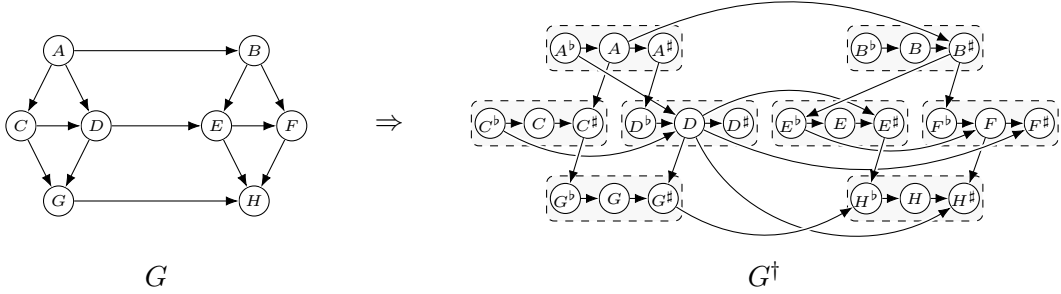

In this section, we transform the input DAG $G$ into a new DAG $G^\dagger$ with an equivalent decision space but reduced complexity. The core idea is to introduce ``express'' edges that compress long paths in $G$, ensuring the longest path in $G^\dagger$ is bounded by $\mathcal{O}(\log|\mathcal{X}|)$ while minimally increasing the number of edges and vertices. Consider a long path $P = (v_0, e_1, \dots, e_k, v_k)$ in $G$. We concisely represent all subpaths of $P$ with the help of the middle vertex $v_{\lfloor k/2\rfloor}$. For each $i < \lfloor k/2 \rfloor$, we add an edge $(v_i, v_{\lfloor k/2\rfloor})$ to represent the subpath from $v_i$ to $v_{\lfloor k/2\rfloor}$. Similarly, for each $j > \lfloor k/2 \rfloor$, we add an edge $(v_{\lfloor k/2\rfloor}, v_j)$. Thus, any subpath from $v_i$ to $v_j$ (where $i < \lfloor k/2\rfloor < j$) can be represented with just two edges, $(v_i, v_{\lfloor k/2\rfloor})$ and $(v_{\lfloor k/2\rfloor}, v_j)$. Recursively applying this method creates a hierarchical structure where every subpath of $P$ requires only $\mathcal{O}(1)$ edges.
Extending this concept using centroid-based decomposition for the spanning tree ensures that the longest path in $G^\dagger$ remains bounded by $\mathcal{O}(\log |\mathcal{X}|)$, with only a logarithmic increase in edges and vertices. Consequently, the online shortest path problem in $G$ reduces to $G^\dagger$, allowing our algorithm from \Cref{sec:alg-make-equal} to achieve a high-probability regret bound of $\tilde{\mathcal{O}}(\sqrt{|E|T\log |\calX|})$ against any adaptive adversary. Further details, including omitted proofs, are provided in \Cref{appendix:alg-centroid}.

We formally begin our transformation. Let $C: V \to \bbN$ denote the number of distinct paths from the source $\source$ to any vertex $v$. It holds that $C(\source) = 1$ and $C(v) := \sum_{(u, v) \in \delta^{-}(v)} C(u)$ for any $v \neq \source$. According to the definition, it satisfies that $C(\sink) = |\mathcal{X}|$. Let $h(v) := \argmax_{(u, v) \in \delta^{-}(v)} C(u)$ be the incoming edge that brings the maximum number of paths to vertex $v$, with ties broken arbitrarily. Let $E^\clubsuit := \{h(v) \mid v \in V \setminus \{\source\}\}$ be the set of all such edges. The underlying subgraph $S := (V, E^\clubsuit)$ forms a directed spanning tree of $G$. It can be easily shown that the number of non-tree edges (edges not in $E^\clubsuit$) on any path from $\source$ to $\sink$ in $G$ is at most $\log |\mathcal{X}|$.

We now introduce the \textit{centroid-based decomposition}: Given a directed tree $S = (V, E^\clubsuit)$, we identify a vertex $c \in V$ such that the connected components $\hat{S}_1, \dots, \hat{S}_k$ resulting from its removal satisfy $|\hat{V}_i| \leq |V|/2$ for all $i \in \llbracket k \rrbracket$, where $\hat{V}_i$ is the set of vertices in the subtree $\hat{S}_i$. Such a vertex $c$, known as the \textit{centroid}, always exists in any tree \cite{jordan1869assemblages,della2019new}. We associate the centroid $c$ with the tree $S$ by defining $S_c:= S$. The above procedure is then applied recursively to each component $\hat{S}_i$ for $i \in \llbracket k \rrbracket$. If a component reduces to a single vertex $c$, we designate $c$ as its centroid and terminate the recursion.  

Since the sets $\hat{V}_i$ resulting from the removal of $c$ form a partition of $V \setminus \{c\}$, each vertex $v \in V$ will eventually be assigned as the centroid of some subtree $S_v = (V_v, E^\clubsuit_v)$. Consequently, this procedure generates a collection of subtrees $\mathcal{T} := \{S_v : v \in V\}$, where each vertex $v$ is uniquely associated with a subtree of $S$ in which it serves as the centroid.  Furthermore, we define $\mathcal{T}(S_v) := \{S_w : w \in V_v\}$ as the centroid-based decomposition of the subtree $S_v$.  

We now state the construction for a new graph $G^\dagger = (V^\dagger, E^\dagger)$ using $\calT$ as follows:
\begin{enumerate}[noitemsep]
    \item Initialize $V^\dagger \gets \emptyset$ and $E^\dagger\gets\emptyset$.
    \item For each vertex $c\in V$:
    \begin{enumerate}[nosep]
        \item $V^\dagger \leftarrow V^\dagger \cup \{c^\flat, c, c^\sharp\}$.
        \item For each vertex $v \in V_c$:
            \begin{enumerate}[nosep]
                \item If there is a directed path from $v$ to $c$ in $S_c$, or if $v$ is $c$, update $E^\dagger \leftarrow E^\dagger \cup \{(v^\flat, c)\}$.
                \item If there is a directed path from $c$ to $v$ in $S_c$, or if $v$ is $c$, update $E^\dagger \leftarrow E^\dagger \cup \{(c, v^\sharp)\}$.
    \end{enumerate}
    \end{enumerate}
    \item For each non-tree edge $(u, v) \in E \setminus E^\clubsuit$, update $E^\dagger \leftarrow E^\dagger \cup \{(u^\sharp, v^\flat)\}$.
\end{enumerate}

We refer to \Cref{main:figure1} for one example of our conversion. It is easy to verify that the graph $G^\dagger$ is a Directed Acyclic Graph with source node $\source^\flat$ and sink node $\sink^\sharp$. We now demonstrate that the converted graph $G^\dagger$ is essentially equivalent to $G$. We define a mapping $\sigma: E^\dagger \to 2^E$ as follows:
\begin{itemize}[noitemsep]
    \item For $e^\dagger = (v^\flat, c)$, $\sigma(e^\dagger)$ consists of all edges on the unique path from $v$ to $c$ in the tree $S$.
    \item For $e^\dagger = (c, v^\sharp)$, $\sigma(e^\dagger)$ consists of all edges on the unique path from $c$ to $v$ in the tree $S$.
    \item For $e^\dagger = (u^\sharp, v^\flat)$, $\sigma(e^\dagger) = \{(u, v)\} \subseteq E \setminus E^\clubsuit$ contains the corresponding edge.
\end{itemize}
The above mapping assigns each edge $e^\dagger = (u^\dagger, v^\dagger) \in E^\dagger$ a path from $u$ to $v$ (which may be empty), as specified by $\sigma(e^\dagger)$, where $w^\dagger \in \{w^\flat, w, w^\sharp\}$ for $w \in \{u, v\}$. 
Denote by $\calP^\dagger$ the set of paths from $\source^\flat$ to $\sink^\sharp$ in $G^\dagger$.
The following lemma establishes an important property of $\sigma(e^\dagger)$.
\begin{Lemma}{lem:me1capme2-main}
    For any path $P^\dagger \in \calP^\dagger$, $\sigma(e_1^\dagger) \cap \sigma(e_2^\dagger) = \emptyset$ for any distinct edges $e_1^\dagger, e_2^\dagger \in P^\dagger$.
\end{Lemma}

The next lemma establishes that this mapping defines a bijection between the paths from $\source$ to $\sink$ in $G$ and the paths $P^\dagger$ from $\source^\flat$ to $\sink^\sharp$ in $G^\dagger$. We slightly abuse notation for $\sigma$.  

\begin{Lemma}{gnew:paths-main}  
There exists an efficiently computable bijection $\sigma: \calP^\dagger \to \calP$ such that an edge $e \in E$ belongs to $\sigma(P^\dagger)$ if and only if there exists an edge $e^\dagger \in P^\dagger$ with $e \in \sigma(e^\dagger)$.  
\end{Lemma}

 Let $w: E \to \bbR$ be a weight function in the graph $G = (V, E)$. Define $w^\dagger: E^\dagger \to \bbR$ as the weight function for the converted graph $G^\dagger = (V^\dagger, E^\dagger)$:
\begin{align}\label{eq:weight-conversion-main}
    w^\dagger(e^\dagger) := \ind[|\sigma(e^\dagger)|\geq 1]\cdot\sum_{e \in \sigma(e^\dagger)} w(e).
\end{align}
Using this mapping, we can convert a decision problem on $G$ to a decision problem on $G^\dagger$ as follows.
\begin{Lemma}{gnew:conversion-main}
    The online shortest path problem on $G = (V, E)$ can be efficiently reduced to the online shortest path problem on $G^\dagger = (V^\dagger, E^\dagger)$.
\end{Lemma}
\begin{proof}
    First, we can efficiently construct the DAG $G^\dagger$ using the DAG $G$. Next, given any weight function $w_t$ encoded as $\bfy_t \in \bbR^{V \cup E}$ in the graph $G$, we can convert it into a weight function $\bfy_t^\dagger \in \bbR^{V^\dagger \cup E^\dagger}$ corresponding to $w_t^\dagger$ for $G^\dagger$ according to \eqref{eq:weight-conversion-main}. For any chosen path $P^\dagger_t$ in $G^\dagger$ (encoded as $\bfx^\dagger_t$) , we can efficiently choose $\bfx_t \in \calX$ corresponding to the path $\sigma(P^\dagger)$ in $G$ following the bijective mapping $g$ in Lemma \ref{gnew:paths-main}. Due to Lemma \ref{lem:me1capme2-main} and Lemma \ref{gnew:paths-main}, we have:
    \[
        \langle \bfx^\dagger, \bfy_t^\dagger \rangle = \sum_{e^\dagger \in P^\dagger} \ind[|\sigma(e^\dagger)|\geq 1]\cdot\sum_{e \in \sigma(e^\dagger)} w(e) = \sum_{e \in P} w(e) = \langle \bfx, \bfy_t \rangle.
    \]
    The second equality follows from the fact that the set of edges in $\sigma(P^\dagger)$ is given by $\bigcup_{e^\dagger \in P^\dagger} \sigma(e^\dagger)$, and that $\sigma(e_1^\dagger) \cap \sigma(e_2^\dagger) = \emptyset$ for any distinct edges $e_1^\dagger, e_2^\dagger \in P^\dagger$. Hence, we can efficiently reduce the online shortest path problem on $G$ to the online shortest path problem on $G^\dagger$.
\end{proof}

Finally, the graph $G^\dagger$ satisfies the required size constraints, as stated in the following lemma.
\begin{Lemma}{gnew:size-main}
    The graph $G^\dagger = (V^\dagger, E^\dagger)$ contains $|V^\dagger| \leq \calO(|V|)$ vertices and $|E^\dagger| \leq \calO(|V| \log |V| + |E|)$ edges. Moreover, The number of edges on the longest path from $\source^\flat$ to $\sink^\sharp$ is upper bounded by $\calO(\log |\calX|)$.
\end{Lemma}

By combining Lemmas~\ref{gnew:conversion-main} and \ref{gnew:size-main}, and applying our FTRL algorithm from the previous section on the DAG $G^\dagger$, we establish the main theorem:
\begin{Theorem}{actual:thm}
    There exists an computationally efficient algorithm that incurs a regret bound of at most $\tilde \calO(\sqrt{|E|T\log(|\calX|/\delta)})$ with probability at least $1-\delta$ against any adaptive adversary, where $\tilde \calO(\cdot)$ only hides logarithmic factors in $|E|$.
\end{Theorem}

We refer the reader to \Cref{appendix:alg-centroid} for the omitted details of this section. Moreover, our algorithm is nearly minimax-optimal as for the class of DAGs with at most $d$ edges and at most $N$ paths, we establish a minimax lower bound of $\Omega(\sqrt{dT\log(N)/\log(d)})$ in \Cref{appendix:minimax-lower}.

\section{Applications}\label{sec:applications-main}

\begin{table}[t]
    \centering
    \begin{tabular}{p{6cm}p{5cm}p{4.2cm}}
        \toprule
        \textbf{Combinatorial set} & \textbf{Best known regret}$^\ddagger$ & \textbf{Our improved regret} \\
        
        \midrule
        Hypercube & $d^2\sqrt{T}$ & $d\sqrt{T}$ \\
        Multi-task MAB & $(\sum_{i=1}^md_i)^2\sqrt{T}$ & $\sum_{i=1}^m\sqrt{d_iT}$ \\
        $m$-sets & $d^2\sqrt{T}$ & $\sqrt{md^2T}$ \\   
        Shortest walk & $|E|^2\sqrt{T}$ & $\sqrt{K^2|E|T}$ \\
        Extensive-form games & $|\calZ|^2\sqrt{T}$ & $\sqrt{|\calZ|T\log(N)}$ \\
        Colonel Blotto game & $K^2N^2\sqrt{T}$ & $\sqrt{K^3NT}$ \\
        \bottomrule
    \end{tabular}
    \caption{Summary of high-probability regret guarantees for efficient algorithms across various combinatorial sets, ignoring constants and logarithmic factors. $^\ddagger$The best-known high-probability regret guarantee for efficient algorithms was formally proven only for continuous sets by \citet{zimmert2022return}. However, we believe their analysis extends to discrete decision sets, such as the combinatorial sets considered, using the same techniques as \citet{abernethy2008competing}.}\label{table2}
\end{table}

While it might not be apparent at first glance, learning in several structured domains $\calX \subseteq \{0,1\}^d$ can be efficiently reduced to online shortest paths in suitably-defined DAGs.\footnote{To our knowledge, we are the first to point out this fact in the case of $m$-sets and, more importantly, extensive-form games, for which the reduction is not immediate.} These include at least the following examples.

\begin{description}[nosep,font=\normalfont\itshape\textbullet~~]
\item[Hypercube:] $\calX := \{0,1\}^d$.  

\item[Multi-task MAB:] $\calX := \calX_1 \times \calX_2 \times \dots \times \calX_m$, where $\calX_i = \{e_1, \dots, e_{d_i}\}$ is a set of unit vectors.

\item[$\mathbf{m}$-sets:] $\calX := \{\bfx \in \{0,1\}^d : \|\bfx\|_1 = m\}$.  

\item[Shortest walk in directed graph.] We consider the online shortest walk problem in a directed graph $G = (V,E)$, where walks can have a length of at most $K\leq |E|$.

\item[Extensive-form games.] The game consists of decision nodes $\calX$, observation nodes $\calY$, and terminal nodes $\calZ$. We choose one of the $N \leq 2^{|\calZ|}$ possible strategies at the decision nodes and aim to minimize the total loss incurred.  

\item[Colonel Blotto games.] In this game, the goal is to assign $N$ soldiers across $K$ battlefields while minimizing the total loss incurred.  
\end{description}
We refer the reader to \Cref{appendix:applications} for a detailed discussion of each setting and its DAG reduction.
The important point is that in light of the connection to DAGs, our method applies directly to the above settings as well.
We summarize the results we obtain for these settings in \Cref{table2}, comparing the regret guarantees enjoyed by our method compared to the prior known high-probability regret guarantees achieved by efficient algorithms. %
We remark that our high-probability regret bound matches that of EXP3 with Kiefer-Wolfowitz exploration for the Hypercube and Extensive-form games. For Multi-task MAB, our high-probability regret bound significantly improves upon that of EXP3 with Kiefer-Wolfowitz exploration, and we also establish a matching lower bound, up to logarithmic factors. More details on previous approaches and implementation details of our methods in these settings are available in \Cref{appendix:applications}.

\section{Conclusion and Future Work}
In this paper, we studied the online shortest path problem on DAGs. We designed the first computationally efficient algorithm to achieve a high-probability nearly minimax-optimal regret bound of $\tilde{\mathcal{O}}(\sqrt{|E|T\log |\mathcal{X}|})$ against any adaptive adversary, where $\tilde{\mathcal{O}}(\cdot)$ hides logarithmic factors in $|E|$. Beyond shortest paths, our algorithm can be applied to various combinatorial sets in $\{0,1\}^d$, and we provided improved high-probability regret bounds for them.  

Our work raises several interesting open questions in combinatorial bandits. First, can our approach be further generalized to achieve high-probability regret bounds for any combinatorial set in $\{0,1\}^d$? Second, is there an efficient algorithm that achieves a high-probability minimax-optimal regret bound of $\mathcal{O}(\sqrt{dT\log |\calX|})$ for any combinatorial set $\calX \subseteq \{0,1\}^d$? Finally, given a fixed combinatorial set $\calX \subseteq \{0,1\}^d$, what are the tight upper and lower bounds on regret relative to $\calX$?

\section*{Acknowledgments}

The authors are grateful to Haipeng Luo for helpful discussion regarding the prior work \citep{lee2020bias}.

This work was supported in part by NSF TRIPODS CCF Award \#2023166, a Northrop Grumman University Research Award, ONR YIP award \# N00014-20-1-2571, NSF award \#1844729, and NSF award \# CCF-2443068.

\bibliographystyle{plainnat}
\bibliography{refs.bib}
\appendix

\section{Related Works}\label{appendix:related-works}
\paragraph{Multi-Armed Bandits.} For non-stochastic Multi-Armed Bandits, \citet{auer2002nonstochastic} introduced the EXP3 algorithm (short for ``exponential-weight algorithm for exploration and exploitation''), which achieves a pseudo-regret of $\mathcal{O}(\sqrt{KT\log K})$. They also established a regret lower bound of $\Omega(\sqrt{KT})$. Subsequently, \citet{audibert2010regret} introduced the implicitly normalized forecaster, achieving a pseudo-regret bound of $\mathcal{O}(\sqrt{KT})$. Building on this, \citet{bubeck2012best} initiated the study of ``best of both worlds'' algorithms, which attain near-optimal pseudo-regret bounds in both stochastic and non-stochastic settings. Finally, \citet{zimmert2021tsallis} demonstrated that Tsallis-1/2-INF achieves optimal pseudo-regret bounds for the best of both worlds problem.

\citet{auer2002nonstochastic} also introduced a variant of EXP3, called EXP3.P, which incorporates explicit exploration and achieves a regret of $\mathcal{O}(\sqrt{KT\log(KT/\delta)})$ with probability at least $1 - \delta$. \citet{bubeck2012regret} later analyzed a version of EXP3.P that attains a regret of $5.15\sqrt{KT\log(K/\delta)}$ with the same probability guarantee. Building on this, \citet{neu2015explore} proposed EXP3-IX (EXP3 with Implicit Exploration), which leverages implicit exploration to achieve a regret of $2\sqrt{2KT\log(K/\delta)}$ with probability at least $1 - \delta$.

\paragraph{Adversarial Linear Bandits.} For a bounded arm set $\mathcal{X} \subset \mathbb{R}^d$ and loss values in $[-1,1]$ for any arm, \citet{mcmahan2004online} were the first to design a sublinear regret algorithm, achieving an expected regret of $T^{3/4}$. A later, improved analysis by \citet{dani2006robbing} reduced this bound to $T^{2/3}$, while maintaining polynomial dependence on $d$. This result holds even against an adaptive adversary.  

For the special case of DAGs, \citet{awerbuch2004adaptive} designed the first algorithm with a pseudo-regret of $T^{2/3}$ and polynomial dependence on $d$. Subsequently, \citet{gyorgy2007line} extended this result by developing an algorithm that achieves a high-probability regret bound of $T^{2/3}$, also with polynomial dependence on $d$, even against an adaptive adversary. 

\citet{dani2007price} were the first to design an algorithm called Geometric Hedge, which achieves a regret of $T^{1/2}$ with polynomial dependence on $d$. Later, \citet{bartlett2008high} introduced a variant of Geometric Hedge that incurs a high-probability regret bound of $\tilde{\mathcal{O}}(d^{3/2}\sqrt{T})$.  

\citet{cesa2012combinatorial} followed up by designing an algorithm called Comband, which achieves a pseudo-regret of $\mathcal{O}(\sqrt{dT\log |\mathcal{X}|})$ for various combinatorial sets $\mathcal{X} \subseteq \{0,1\}^d$. Subsequently, \citet{bubeck2012towards} showed that EXP2 with John's exploration incurs a pseudo-regret of $\mathcal{O}(\sqrt{dT\log |\mathcal{X}|})$ for any finite set $\mathcal{X} \subseteq \mathbb{R}^d$. For a general regret analysis of a similar algorithm, EXP3 for Linear Bandits with any fixed exploration distribution, we refer the reader to \citet{lattimore2020bandit}. Later, \citet{zimmert2022return} designed a high-probability version called EXP3 with Kiefer-Wolfowitz exploration, which achieves a regret of $\mathcal{O}(\sqrt{dT\log(|\mathcal{X}|/\delta)})$ with probability at least $1-\delta$ for any finite set $\mathcal{X} \subseteq \mathbb{R}^d$.

For the combinatorial setting, where the loss of each individual coordinate is bounded between $-1$ and $1$, \citet{audibert2014regret} provided near-optimal worst-case upper bounds for both semi-bandit and bandit feedback. For combinatorial sets such as $m$-sets, DAGs, multi-task MAB, and maximum matching in bipartite graphs, \citet{cohen2017tight,ito2019improved} established tight worst-case lower bounds under bandit feedback. The techniques used in these works can be appropriately adapted to derive tight lower bounds for the standard bandit setting, where the loss value of any arm lies within $[-1,1]$.

\paragraph{Computationally Efficient Algorithms.} For compact convex sets $\mathcal{X} \subset \mathbb{R}^d$, \citet{abernethy2008competing} were the first to propose a computationally efficient algorithm that achieves a pseudo-regret of $\text{poly}(d) \cdot \sqrt{T}$. Their approach leveraged efficient self-concordant barriers. They also analyzed the online shortest path problem in DAGs, providing an efficient algorithm with a pseudo-regret of $\tilde{\mathcal{O}}(\sqrt{|E|^3T})$.  

\citet{cesa2012combinatorial} demonstrated computationally efficient implementations of ComBand for certain combinatorial sets. For general convex decision sets, \citet{hazan2016volumetric} designed a computationally efficient algorithm with $\tilde{\mathcal{O}}(d\sqrt{T})$ pseudo-regret using volumetric spanners. Their approach extends to the online shortest path problem in DAGs, where their efficient algorithm achieves a pseudo-regret of $\tilde{\mathcal{O}}(\sqrt{|E|T\log |\mathcal{X}|})$.  

Given access to an efficient linear optimization oracle, \citet{ito2020tight} proposed a computationally efficient algorithm based on continuous multiplicative weight updates, which achieves $\tilde{\mathcal{O}}(d\sqrt{T})$ pseudo-regret while also providing tight first- and second-order guarantees.

For compact convex sets $\mathcal{X} \subset \mathbb{R}^d$, \citet{lee2020bias} proposed the first efficient algorithm achieving a high-probability regret of $\text{poly}(d) \cdot \sqrt{T}$ against an adaptive adversary. Their approach leveraged an efficient self-concordant barrier and yielded a worst-case high-probability regret of $\tilde \calO(\sqrt{d^7T})$. Subsequently, \citet{zimmert2022return} developed an improved efficient algorithm with a regret bound of $\tilde{\mathcal{O}}(d^2\sqrt{T})$ with high probability against an adaptive adversary, which remains the best known result to date. Their method relied on the entropic barrier. Notably, both high-probability guarantees were formally established only for continuous decision sets. However, we believe their analysis extends to discrete decision sets, such as paths in a DAG, using the same techniques as \citet{abernethy2008competing}.

\section{Technical Lemmas}
\begin{lemma}[\citet{slivkins2019introduction}]
    Fix $\varepsilon \in (0,\frac{1}{4})$. Let $RC_{\varepsilon}$ denote a random coin with bias $\varepsilon$, i.e., a distribution over $\{0, 1\}$ with expectation $\frac{1}{2}+\varepsilon$. Then $\KL(RC_\varepsilon, RC_0) \leq 8\varepsilon^2$ and $\KL(RC_0, RC_\varepsilon) \leq 4\varepsilon^2$.
\end{lemma}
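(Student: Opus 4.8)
The plan is to reduce the statement to two elementary one-variable logarithmic inequalities by writing out both KL divergences from the definition. With $RC_0 = \mathrm{Bernoulli}(\tfrac12)$ and $RC_\varepsilon = \mathrm{Bernoulli}(\tfrac12+\varepsilon)$, we get
\begin{align*}
\KL(RC_\varepsilon, RC_0) &= \big(\tfrac12+\varepsilon\big)\log(1+2\varepsilon) + \big(\tfrac12-\varepsilon\big)\log(1-2\varepsilon), \\
\KL(RC_0, RC_\varepsilon) &= \tfrac12\log\tfrac{1}{1+2\varepsilon} + \tfrac12\log\tfrac{1}{1-2\varepsilon} = -\tfrac12\log\!\big(1-4\varepsilon^2\big).
\end{align*}
Because $\varepsilon\in(0,\tfrac14)$ we have $2\varepsilon\in(0,\tfrac12)$ and $4\varepsilon^2\in(0,\tfrac14)$, so all logarithm arguments lie in $(\tfrac12,\tfrac32)$ (in particular they are finite and positive), and both coefficients $\tfrac12\pm\varepsilon$ are strictly positive. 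I will carry out the estimates with the natural logarithm and convert via $\log x=\ln x/\ln 2$ at the end (recall $\log$ is base $2$ in this paper, and $1/\ln 2\approx 1.443$).

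For the $\KL(RC_0,RC_\varepsilon)$ bound I would use the sharp inequality $-\ln(1-u)\le \frac{u}{1-u}$ for $u\in[0,1)$, which holds because $\phi(u):=\frac{u}{1-u}+\ln(1-u)$ satisfies $\phi(0)=0$ and $\phi'(u)=\frac{u}{(1-u)^2}\ge 0$. Taking $u=4\varepsilon^2\le\tfrac14$ gives $-\ln(1-4\varepsilon^2)\le \frac{4\varepsilon^2}{3/4}=\frac{16}{3}\varepsilon^2$, hence $\KL(RC_0,RC_\varepsilon)\le \frac{1}{2\ln 2}\cdot\frac{16}{3}\varepsilon^2=\frac{8}{3\ln 2}\varepsilon^2<4\varepsilon^2$ since $\frac{8}{3\ln 2}\approx 3.85$. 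For the $\KL(RC_\varepsilon,RC_0)$ bound I would apply $\ln(1+u)\le u$ to both logarithms (with $u=2\varepsilon$ and $u=-2\varepsilon$); since the two coefficients $\tfrac12\pm\varepsilon$ are positive, multiplying by them preserves the inequalities and they add to
\[
\KL(RC_\varepsilon, RC_0)\le \tfrac{1}{\ln 2}\Big[\big(\tfrac12+\varepsilon\big)\,2\varepsilon + \big(\tfrac12-\varepsilon\big)(-2\varepsilon)\Big] = \tfrac{4\varepsilon^2}{\ln 2} < 8\varepsilon^2,
\]
using $\tfrac{4}{\ln 2}\approx 5.77$. For this second inequality one could alternatively invoke $\KL(P\|Q)\le \log\!\big(1+\chi^2(P\|Q)\big)$ with $\chi^2(RC_\varepsilon\|RC_0)=4\varepsilon^2$, though that relaxation is too lossy for the reverse direction (it only yields roughly $\frac{16}{3\ln 2}\varepsilon^2\approx 7.7\varepsilon^2$ for $\KL(RC_0,RC_\varepsilon)$), so I would keep the direct estimate there.

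The computation is short, and the only real subtlety is the bookkeeping of constants. Because information here is measured in bits, the factor $1/\ln 2>1$ inflates every natural-log estimate, so the crude bound $-\ln(1-u)\le 2u$ would give $\frac{4}{\ln 2}\varepsilon^2\approx 5.77\varepsilon^2$ for $\KL(RC_0,RC_\varepsilon)$, exceeding the claimed $4\varepsilon^2$; this is precisely why I would use the sharper $-\ln(1-u)\le u/(1-u)$ together with the range restriction $4\varepsilon^2\le\tfrac14$ supplied by $\varepsilon<\tfrac14$ (one can verify the resulting bound is essentially tight near $\varepsilon=\tfrac14$). The $8\varepsilon^2$ inequality, by contrast, has ample slack. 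I do not expect any other difficulty.
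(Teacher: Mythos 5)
Your proof is correct: the two KL expressions are written out accurately, the inequalities $\ln(1+u)\le u$ and $-\ln(1-u)\le u/(1-u)$ are applied with the right sign bookkeeping, and you correctly account for the paper's base-$2$ logarithm convention (which is the harder case; in nats the constants have even more slack). The paper itself supplies no proof of this lemma — it is imported verbatim from \citet{slivkins2019introduction} — so there is no argument to compare against; your direct calculation is a valid, self-contained substitute.
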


\begin{lemma}[Chain Rule]
        Let $f(x_1,x_2,\ldots,x_n)$ and $g(x_1,x_2,\ldots,x_n)$ be two joint PMFs for a tuple of random variables $(X_i)_{i\in[n]}$. Let the sample space be $\Omega= \{0,1\}^{n}$. Then we have the following:
        \begin{equation*}
            \KL(f,g)=\sum\limits_{\omega\in \Omega}f(\omega)\left(\KL(f(X_1),g(X_1))+\sum_{i=2}^n \KL(f(X_i|X_{-i}=\omega_{-i}),g(X_i|X_{-i}=\omega_{-i}))\right)\;
        \end{equation*}
        where $X_{-i}=(X_1,\ldots,X_{i-1})$, $\omega_{-i}=(\omega_1,\ldots,\omega_{i-1})$.
\end{lemma}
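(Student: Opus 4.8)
The plan is to prove the identity by starting from the right-hand side and reducing it to the definition $\KL(f,g) = \sum_{\omega \in \Omega} f(\omega)\log\frac{f(\omega)}{g(\omega)}$, using only the chain rule for joint PMFs together with a marginalization reindexing. Throughout I would assume the standard absolute-continuity condition that $g(\omega) > 0$ whenever $f(\omega) > 0$; otherwise both sides are $+\infty$ and there is nothing to show. I would also use the usual conventions $0\log\frac{0}{\cdot} = 0$, so that every logarithm below is finite on $\mathrm{supp}(f)$.

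First I would write the conditional KL terms on the right-hand side explicitly: for $i \geq 2$ and a prefix $\omega_{-i} = (\omega_1,\dots,\omega_{i-1})$,
\[
\KL\bigl(f(X_i\mid X_{-i}=\omega_{-i}),\, g(X_i\mid X_{-i}=\omega_{-i})\bigr)
= \sum_{x_i\in\{0,1\}} f(x_i\mid\omega_{-i})\log\frac{f(x_i\mid\omega_{-i})}{g(x_i\mid\omega_{-i})},
\]
and likewise $\KL(f(X_1),g(X_1)) = \sum_{x_1} f(x_1)\log\frac{f(x_1)}{g(x_1)}$. Substituting these into the bracketed expression and exchanging the finite sums, the right-hand side becomes $\sum_{i=1}^{n} T_i$, where $T_1 := \sum_{\omega\in\Omega} f(\omega)\,\KL(f(X_1),g(X_1))$ and, for $i\geq 2$, $T_i := \sum_{\omega\in\Omega} f(\omega)\,\KL\bigl(f(X_i\mid X_{-i}=\omega_{-i}),g(X_i\mid X_{-i}=\omega_{-i})\bigr)$.

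The key step is to simplify each $T_i$. Since its summand depends on $\omega$ only through the prefix $\omega_{-i}$, I would group the sum over $\omega \in \{0,1\}^n$ by $\omega_{-i}$ and sum out the remaining coordinates $\omega_i,\dots,\omega_n$, which replaces $f(\omega)$ by the marginal $f(\omega_{-i})$ of $(X_1,\dots,X_{i-1})$. Using $f(\omega_{-i})f(x_i\mid\omega_{-i}) = f(\omega_{-i},x_i)$ (the marginal of the first $i$ coordinates) and then re-expanding the sum over all of $\omega$ — legitimate because the remaining summand depends only on $(\omega_1,\dots,\omega_i)$ — I obtain, for every $i$ (reading the $i=1$ case as $\sum_\omega f(\omega)\log\frac{f(\omega_1)}{g(\omega_1)}$),
\[
T_i = \sum_{\omega\in\Omega} f(\omega)\log\frac{f(\omega_i\mid\omega_{-i})}{g(\omega_i\mid\omega_{-i})}.
\]
Finally, I would apply the chain rule for joint PMFs, $f(\omega) = f(\omega_1)\prod_{i=2}^{n} f(\omega_i\mid\omega_{-i})$ and identically for $g$, to get $\sum_{i=1}^{n}\log\frac{f(\omega_i\mid\omega_{-i})}{g(\omega_i\mid\omega_{-i})} = \log\frac{f(\omega)}{g(\omega)}$ pointwise on $\mathrm{supp}(f)$, whence $\sum_{i=1}^{n} T_i = \sum_{\omega\in\Omega} f(\omega)\log\frac{f(\omega)}{g(\omega)} = \KL(f,g)$.

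I do not expect a genuine obstacle: the argument is elementary once the conditional KLs are unfolded. The only things needing care are (i) the nonstandard convention of the statement, $X_{-i} = (X_1,\dots,X_{i-1})$, so ``conditioning on $X_{-i}$'' means conditioning on the prefix rather than on all other coordinates; and (ii) the bookkeeping in the marginalization/reindexing step together with the handling of zero-probability outcomes, where one invokes absolute continuity and the convention $0\log 0 = 0$ to keep every term well defined.
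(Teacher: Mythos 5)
Your proposal is correct and is essentially the paper's own argument run in reverse: the paper starts from $\KL(f,g)=\sum_\omega f(\omega)\log\frac{f(\omega)}{g(\omega)}$, factors the PMFs via the chain rule, and marginalizes to reach the stated form, whereas you unfold the conditional KL terms on the right-hand side, marginalize over the suffix coordinates, and recombine via the same factorization. The added care about absolute continuity and the $0\log 0$ convention is a welcome refinement but does not change the substance.
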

\begin{proof}\allowdisplaybreaks
    Let $\Omega^i=\{0,1\}^i$. Now we have the following:
    \begin{align*}
            \KL(f,g)&=\sum\limits_{\omega\in \Omega}f(\omega)\log\left(\frac{f(\omega)}{g(\omega)}\right)\;\\
            &=\sum\limits_{\omega\in \Omega}f(\omega)\log\left(\frac{f(\omega_1)\prod_{i=2}^nf(\omega_i|\omega_{-i})}{g(\omega_1)\prod_{i=2}^ng(\omega_i|\omega_{-i})}\right)\;\\
            &=\sum\limits_{\omega\in \Omega}f(\omega)\left(\log\left(\frac{f(\omega_1)}{g(\omega_1)}\right)+\sum_{i=2}^n\log\left(\frac{f(\omega_i|\omega_{-i})}{g(\omega_i|\omega_{-i})}\right)\right)\;\\
            &=\sum\limits_{\omega\in \Omega}f(\omega)\log\left(\frac{f(\omega_1)}{g(\omega_1)}\right)\;+\sum_{i=2}^n\sum\limits_{\omega\in \Omega}f(\omega)\log\left(\frac{f(\omega_i|\omega_{-i})}{g(\omega_i|\omega_{-i})}\right)\;\\
            &=\sum\limits_{\omega_1\in \mathbb{R}}f(\omega_1)\log\left(\frac{f(\omega_1)}{g(\omega_1)}\right)\;+\sum_{i=2}^n\sum\limits_{\omega\in \Omega^{i}}f(\omega)\log\left(\frac{f(\omega_i|\omega_{-i})}{g(\omega_i|\omega_{-i})}\right)\;\\
            &=\KL(f(X_1),g(X_1))+\sum_{i=2}^n\sum\limits_{\omega_{-i}\in \Omega^{i-1}}f(\omega_{-i})\sum\limits_{\omega_{i}\in \Omega^1}f(\omega_i)\log\left(\frac{f(\omega_i|\omega_{-i})}{g(\omega_i|\omega_{-i})}\right)\;\\
            &=\KL(f(X_1),g(X_1))+\sum_{i=2}^n\sum\limits_{\omega_{-i}\in \Omega^{i-1}}f(\omega_{-i})\KL(f(X_i|X_{-i}=\omega_{-i}),g(X_i|X_{-i}=\omega_{-i}))\;\\
            &=\sum_{\omega\in \Omega}f(\omega)\KL(f(X_1),g(X_1))\;+\sum_{i=2}^n\sum\limits_{\omega\in \Omega}f(\omega)\KL(f(X_i|X_{-i}=\omega_{-i}),g(X_i|X_{-i}=\omega_{-i}))\;\\
            &=\sum\limits_{\omega\in \Omega}f(\omega)\left(\KL(f(X_1),g(X_1))+\sum_{i=2}^n \KL(f(X_i|X_{-i}=\omega_{-i}),g(X_i|X_{-i}=\omega_{-i}))\right)\;
    \end{align*}
\end{proof}

\begin{lemma}[\cite{fiegel2023adapting}]\label{feigel-lem}
    Let $(u_t)_{t\in \llbracket T\rrbracket}$ be a random process adapted to the filtration $(\calF_t)_{t\in[T]}$ such that $0\leq u_t\leq H$ for all $t\in \llbracket T\rrbracket$. Then, with probability at least $1-\delta$, we have
    \begin{equation*}
        \sum_{t=1}^T[u_t-\mathbb{E}[u_t|\calF_{t-1}]]\leq H\sqrt{2T\log(1/\delta)}
    \end{equation*}
    Similarly, with probability at least $1-\delta$, we have
    \begin{equation*}
        \sum_{t=1}^T[\mathbb{E}[u_t|\calF_{t-1}]-u_t]\leq H\sqrt{2T\log(1/\delta)}
    \end{equation*}
\end{lemma}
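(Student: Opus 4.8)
The plan is to recognize the left-hand side as a sum of bounded martingale differences and to run the standard Chernoff--Cram\'er argument, i.e., to prove an Azuma--Hoeffding inequality. First I would set $X_t := u_t - \mathbb{E}[u_t \mid \calF_{t-1}]$ and $S_n := \sum_{t=1}^n X_t$, noting that $\mathbb{E}[X_t \mid \calF_{t-1}] = 0$, so that $(S_n)_{n \in \llbracket T \rrbracket}$ is a martingale adapted to $(\calF_n)$. Since $0 \le u_t \le H$, writing $a_t := \mathbb{E}[u_t \mid \calF_{t-1}]$ (an $\calF_{t-1}$-measurable random variable taking values in $[0,H]$), conditioned on $\calF_{t-1}$ the increment $X_t$ lies almost surely in the interval $[-a_t,\, H - a_t]$, whose endpoints are $\calF_{t-1}$-measurable and whose width is exactly $H$; in particular $|X_t| \le H$.

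Next I would invoke Hoeffding's lemma in its conditional form: for a random variable $Z$ with $\mathbb{E}[Z \mid \calF_{t-1}] = 0$ lying almost surely in an $\calF_{t-1}$-measurable interval of width $H$, one has $\mathbb{E}[e^{\lambda Z} \mid \calF_{t-1}] \le e^{\lambda^2 H^2 / 8}$ for every $\lambda \in \mathbb{R}$. Applying this to $Z = X_t$ and peeling off one term at a time via the tower property,
\[
\mathbb{E}[e^{\lambda S_T}] = \mathbb{E}\big[e^{\lambda S_{T-1}} \, \mathbb{E}[e^{\lambda X_T} \mid \calF_{T-1}]\big] \le e^{\lambda^2 H^2/8}\, \mathbb{E}[e^{\lambda S_{T-1}}] \le \cdots \le e^{T \lambda^2 H^2 / 8}.
\]

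Then I would apply Markov's inequality to $e^{\lambda S_T}$: for any $s > 0$ and $\lambda > 0$, $\mathbb{P}[S_T \ge s] \le e^{-\lambda s}\,\mathbb{E}[e^{\lambda S_T}] \le \exp\!\big(-\lambda s + T\lambda^2 H^2 / 8\big)$. Optimizing the exponent over $\lambda$ (taking $\lambda = 4s/(TH^2)$) gives $\mathbb{P}[S_T \ge s] \le \exp\!\big(-2s^2/(TH^2)\big)$. Setting the right-hand side equal to $\delta$ yields $s = H\sqrt{T\log(1/\delta)/2} \le H\sqrt{2T\log(1/\delta)}$, which proves the first inequality. (One can also get exactly the stated constant more crudely, using only $|X_t| \le H$ together with the standard Azuma--Hoeffding tail $\mathbb{P}[S_T \ge s] \le \exp(-s^2/(2TH^2))$.) The second inequality follows immediately by applying the first to the process $\tilde u_t := H - u_t$, which also satisfies $0 \le \tilde u_t \le H$ and for which $\tilde u_t - \mathbb{E}[\tilde u_t \mid \calF_{t-1}] = \mathbb{E}[u_t \mid \calF_{t-1}] - u_t$.

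There is no genuine obstacle here, since this is a classical martingale concentration bound (and the statement is attributed to \cite{fiegel2023adapting}); the only point requiring a line of care is the conditional version of Hoeffding's lemma, namely checking that the convexity argument bounding the moment generating function of a mean-zero random variable supported in $[a,b]$ carries over verbatim when $a,b$ are $\calF_{t-1}$-measurable. This works because the resulting bound $e^{\lambda^2(b-a)^2/8}$ depends on $a,b$ only through $b-a = H$, a deterministic constant, so the bound can be pulled out of the conditional expectation in the tower-property recursion. Alternatively, one may simply cite \cite{fiegel2023adapting} or any standard reference on martingale concentration.
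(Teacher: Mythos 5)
Your proof is correct. The paper itself does not prove this lemma --- it simply imports it from \citet{fiegel2023adapting} --- so there is no in-paper argument to compare against; your Azuma--Hoeffding derivation is the standard one and is exactly what underlies the cited result. Two small remarks: your optimized bound $\mathbb{P}[S_T \ge s] \le \exp(-2s^2/(TH^2))$ actually yields the sharper threshold $H\sqrt{T\log(1/\delta)/2}$, of which the stated $H\sqrt{2T\log(1/\delta)}$ is a weakening, so the lemma follows a fortiori; and your handling of the conditional Hoeffding lemma (noting that the bound depends on the random endpoints only through the deterministic width $H$, so it can be pulled out of the conditional expectation) is precisely the point that needs care, and you address it correctly.
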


\begin{corollary}[\cite{fiegel2023adapting}]\label{feigel-cor}
    Let $(u_t)_{t\in \llbracket T\rrbracket}$ be a random process adapted to the filtration $(\calF_t)_{t\in[T]}$ such that $-H\leq u_t\leq H$ for all $t\in \llbracket T\rrbracket$. Then, with probability at least $1-\delta$, we have
    \begin{equation*}
        \sum_{t=1}^T[u_t-\mathbb{E}[u_t|\calF_{t-1}]]\leq H\sqrt{8T\log(1/\delta)}
    \end{equation*}
    Similarly, with probability at least $1-\delta$, we have
    \begin{equation*}
        \sum_{t=1}^T[\mathbb{E}[u_t|\calF_{t-1}]-u_t]\leq H\sqrt{8T\log(1/\delta)}
    \end{equation*}
\end{corollary}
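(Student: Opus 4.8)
The plan is to obtain \Cref{feigel-cor} as an immediate consequence of \Cref{feigel-lem} via a deterministic shift, so that no fresh concentration argument is needed. The only gap between the two statements is that \Cref{feigel-lem} additionally assumes nonnegativity of the process; the adaptedness and two‑sided boundedness hypotheses are already in hand, so the whole task reduces to removing the nonnegativity requirement at the price of a worse constant.

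Concretely, I would first introduce the shifted process $u_t' := u_t + H$ for $t \in \llbracket T \rrbracket$. Since $H$ is a fixed constant and $(u_t)$ is adapted to $(\calF_t)$, the process $(u_t')$ is adapted to the same filtration, and the bound $-H \le u_t \le H$ becomes $0 \le u_t' \le 2H$; hence $(u_t')$ satisfies the hypotheses of \Cref{feigel-lem} with range parameter $2H$ in place of $H$. Next I would observe that subtracting a deterministic constant leaves the centered increments unchanged: because $\mathbb{E}[u_t' \mid \calF_{t-1}] = \mathbb{E}[u_t \mid \calF_{t-1}] + H$, we have $u_t' - \mathbb{E}[u_t' \mid \calF_{t-1}] = u_t - \mathbb{E}[u_t \mid \calF_{t-1}]$ pointwise for each $t$, so
$$ \sum_{t=1}^T \big(u_t' - \mathbb{E}[u_t' \mid \calF_{t-1}]\big) = \sum_{t=1}^T \big(u_t - \mathbb{E}[u_t \mid \calF_{t-1}]\big) $$
as random variables. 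Applying the first inequality of \Cref{feigel-lem} to $(u_t')$ with parameter $2H$ then yields, with probability at least $1-\delta$,
$$ \sum_{t=1}^T \big(u_t - \mathbb{E}[u_t \mid \calF_{t-1}]\big) \le 2H\sqrt{2T\log(1/\delta)} = H\sqrt{8T\log(1/\delta)}, $$
which is exactly the first claimed bound. For the second bound I would run the identical reduction on the process $(-u_t)$, which also lies in $[-H,H]$ (equivalently, invoke the second inequality of \Cref{feigel-lem} for $(u_t')$), obtaining $\sum_{t=1}^T (\mathbb{E}[u_t \mid \calF_{t-1}] - u_t) \le H\sqrt{8T\log(1/\delta)}$ with probability at least $1-\delta$.

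I do not expect any genuine obstacle here; the one point worth stating carefully is that shifting by the deterministic constant $H$ preserves both adaptedness and the centered process, so the only cost is replacing $H$ by $2H$ inside \Cref{feigel-lem}, which turns $\sqrt{2}$ into $\sqrt{8}=2\sqrt{2}$ — precisely the factor coming from doubling the range. If one preferred not to cite \Cref{feigel-lem} as a black box, an alternative is to apply the Azuma–Hoeffding inequality directly to the martingale difference sequence $X_t := u_t - \mathbb{E}[u_t \mid \calF_{t-1}]$, whose increments lie in an interval of length $2H$; but routing through \Cref{feigel-lem} keeps the argument to the two‑line reduction above.
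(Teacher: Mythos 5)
Your proof is correct: shifting by the deterministic constant $H$ preserves adaptedness and the centered increments, places the process in $[0,2H]$, and applying Lemma~\ref{feigel-lem} with range $2H$ gives exactly $2H\sqrt{2T\log(1/\delta)} = H\sqrt{8T\log(1/\delta)}$. The paper states this corollary without proof (citing \citet{fiegel2023adapting}), and your two-line reduction is precisely the intended derivation.
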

\begin{lemma}[\citet{lattimore2020bandit}]\label{technical-lem-hessian}
Let $\eta > 0$ and $f$ be Legendre and twice differentiable with positive definite Hessian in $A = \text{int}(\text{dom}(f))$. Then for all $x, y \in A$, there exists a $z \in [x, y] = \{(1 - \alpha)x + \alpha y : \alpha \in [0, 1]\}$ such that
\[
\langle x - y, u \rangle - \frac{D_f(x, y)}{\eta} \leq \frac{\eta}{2} \|u\|^2_{(\nabla^2 f(z))^{-1}}.
\]
where $D_f(x, y)$ with respect to $f$.
\end{lemma}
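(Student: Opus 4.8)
The plan is to combine a Lagrange‑remainder (mean‑value) expression for the Bregman divergence $D_f(x,y)$ with a Fenchel--Young / weighted AM--GM inequality taken in the local norm induced by the Hessian at the point produced by Taylor's theorem.

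First I would record that the segment $[x,y]$ lies entirely in $A = \text{int}(\text{dom}(f))$: since $f$ is Legendre, $\text{dom}(f)$ is convex, hence so is its interior $A$, and therefore $(1-\alpha)x+\alpha y \in A$ for every $\alpha\in[0,1]$. In particular $f$ is twice continuously differentiable with positive definite Hessian all along this segment, so Taylor's theorem with the Lagrange form of the remainder applies to $\phi(\alpha):=f\big((1-\alpha)y+\alpha x\big)$. This yields a point $z\in[x,y]$ such that
\[
f(x) = f(y) + \langle \nabla f(y),\, x-y\rangle + \tfrac12\,(x-y)^\top \nabla^2 f(z)\,(x-y).
\]
Subtracting the affine terms and using $D_f(x,y) = f(x)-f(y)-\langle\nabla f(y), x-y\rangle$ gives the exact identity $D_f(x,y) = \tfrac12\|x-y\|_{\nabla^2 f(z)}^2$. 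Note that the choice of $z$ depends on $x$ and $y$ but \emph{not} on $u$, which is what the statement requires.

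Next, writing $H:=\nabla^2 f(z)\succ 0$ and applying Cauchy--Schwarz in the $H$‑inner product, $\langle x-y, u\rangle = \langle H^{1/2}(x-y),\, H^{-1/2}u\rangle \le \|x-y\|_H\,\|u\|_{H^{-1}}$, followed by the weighted AM--GM inequality $ab \le \tfrac{1}{2\eta}a^2 + \tfrac{\eta}{2}b^2$ with $a=\|x-y\|_H$ and $b=\|u\|_{H^{-1}}$, I obtain
\[
\langle x-y, u\rangle \;\le\; \tfrac{1}{2\eta}\|x-y\|_H^2 + \tfrac{\eta}{2}\|u\|_{H^{-1}}^2 \;=\; \tfrac{1}{\eta}\,D_f(x,y) + \tfrac{\eta}{2}\,\|u\|_{(\nabla^2 f(z))^{-1}}^2,
\]
using the identity from the previous paragraph in the equality. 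Rearranging this is exactly the claimed bound. I do not expect a genuine obstacle: the only two places needing care are (i) checking that the Lagrange‑remainder form of Taylor's theorem is legitimate, which reduces to the segment staying inside the open set $A$ on which $f$ is $C^2$ — handled by convexity of $A$ — and (ii) keeping the orientation of the AM--GM step correct so that the factors $\tfrac1\eta$ and $\tfrac\eta2$ attach to the Bregman term and the dual‑norm term respectively. No structural property of $f$ beyond "Legendre, twice differentiable, positive definite Hessian on $A$" enters the argument.
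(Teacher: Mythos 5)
Your proof is correct, and it is the standard argument for this fact: the paper itself gives no proof (it cites \citet{lattimore2020bandit}), and your route via the Lagrange-remainder identity $D_f(x,y)=\tfrac12\|x-y\|^2_{\nabla^2 f(z)}$ followed by Cauchy--Schwarz in the $\nabla^2 f(z)$-inner product and weighted AM--GM is essentially the one in that reference. The two points you flag — that the segment stays in $A$ by convexity of $\mathrm{dom}(f)$, and that $z$ depends only on $x,y$ and not on $u$ — are exactly the right things to check, and both are handled correctly.
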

\section{General Framework for FTRL with Implicit Exploration}\label{appendix:ftrl-framework}
In this section, we extend the ideas from \citet{neu2015explore} to provide a general framework for using FTRL with implicit exploration in combinatorial bandits. In this setting, we are given a combinatorial set $\calX\subseteq\{0,1\}^d$. Define the $\ell_1$-norm of the set as $m:= \max_{\bfx \in \calX} \|\bfx\|_1$. 

In each round $t$, the algorithm selects $\bfx_t\in \calX$ and incurs a loss given by $\ell_t := \langle \bfx_t, \bfy_t \rangle\in [-1,1]$
where $\bfy_t$ is the loss vector chosen adaptively by an adversary based on the past filtration $\calF_{t-1}$ and the algorithm. The regret with respect to a fixed element $\bfx\in\calX$ is defined as  
\[
    R_T(\bfx) := \sum_{t=1}^T \langle \bfx_t, \bfy_t \rangle - \sum_{t=1}^T \langle \bfx, \bfy_t \rangle.
\]
The goal is to provide a high-probability regret guarantee on $\max_{\bfx\in\calX} R_T(\bfx).$ 

Let $\tilde{\bfy}_t \in \bbR^d$ be a \emph{relatively unbiased} estimator defined as  
\[
\tilde{\mathbf{y}}_t[i] := \frac{\ind[\bfx_t[i]=1] \cdot \ell_{t,i}}{\bbP_t[\bfx_t[i]=1]},
\]
where $\ell_{t,i}$ is some random variable based on $\ell_t$. Assume there is an absolute constant $b$ such that $\ell_{t,i} \in [0, b]$ for every $t \in \llbracket T \rrbracket$ and $i \in \llbracket d \rrbracket$. An estimator is \emph{relatively unbiased} when it satisfies  
\[
    \bbE_t[\langle \bfx - \bfx', \tilde{\bfy}_t \rangle] = \langle \bfx - \bfx', \bfy_t \rangle
\]
for any two elements $\bfx, \bfx' \in \calX$, preserving their differences.  

We analyze the FTRL algorithm, which follows the update rule  
\[
\tilde{\mathbf{x}}_{t} \leftarrow \arg\min_{\mathbf{x} \in \mathrm{co}(\mathcal{X})} \left( \eta \sum_{\tau=1}^{t-1} \langle \mathbf{x}, \widehat{\mathbf{y}}_\tau \rangle + F(\mathbf{x}) \right),
\]
where $F(\cdot)$ is a Legendre function such that $\nabla^{2} F(\cdot)$ is always a diagonal matrix with positive diagonal entries for any point on the chord $[\tilde x_{t},\tilde x_{t+1}]$. Moreover, $\hat\bfy_t \in \bbR^d$ is the loss estimator given by  
\[
    \widehat{\mathbf{y}}_t[i] := \frac{\ind[\bfx_t[i]=1] \cdot \ell_{t,i}}{\bbP_t[\bfx_t[i]=1] + \gamma_i}.
\]
The algorithm then samples $\bfx_t \in \calX$ such that  $\bbE_t[\bfx_t] = \tilde{\bfx}_t.$ Note that it holds $\bbP_t[\bfx_t[i] = 1] = \tilde{\bfx}_t[i]$ for every $i \in \llbracket d \rrbracket$.

Now, we begin our regret analysis.

\begin{lemma}
    Denote by $\calE_1$ the event that 
    \[
    \sum_{t=1}^T \langle \bfx_t - \tilde \bfx_t,\bfy_t\rangle\leq \sqrt{8T\log(1/\delta_0)}.
    \]
    It satisfies that $\bbP[\calE_1] \geq 1 - \delta_0$.
\end{lemma}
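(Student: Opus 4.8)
The plan is to recognize this as a direct application of the Azuma--Hoeffding-type concentration bound stated in \Cref{feigel-cor}. Consider the random process $u_t := \langle \bfx_t, \bfy_t \rangle$ for $t \in \llbracket T \rrbracket$. Since $\bfx_t \in \calX$ is the path actually chosen by the algorithm in round $t$, \Cref{ass:reward} guarantees $\langle \bfx_t, \bfy_t \rangle \in [-1, 1]$, so the process satisfies $-H \leq u_t \leq H$ with $H = 1$. Moreover, $\bfx_t$ is $\calF_t$-measurable and $\bfy_t$ is $\calF_{t-1}$-measurable (the adversary commits to $\bfy_t$ based on the past filtration), so $u_t$ is adapted to $(\calF_t)_{t \in \llbracket T \rrbracket}$ and hence the process is a valid input to \Cref{feigel-cor}.

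The key observation is that the conditional expectation $\bbE[u_t \mid \calF_{t-1}] = \bbE_t[\langle \bfx_t, \bfy_t \rangle]$ equals $\langle \tilde\bfx_t, \bfy_t \rangle$, because $\bbE_t[\bfx_t] = \tilde\bfx_t$ by the sampling procedure (as noted in the text following the FTRL update) and $\bfy_t$ is fixed given $\calF_{t-1}$. Therefore
\[
    \sum_{t=1}^T \langle \bfx_t - \tilde\bfx_t, \bfy_t\rangle = \sum_{t=1}^T \big(u_t - \bbE[u_t \mid \calF_{t-1}]\big).
\]
Applying the first inequality of \Cref{feigel-cor} with $H = 1$ and $\delta = \delta_0$ yields, with probability at least $1 - \delta_0$,
\[
    \sum_{t=1}^T \langle \bfx_t - \tilde\bfx_t, \bfy_t\rangle \leq \sqrt{8T\log(1/\delta_0)},
\]
which is exactly the event $\calE_1$. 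This establishes $\bbP[\calE_1] \geq 1 - \delta_0$.

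There is no real obstacle here; the only things to verify carefully are the measurability/adaptedness of $u_t$ and the identification of the conditional mean, both of which follow immediately from the setup. The lemma is essentially a bookkeeping step isolating one of the "low-probability failure" events that will later be union-bounded together with the other concentration events (e.g., those controlling the loss-estimator deviations) in the overall regret analysis of the FTRL-with-implicit-exploration framework.
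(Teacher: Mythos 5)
Your proposal is correct and follows essentially the same route as the paper: both decompose $\sum_t \langle \bfx_t,\bfy_t\rangle$ via $\bbE_t[\langle \bfx_t,\bfy_t\rangle]=\langle \tilde\bfx_t,\bfy_t\rangle$ (using $\bbE_t[\bfx_t]=\tilde\bfx_t$ and the $\calF_{t-1}$-measurability of $\bfy_t$) and then apply Corollary~\ref{feigel-cor} with $H=1$ since $\langle \bfx_t,\bfy_t\rangle\in[-1,1]$. Your additional remarks on adaptedness are just a more explicit spelling-out of what the paper leaves implicit.
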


\begin{proof}
    First observe that 
    \[
        \sum_{t=1}^T\langle \bfx_t,\bfy_t\rangle=\sum_{t=1}^T\langle \tilde \bfx_t,\bfy_t\rangle+\sum_{t=1}^T(\langle \bfx_t,\bfy_t\rangle-\bbE_t[\langle \bfx_t,\bfy_t\rangle]).
    \] 
    As $\langle \bfx_t,\bfy_t\rangle\in [-1,1]$, according to Corollary~\ref{feigel-cor}, with probability at least $1-\delta_0$, we have 
    \[
        \sum_{t=1}^T(\langle \bfx_t,\bfy_t\rangle-\bbE_t[\langle \bfx_t,\bfy_t\rangle])\leq \sqrt{8T\log(1/\delta_0)},
    \] 
    which concludes the proof.
\end{proof}

\begin{lemma}
    Denote by $\calE_2$ the event that 
    \[
    \sum_{t=1}^T \langle \tilde\bfx_t,\bbE_t[\hat{\bfy}_t]-\hat{\bfy}_t\rangle\leq b\cdot m\sqrt{2T\log(1/\delta_0)}.
    \]
    It satisfies that $\bbP[\calE_2] \geq 1 - \delta_0$.
\end{lemma}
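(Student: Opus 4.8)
The plan is to reduce this to the Azuma--Hoeffding-type bound of \Cref{feigel-lem}, exactly as was done for $\calE_1$ but using the one-sided version appropriate for a nonnegative bounded process. Set $u_t := \langle \tilde\bfx_t, \hat\bfy_t\rangle$. First I would note that $(u_t)_t$ is adapted to $(\calF_t)_t$: $\tilde\bfx_t$ depends only on $\hat\bfy_1,\dots,\hat\bfy_{t-1}$ and is therefore $\calF_{t-1}$-measurable, while $\hat\bfy_t$ is determined by $\bfx_t$, $\ell_t$, the quantities $\bbP_t[\bfx_t[i]=1]=\tilde\bfx_t[i]$, and the $\ell_{t,i}$'s, all of which are $\calF_t$-measurable. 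Since $\tilde\bfx_t$ is $\calF_{t-1}$-measurable, $\bbE_t[u_t] = \langle \tilde\bfx_t, \bbE_t[\hat\bfy_t]\rangle$, so the sum appearing in the statement of $\calE_2$ is precisely $\sum_{t=1}^T (\bbE_t[u_t] - u_t)$.

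Next I would verify the deterministic bound $0 \le u_t \le b\,m$. Nonnegativity is immediate since $\tilde\bfx_t[i], \ell_{t,i}\ge 0$ and $\tilde\bfx_t[i]+\gamma_i > 0$. For the upper bound, using $\bbP_t[\bfx_t[i]=1]=\tilde\bfx_t[i]$ and $\gamma_i>0$,
\[
u_t = \sum_{i=1}^d \tilde\bfx_t[i]\cdot \frac{\ind[\bfx_t[i]=1]\,\ell_{t,i}}{\tilde\bfx_t[i]+\gamma_i} \le \sum_{i=1}^d \ind[\bfx_t[i]=1]\,\ell_{t,i} \le b\,\|\bfx_t\|_1 \le b\,m,
\]
where the last step uses $m = \max_{\bfx\in\calX}\|\bfx\|_1$. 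Applying the second inequality of \Cref{feigel-lem} with $H = b\,m$ and confidence parameter $\delta_0$ then yields $\sum_{t=1}^T(\bbE_t[u_t]-u_t) \le b\,m\sqrt{2T\log(1/\delta_0)}$ with probability at least $1-\delta_0$, which is exactly the event $\calE_2$.

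I do not expect any genuine obstacle in this lemma; the only steps requiring care are the measurability/adaptedness bookkeeping and the substitution $\bbP_t[\bfx_t[i]=1]=\tilde\bfx_t[i]$, which is what collapses the per-coordinate importance ratio into a uniform bound of $b$. Everything else is a direct invocation of the already-established concentration inequality.
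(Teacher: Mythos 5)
Your proof is correct and follows the same route as the paper: bound $\langle \tilde\bfx_t,\hat\bfy_t\rangle \in [0, b\cdot m]$ (using $\bbP_t[\bfx_t[i]=1]=\tilde\bfx_t[i]$ and $\gamma_i>0$) and apply the one-sided martingale concentration bound of Lemma~\ref{feigel-lem} with $H=b\cdot m$. The additional adaptedness and measurability bookkeeping you spell out is fine but the paper treats it as immediate.
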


\begin{proof}
    The lemma can be proved directly by applying Lemma~\ref{feigel-lem} and the fact that $\langle \tilde \bfx_t, \hat \bfy_t \rangle \in [0, b \cdot m]$.
\end{proof}

\begin{lemma}
    It always holds that 
    \[
    \sum_{t=1}^T\langle \tilde \bfx_t,\bbE_t[\tilde\bfy_t]-\bbE_t[\hat{\bfy}_t] \rangle\leq b\cdot T\cdot \sum_{i=1}^d\gamma_i.
    \]
\end{lemma}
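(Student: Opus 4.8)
The plan is to bound the per-round quantity $\langle \tilde\bfx_t,\bbE_t[\tilde\bfy_t]-\bbE_t[\hat\bfy_t]\rangle$ coordinate-by-coordinate and then sum over $t$. Fix a round $t$ and a coordinate $i\in\llbracket d\rrbracket$. By the two estimator definitions,
\begin{align*}
\bbE_t[\tilde\bfy_t[i]]-\bbE_t[\hat\bfy_t[i]]
&=\bbE_t\!\left[\ind[\bfx_t[i]=1]\cdot \ell_{t,i}\left(\frac{1}{\bbP_t[\bfx_t[i]=1]}-\frac{1}{\bbP_t[\bfx_t[i]=1]+\gamma_i}\right)\right].
\end{align*}
Here I would use the identity $\frac{1}{p}-\frac{1}{p+\gamma}=\frac{\gamma}{p(p+\gamma)}$ with $p=\bbP_t[\bfx_t[i]=1]$, and then take the expectation over the indicator: since $\ell_{t,i}\ge 0$ is bounded by $b$ and $\bbE_t[\ind[\bfx_t[i]=1]]=p$, we get
\[
\bbE_t[\tilde\bfy_t[i]]-\bbE_t[\hat\bfy_t[i]]\le b\cdot p\cdot\frac{\gamma_i}{p(p+\gamma_i)}=\frac{b\,\gamma_i}{p+\gamma_i}\le b\,\gamma_i\cdot\frac{1}{\gamma_i}\cdot\frac{\gamma_i}{p+\gamma_i},
\]
so more simply $\bbE_t[\tilde\bfy_t[i]]-\bbE_t[\hat\bfy_t[i]]\le b\gamma_i$, using $\gamma_i/(p+\gamma_i)\le 1$. (One subtlety to be careful about: $\ell_{t,i}$ may be correlated with the indicator, so I should treat $\ind[\bfx_t[i]=1]\cdot\ell_{t,i}$ as a single nonnegative random variable bounded by $b\cdot\ind[\bfx_t[i]=1]$ before taking conditional expectation; the bound $b\,p$ follows.)

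Next I would multiply through by $\tilde\bfx_t[i]\ge 0$ and sum over coordinates. Using $\tilde\bfx_t[i]=\bbP_t[\bfx_t[i]=1]\le 1$ for all $i$ (recall the sampling ensures $\bbP_t[\bfx_t[i]=1]=\tilde\bfx_t[i]$), we obtain
\[
\langle \tilde\bfx_t,\bbE_t[\tilde\bfy_t]-\bbE_t[\hat\bfy_t]\rangle=\sum_{i=1}^d \tilde\bfx_t[i]\big(\bbE_t[\tilde\bfy_t[i]]-\bbE_t[\hat\bfy_t[i]]\big)\le \sum_{i=1}^d b\,\gamma_i = b\sum_{i=1}^d\gamma_i.
\]
Summing this deterministic per-round bound over $t=1,\dots,T$ yields the claimed inequality $\sum_{t=1}^T\langle\tilde\bfx_t,\bbE_t[\tilde\bfy_t]-\bbE_t[\hat\bfy_t]\rangle\le b\cdot T\cdot\sum_{i=1}^d\gamma_i$.

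This lemma is essentially a routine computation — there is no real obstacle. The only place requiring slight care is the handling of the possible correlation between $\ell_{t,i}$ and the indicator $\ind[\bfx_t[i]=1]$ inside the conditional expectation, which is why I would phrase the argument in terms of the nonnegative product $\ind[\bfx_t[i]=1]\cdot\ell_{t,i}\le b\cdot\ind[\bfx_t[i]=1]$ and then apply $\bbE_t[\ind[\bfx_t[i]=1]]=\bbP_t[\bfx_t[i]=1]$, rather than trying to factor the expectation. Everything else is algebraic manipulation of $\tfrac1p-\tfrac1{p+\gamma}$ together with the elementary bounds $\gamma_i/(p+\gamma_i)\le 1$ and $\tilde\bfx_t[i]\le 1$.
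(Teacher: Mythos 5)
Your overall route is the same as the paper's---compute the per-coordinate gap, which both you and the paper reduce to the factor $\frac{\gamma_i}{\tilde\bfx_t[i]+\gamma_i}$, and then sum---but one intermediate step is false as written. From your (correct) computation $\bbE_t[\tilde\bfy_t[i]]-\bbE_t[\hat{\bfy}_t[i]]\le\frac{b\,\gamma_i}{p+\gamma_i}$ with $p=\bbP_t[\bfx_t[i]=1]$, you conclude ``more simply'' that the difference is at most $b\gamma_i$, citing $\gamma_i/(p+\gamma_i)\le 1$. That inequality only yields the bound $b$, not $b\gamma_i$; indeed $\frac{b\gamma_i}{p+\gamma_i}\le b\gamma_i$ would require $p+\gamma_i\ge 1$, which fails exactly in the regime where implicit exploration matters (e.g.\ $p=\gamma_i=0.01$ and $\ell_{t,i}\equiv b$ gives a per-coordinate gap of $b/2\gg b\gamma_i$). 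Since your final display then invokes this false per-coordinate bound together with $\tilde\bfx_t[i]\le 1$, the argument as written does not go through.

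The fix is one line and is exactly the paper's step: do not discard the factor $p=\tilde\bfx_t[i]$ before bounding. Multiplying your correct bound by $\tilde\bfx_t[i]$ gives $\tilde\bfx_t[i]\bigl(\bbE_t[\tilde\bfy_t[i]]-\bbE_t[\hat{\bfy}_t[i]]\bigr)\le\frac{b\,p\,\gamma_i}{p+\gamma_i}\le b\,\gamma_i$, using $\frac{p}{p+\gamma_i}\le 1$; summing over $i\in\llbracket d\rrbracket$ and $t\in\llbracket T\rrbracket$ then yields the lemma. Everything else in your write-up---pulling the $\calF_{t-1}$-measurable factor $\frac1p-\frac1{p+\gamma_i}$ out of $\bbE_t$, and handling the correlation between $\ell_{t,i}$ and the indicator by bounding $\ind[\bfx_t[i]=1]\cdot\ell_{t,i}\le b\cdot\ind[\bfx_t[i]=1]$---is fine and mirrors the paper, which bounds $\tilde\bfx_t[i]\cdot\bbE_t[\tilde\bfy_t[i]]\cdot\frac{\gamma_i}{\tilde\bfx_t[i]+\gamma_i}\le b\cdot\frac{\tilde\bfx_t[i]\,\gamma_i}{\tilde\bfx_t[i]+\gamma_i}\le b\,\gamma_i$.
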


\begin{proof}
    From definition, 
    \begin{align*}
        \sum_{t=1}^T\langle \tilde \bfx_t,\bbE_t[\tilde\bfy_t]-\bbE_t[\widehat{\bfy}_t]\rangle&=\sum_{t=1}^T\sum_{i=1}^d \tilde \bfx_t[i]\cdot \bbE_t[\tilde\bfy_t[i]] \cdot \Big(1-\frac{\tilde \bfx_t[i]}{\tilde \bfx_t[i]+\gamma_i} \Big)\\
        &\leq b\cdot\sum_{t=1}^T\sum_{i=1}^d \frac{\tilde \bfx_t[i]\cdot \gamma_i}{\tilde \bfx_t[i]+\gamma_i}\tag{$\tilde \bfx_t[i] \leq 1, \bbE_t[\tilde\bfy_t[i]] \leq b$}\\
        & \leq b\cdot \sum_{t=1}^T\sum_{i=1}^d \gamma_i\\
        & = b\cdot T\cdot\sum_{i=1}^d \gamma_i
    \end{align*}
\end{proof}

\begin{lemma}
    Define $\beta_i := 2\gamma_i / b$.
    Let $\calE_3$ be the event that simultaneously for all $i\in \llbracket d\rrbracket$, 
    \[\sum_{t=1}^T(\widehat{\bfy}_t[i]- \bbE_t[\tilde\bfy_t[i]])\leq \frac{\log(d/\delta_0)}{\beta_i}\]
    It satisfies that $\bbP[\calE_3] \geq 1 - \delta_0$. Furthermore, under event $\calE_3$, it satisfies that 
    \[
    \sum_{t=1}^T\langle \bfx,\widehat{\bfy}_t-\bbE_t[\tilde{\bfy}_t]\rangle\leq \sum_{i=1}^d \bfx[i]\cdot\frac{\log(d/\delta_0)}{\beta_i}, \qquad \forall \bfx\in \calX.
    \]
\end{lemma}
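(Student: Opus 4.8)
The plan is to adapt the exponential-supermartingale argument of \citet{neu2015explore} on a per-coordinate basis. Fix a coordinate $i\in\llbracket d\rrbracket$ and abbreviate $p_t := \bbP_t[\bfx_t[i]=1]$ and $U_t := \ind[\bfx_t[i]=1]\cdot\ell_{t,i}$, so that $U_t\in[0,b]$, $\widehat{\bfy}_t[i]=\frac{U_t}{p_t+\gamma_i}$, and $\tilde{\bfy}_t[i]=\frac{U_t}{p_t}$ (on the event $p_t>0$; the case $p_t=0$ forces $U_t=0$ almost surely, so both estimators vanish and everything below is trivial). The first and main step is the pointwise inequality
\[
\exp\!\big(\beta_i\,\widehat{\bfy}_t[i]\big)\;\le\;1+\beta_i\,\tilde{\bfy}_t[i].
\]
To prove it, note that $\beta_i b=2\gamma_i$ gives $\beta_i\tilde{\bfy}_t[i]=\frac{\beta_i U_t}{p_t}\le\frac{2\gamma_i}{p_t}$, hence $\gamma_i/p_t\ge\tfrac12\beta_i\tilde{\bfy}_t[i]$. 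Therefore
\[
\beta_i\,\widehat{\bfy}_t[i]=\frac{\beta_i U_t/p_t}{1+\gamma_i/p_t}=\frac{\beta_i\tilde{\bfy}_t[i]}{1+\gamma_i/p_t}\le\frac{\beta_i\tilde{\bfy}_t[i]}{1+\tfrac12\beta_i\tilde{\bfy}_t[i]}\le\log\!\big(1+\beta_i\tilde{\bfy}_t[i]\big),
\]
where the last inequality uses the elementary fact that $\frac{v}{1+v/2}\le\log(1+v)$ for all $v\ge 0$ (which follows by differentiating: the gap $h(v)=\log(1+v)-\frac{2v}{2+v}$ satisfies $h(0)=0$ and $h'(v)=\frac{1}{1+v}-\frac{4}{(2+v)^2}\ge 0$). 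Exponentiating gives the claimed bound. A key point to flag is that $\ell_{t,i}$ may depend on $\bfx_t$ in a way correlated with $\ind[\bfx_t[i]=1]$ (e.g.\ $\ell_{t,i}=1+\ell_t$); this causes no trouble because the inequality above is pointwise in the realization, so it survives conditional expectation regardless of such dependence.

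Taking $\bbE_t[\cdot]$ and using $1+x\le e^x$ then yields $\bbE_t\!\big[\exp(\beta_i\widehat{\bfy}_t[i])\big]\le 1+\beta_i\bbE_t[\tilde{\bfy}_t[i]]\le\exp\!\big(\beta_i\bbE_t[\tilde{\bfy}_t[i]]\big)$, i.e.\ $\bbE_t\!\big[\exp\!\big(\beta_i(\widehat{\bfy}_t[i]-\bbE_t[\tilde{\bfy}_t[i]])\big)\big]\le 1$. Consequently $M^{(i)}_t:=\exp\!\big(\beta_i\sum_{s=1}^{t}(\widehat{\bfy}_s[i]-\bbE_s[\tilde{\bfy}_s[i]])\big)$ is a nonnegative supermartingale with $\bbE[M^{(i)}_t]\le\bbE[M^{(i)}_{t-1}]\le\cdots\le 1$. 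Markov's inequality gives $\bbP\!\big[M^{(i)}_T\ge d/\delta_0\big]\le\delta_0/d$, which rearranges to
\[
\bbP\!\Big[\textstyle\sum_{t=1}^T\big(\widehat{\bfy}_t[i]-\bbE_t[\tilde{\bfy}_t[i]]\big)\ge\tfrac{\log(d/\delta_0)}{\beta_i}\Big]\le\frac{\delta_0}{d}.
\]
A union bound over the $d$ coordinates then yields $\bbP[\calE_3]\ge 1-\delta_0$.

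Finally, the ``furthermore'' claim is immediate: conditioning on $\calE_3$ and using linearity together with $\bfx[i]\ge 0$ for every $\bfx\in\calX\subseteq\{0,1\}^d$,
\[
\sum_{t=1}^T\langle\bfx,\widehat{\bfy}_t-\bbE_t[\tilde{\bfy}_t]\rangle=\sum_{i=1}^{d}\bfx[i]\sum_{t=1}^{T}\big(\widehat{\bfy}_t[i]-\bbE_t[\tilde{\bfy}_t[i]]\big)\le\sum_{i=1}^{d}\bfx[i]\cdot\frac{\log(d/\delta_0)}{\beta_i}.
\]
I expect the only genuinely delicate part to be pinning down the pointwise inequality with exactly the constant $\beta_i=2\gamma_i/b$ — in particular verifying $\frac{v}{1+v/2}\le\log(1+v)$ and making sure the substitution $\gamma_i/p_t\ge\tfrac12\beta_i\tilde{\bfy}_t[i]$ is used in the right place; once that is in hand, the supermartingale/Markov/union-bound skeleton is routine.
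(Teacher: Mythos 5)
Your proposal is correct and follows essentially the same route as the paper: the per-coordinate pointwise bound $\beta_i\widehat{\bfy}_t[i]\le \frac{v}{1+v/2}\le \ln(1+v)$ with $v=\beta_i\tilde{\bfy}_t[i]$ (you bound $\gamma_i/p_t\ge\tfrac12\beta_i\tilde{\bfy}_t[i]$ directly, whereas the paper inserts $(\gamma_i/b)\ell_{t,i}$ and the indicator into the denominator — the same algebra, reorganized), followed by the identical supermartingale--Markov--union-bound argument and the linearity step for the final claim.
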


\begin{proof}
    Fix $i\in \llbracket d\rrbracket$. First we have the following:
    \begin{align*}
        \widehat{\bfy}_t[i]&=\frac{\bfx_t[i]\cdot \ell_{t,i}}{\tilde \bfx_t[i]+\gamma_i}\\
        &\leq \frac{\bfx_t[i]\cdot \ell_{t,i}}{\tilde \bfx_t[i]+(\gamma_i/b)\cdot \ell_{t,i}}\tag{as $\ell_{t,i}\in [0,b]$}\\
        & =\frac{1}{\beta_i}\cdot \frac{\beta_i\cdot\bfx_t[i]\cdot \ell_{t,i}}{\tilde \bfx_t[i]+(\beta_i/2) \cdot \ell_{t,i}} \tag{as $\beta_i=\frac{2\gamma_i}{b}$}\\
        & \leq \frac{1}{\beta_i}\cdot \frac{\beta_i\cdot\bfx_t[i]\cdot \ell_{t,i}}{\tilde \bfx_t[i]+(\beta_i/2)\cdot \bfx_t[i]\cdot \ell_{t,i}}\tag{as $\bfx_t[i]\in\{0,1\}$}\\
        & =\frac{1}{\beta_i}\cdot \frac{\beta_i\cdot\tilde \bfy_t[i]}{1+ (\beta_i/2)\cdot \tilde \bfy_t[i]}\tag{as $\tilde \bfy_t[i]=\bfx_t[i]\cdot \ell_{t,i}/\tilde \bfx_t[i]$}\\
        &\leq \frac{1}{\beta_i}\cdot \ln{(1+\beta_i\cdot \tilde \bfy_t[i])}\tag{as $\frac{z}{1+z/2}\leq \ln{(1+z)}$ for all $z\geq 0$}\\
    \end{align*}
    
    Next, we have the following:
    \begin{align*}
        \bbE_t[\exp(\beta_i\widehat{\bfy}_t[i])]&\leq \bbE_t[(1+\beta_i\tilde{\bfy}_t[i])]\\
        &= 1+\beta_i \bbE_t[\tilde \bfy_t[i]]\\
        &\leq \exp(\beta_i \bbE_t[\tilde \bfy_t[i]]) \tag{as $1+z\leq\exp(z)$ for all $z\in\mathbb{R}$}
    \end{align*}
    
    Hence, the process $Z_0=1$ and $Z_t=\exp(\beta_i\sum_{\tau=1}^t(\widehat{\bfy}_\tau[i]-\bbE_t[\tilde \bfy_\tau[i]]))$ for all $t\geq 1$ is a supermartingale with respect to $(\calF_t)$ as $\mathbb{E}_t[Z_t]\leq Z_{t-1}$. Hence, we have $\mathbb{E}[Z_{t}]\leq\mathbb{E}[Z_{t-1}]\leq\ldots \leq 1$. Therefore, by Markov inequality we have,
    \begin{equation*}
        \mathbb{P}\left[\sum_{t=1}^T\widehat{\bfy}_t[i]-\bbE_t[\tilde \bfy_t[i]]>\frac{\log(d/\delta_0)}{\beta_i}\right]\leq \mathbb{E}\left[\exp\left(\beta_i\cdot\sum_{t=1}^T(\widehat{\bfy}_t[i]-\bbE_t[\tilde \bfy_t[i]])\right)\right]\cdot\exp\left(\log(d/\delta_0)\right)\leq \frac{\delta_0}{d}
    \end{equation*}
    By union bound over $i \in \llbracket d \rrbracket$, we get that the event $\calE_3$ holds with probability at least $1-\delta_0$.

    Finally, under event $\calE_3$,
    \begin{equation*}
        \sum_{t=1}^T\langle \bfx,\widehat{\bfy}_t-\bbE_t[\tilde{\bfy}_t]\rangle=\sum_{i=1}^d\bfx[i]\cdot\sum_{t=1}^T(\widehat{\bfy}_t[i]-\bbE_t[\tilde{\bfy}_t[i]])\leq \sum_{i=1}^d \bfx[i]\cdot\frac{\log(d/\delta_0)}{\beta_i}
    \end{equation*}
\end{proof}

\begin{lemma}
    Given Bregman divergence $\calD_F(p,q):=F(p)-F(q)-\langle \nabla F(q),p-q\rangle$, let 
    \[\mathtt{VAR}_t:=\langle \tilde \bfx_t-\tilde \bfx_{t+1},\widehat{\bfy}_t^+\rangle-\frac{1}{\eta}\cdot\calD_F(\tilde \bfx_{t+1},\tilde \bfx_t).\]
    Denote by $\calE_4$ the event that \[
    \sum_{t=1}^T\mathtt{VAR}_t\leq \sum_{t=1}^T\mathbb{E}_t\Big[\frac{\eta}{2}||\widehat{\bfy}_t^+||^2_{(\nabla^2F(\bfz_t))^{-1}}\Big]+b\cdot m \cdot\sqrt{2T\log(1/\delta_0)}
    \]
    where $\widehat{\bfy}_t^+\in\mathbb{R}^d$ is a vector defined as 
    \[
    \widehat{\bfy}_t^+[i]:=\widehat{\bfy}_t[i]\cdot\ind[\tilde\bfx_{t+1}[i]\leq\tilde \bfx_t[i]].
    \]
    It satisfies that $\bbP[\calE_4] \geq 1 - \delta_0$.
\end{lemma}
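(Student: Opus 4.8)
The plan is to convert the deterministic per-round inequality supplied by Lemma~\ref{technical-lem-hessian} into a statement about a \emph{bounded} adapted process and then apply the martingale concentration bound of Lemma~\ref{feigel-lem}. First I would apply Lemma~\ref{technical-lem-hessian} for each $t$ with $f=F$, $x=\tilde\bfx_{t+1}$, $y=\tilde\bfx_t$, and $u=-\widehat\bfy_t^+$. With this choice, $D_f(x,y)$ equals $\calD_F(\tilde\bfx_{t+1},\tilde\bfx_t)$, $\langle x-y,u\rangle$ equals $\langle\tilde\bfx_t-\tilde\bfx_{t+1},\widehat\bfy_t^+\rangle$, and $\|u\|^2_{(\nabla^2F(z))^{-1}}=\|\widehat\bfy_t^+\|^2_{(\nabla^2F(z))^{-1}}$, so the lemma yields a point $\bfz_t\in[\tilde\bfx_t,\tilde\bfx_{t+1}]$ with $\mathtt{VAR}_t\le\tfrac{\eta}{2}\|\widehat\bfy_t^+\|^2_{(\nabla^2F(\bfz_t))^{-1}}=:u_t$. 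Its hypotheses are met because, by the standing assumption on $F$ in this framework, $\nabla^2F$ is diagonal with strictly positive diagonal entries (hence positive definite) along the chord $[\tilde\bfx_t,\tilde\bfx_{t+1}]$; fixing a measurable selection among the points guaranteed by the lemma makes $\bfz_t$ and $u_t$ both $\calF_t$-measurable, and this is the $\bfz_t$ that appears in $\calE_4$.

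Next I would establish the crude deterministic bound $\mathtt{VAR}_t\le b\cdot m$: since $\calD_F\ge0$ we have $\mathtt{VAR}_t\le\langle\tilde\bfx_t-\tilde\bfx_{t+1},\widehat\bfy_t^+\rangle$, and on each coordinate $i$ with $\widehat\bfy_t^+[i]\neq0$ the definition of $\widehat\bfy_t^+$ forces $\tilde\bfx_{t+1}[i]\le\tilde\bfx_t[i]$, so $0\le\tilde\bfx_t[i]-\tilde\bfx_{t+1}[i]\le\tilde\bfx_t[i]$ while $\widehat\bfy_t^+[i]=\widehat\bfy_t[i]\ge0$; hence $\langle\tilde\bfx_t-\tilde\bfx_{t+1},\widehat\bfy_t^+\rangle\le\langle\tilde\bfx_t,\widehat\bfy_t\rangle\le b\cdot m$, the last step being the bound $\langle\tilde\bfx_t,\widehat\bfy_t\rangle\in[0,b\cdot m]$ already used for $\calE_2$ (which itself follows from $\tilde\bfx_t[i]\widehat\bfy_t[i]\le\bfx_t[i]\ell_{t,i}$ together with $\|\bfx_t\|_1\le m$ and $\ell_{t,i}\le b$). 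Now set $v_t:=\min\{u_t,b\cdot m\}$, which is $\calF_t$-measurable and satisfies $0\le v_t\le b\cdot m$; Lemma~\ref{feigel-lem} then gives, with probability at least $1-\delta_0$, that $\sum_{t=1}^T(v_t-\mathbb{E}_t[v_t])\le b\cdot m\sqrt{2T\log(1/\delta_0)}$. Combining the previous two steps gives $\mathtt{VAR}_t\le\min\{u_t,b\cdot m\}=v_t$, and $v_t\le u_t$ gives $\mathbb{E}_t[v_t]\le\mathbb{E}_t[u_t]$, so on this event
\[
\sum_{t=1}^T\mathtt{VAR}_t\le\sum_{t=1}^T v_t\le\sum_{t=1}^T\mathbb{E}_t[v_t]+b\cdot m\sqrt{2T\log(1/\delta_0)}\le\sum_{t=1}^T\mathbb{E}_t\!\left[\tfrac{\eta}{2}\|\widehat\bfy_t^+\|^2_{(\nabla^2F(\bfz_t))^{-1}}\right]+b\cdot m\sqrt{2T\log(1/\delta_0)},
\]
which is precisely the event $\calE_4$, proving $\bbP[\calE_4]\ge1-\delta_0$.

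The one genuinely delicate point is the truncation: one cannot feed $u_t$ directly into Lemma~\ref{feigel-lem}, because $(\nabla^2F(\bfz_t))^{-1}$ admits no useful uniform upper bound (its diagonal entries grow where $\tilde\bfx_t$ is large), so the process must first be truncated at $b\cdot m$ — which loses nothing since $\mathtt{VAR}_t\le b\cdot m$ by the second step — and only afterwards relaxed back up to $u_t$ inside the conditional expectation. The remaining care is purely bookkeeping: getting the arguments of the Bregman divergence and the sign of $u$ correct when invoking Lemma~\ref{technical-lem-hessian}, and the measurable selection of $\bfz_t$.
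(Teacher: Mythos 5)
Your proposal is correct and follows essentially the same route as the paper: a deterministic bound $\mathtt{VAR}_t\le b\cdot m$ via $\calD_F\ge 0$ and $\langle\tilde\bfx_t,\widehat\bfy_t\rangle\le b\cdot m$, the concentration bound of Lemma~\ref{feigel-lem} applied to a $[0,b\cdot m]$-valued surrogate, and Lemma~\ref{technical-lem-hessian} to dominate the conditional expectation. The only (immaterial) difference is which surrogate you feed into the concentration step — you truncate the Hessian upper bound as $\min\{u_t,b\cdot m\}$, whereas the paper uses the positive part $\mathtt{VAR}_t^+=\max\{\mathtt{VAR}_t,0\}$ — and your extra care about the sign choice in Lemma~\ref{technical-lem-hessian} and the measurable selection of $\bfz_t$ is a welcome refinement rather than a deviation.
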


\begin{proof}
    Let $\mathtt{VAR}_t^+:=\max\{\mathtt{VAR}_t,0\}$.
    Since $\calD_F(\tilde \bfx_{t+1},\tilde \bfx_t)\geq 0$ and $\widehat{\bfy}_t^+[i]\geq 0$ for all $i\in\llbracket d\rrbracket$, we obtain the following:
    \begin{equation*}
        \mathtt{VAR}_t^+\leq \langle \tilde\bfx_t,\widehat{\bfy}_t^+\rangle \leq \langle\tilde\bfx_t,\widehat{\bfy}_t\rangle \leq b\cdot m. 
    \end{equation*}
    According to Lemma~\ref{feigel-lem}, with probability at least $1-\delta_0$, we have
    \begin{equation*}
        \sum_{t=1}^T\mathtt{VAR}_t^+\leq \sum_{t=1}^T\mathbb{E}_t[\mathtt{VAR}_t^+]+ b\cdot m \cdot\sqrt{2T\log(1/\delta_0)}.
    \end{equation*}
    
    Next, due to Lemma \ref{technical-lem-hessian}, we have $\mathtt{VAR}_t\leq \frac{\eta}{2}||\widehat{\bfy}_t^+||^2_{(\nabla^2F(\bfz_t))^{-1}}$, where $\bfz_t$ is some point on the chord $[\tilde \bfx_t,\tilde \bfx_{t+1}]$. Since $\frac{\eta}{2}||\widehat{\bfy}_t^+||^2_{(\nabla^2F(\bfz_t))^{-1}}\geq 0$, it follows that $\mathtt{VAR}_t^+\leq\frac{\eta}{2}||\widehat{\bfy}_t^+||^2_{(\nabla^2F(\bfz_t))^{-1}}$. 
    
    Since $\mathtt{VAR}_t\leq \mathtt{VAR}_t^+$ for all $t\in\llbracket T\rrbracket$, the event $\calE_4$ holds with probability at least $1-\delta_0$.
\end{proof}

Let $\calE:=\bigcup\limits_{i=1}^4 \calE_i$ be our good event. Due to union bound, the event $\calE$ holds with probability at least $1-5\delta_0$. Let us assume that the good event $\calE$ holds and let us fix $\bfx\in \calX$. First we have the following:
\begin{align*}
    R_T(\bfx)&=\sum_{t=1}^T\langle \bfx_t-\bfx,\bfy_t\rangle\\
    &\leq \sum_{t=1}^T\langle \tilde\bfx_t-\bfx,\bfy_t\rangle+\sqrt{8T\log(1/\delta_0)}\tag{as event $\calE_1$ holds}\\
    &= \sum_{t=1}^T\langle \tilde\bfx_t-\bfx,\bbE_t[\tilde{\bfy}_t]\rangle+\sqrt{8T\log(1/\delta_0)}\\
    &= \sum_{t=1}^T\langle \tilde \bfx_t-\bfx,\widehat{\bfy}_t\rangle+\sum_{t=1}^T\langle \tilde\bfx_t,\mathbb{E}_t[\widehat{\bfy}_t]-\widehat{\bfy}_t\rangle+\sum_{t=1}^T\langle \tilde \bfx_t,\bbE_t[\tilde{\bfy}_t]-\mathbb{E}_t[\widehat{\bfy}_t]\rangle\\
    &\qquad +\sum_{t=1}^T\langle \bfx,\widehat{\bfy}_t-\bbE_t[\tilde{\bfy}_t]\rangle+\sqrt{8T\log(1/\delta_0)}\\
    &\leq \sum_{t=1}^T\langle \tilde \bfx_t-\bfx,\widehat{\bfy}_t\rangle+\sqrt{8T\log(1/\delta_0)}+b\cdot m\sqrt{2T\log(1/\delta_0)}\\
    &\qquad +b\cdot T\sum_{i=1}^d\gamma_i+b\cdot \sum_{i=1}^d\bfx[i]\cdot \frac{\log(d/\delta_0)}{2\gamma_i} \tag{as events $\calE_2,\calE_3$ hold}
\end{align*}

Observe that if $\gamma_i=\gamma$ for all $i\in\llbracket[d\rrbracket]$, we also get the following as $||x||_1\leq m$:
\begin{equation*}
    R_T(\bfx)\leq \sum_{t=1}^T\langle \tilde \bfx_t-\bfx,\widehat{\bfy}_t\rangle+\sqrt{8T\log(1/\delta_0)}+b\cdot m\sqrt{2T\log(1/\delta_0)}+b\cdot T\cdot d\cdot \gamma+b\cdot m\cdot \frac{\log(d/\delta_0)}{2\gamma}
\end{equation*}

As $\tilde\bfx_t$ is the solution to our FTRL equation above, we get the following using the standard FTRL analysis from \citet{lattimore2020bandit} and the fact that event $\calE_4$ holds:
\begin{equation*}
    \sum_{t=1}^T\langle \tilde \bfx_t-\bfx,\widehat{\bfy}_t\rangle\leq \frac{\text{diam}_F}{\eta}+\sum_{t=1}^T\mathtt{VAR}_t\leq \frac{\text{diam}_F}{\eta}+\sum_{t=1}^T\mathbb{E}_t\left[\frac{\eta}{2}||\widehat{\bfy}_t^+||^2_{(\nabla^2F(\bfz_t))^{-1}}\right]+b\cdot m \cdot\sqrt{2T\log(1/\delta_0)}
\end{equation*}
where $\bfz_t$ is some point on the chord $[\tilde\bfx_t,\tilde \bfx_{t+1}]$ and $\text{diam}_F:=\max_{\bfx,\bfx'\in\operatorname{co}(\calX)}F(\bfx)-F(\bfx')$. Now we have the following theorem under our assumptions.
\begin{theorem}\label{appendix:ftrl-thm}
    Let $\delta_0=\frac{\delta}{5}$ and $\gamma_i=\sqrt{\frac{m\log(5d/\delta)}{dT}}$ for all $i\in \llbracket d\rrbracket$.  Given an FTRL algorithm satisfying the assumptions of this section, we have the following guarantee on regret with probability at least $1-\delta$ against any adaptive adversary:
    \begin{equation*}
        \max_{\bfx\in \calX}R_T(\bfx)\leq \frac{\text{diam}_F}{\eta}+\sum_{t=1}^T\mathbb{E}_t\left[\frac{\eta}{2}||\widehat{\bfy}_t^+||^2_{(\nabla^2F(\bfz_t))^{-1}}\right]+c\cdot b\cdot \sqrt{mdT\log(d/\delta)}
    \end{equation*}
    where $c$ is some absolute constant and $\bfz_t$ is some point on the chord $[\tilde\bfx_t,\tilde \bfx_{t+1}]$.
\end{theorem}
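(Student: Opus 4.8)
The statement is the terminus of the chain of inequalities already assembled in this section, so the plan is essentially one of collection and bookkeeping. I would (i) intersect the good events and union-bound; (ii) decompose the regret of a fixed comparator into the pieces that the preceding lemmas each control; (iii) handle the leading FTRL term by the textbook stability argument together with event $\calE_4$; and (iv) substitute the prescribed $\gamma_i$ and $\delta_0$ and simplify into the advertised form. For (i): put $\calE := \calE_1 \cap \calE_2 \cap \calE_3 \cap \calE_4$; each $\calE_i$ fails with probability at most $\delta_0$ (the deterministic third lemma contributes no failure probability), so $\bbP[\calE] \ge 1-5\delta_0 = 1-\delta$ once $\delta_0 = \delta/5$. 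From now on condition on $\calE$ and fix an arbitrary $\bfx \in \calX$.

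The algebraic heart is the decomposition
\[
R_T(\bfx) = \sum_{t=1}^T \langle \bfx_t - \bfx, \bfy_t\rangle \le \sum_{t=1}^T \langle \tilde\bfx_t - \bfx, \bfy_t\rangle + \sqrt{8T\log(1/\delta_0)} = \sum_{t=1}^T \langle \tilde\bfx_t - \bfx, \bbE_t[\tilde\bfy_t]\rangle + \sqrt{8T\log(1/\delta_0)},
\]
where the inequality is event $\calE_1$ and the last equality uses that $\tilde\bfx_t$ is $\calF_{t-1}$-measurable together with relative unbiasedness (writing $\tilde\bfx_t$ as an $\calF_{t-1}$-measurable convex combination of elements of $\calX$ and applying $\bbE_t[\langle \bfx' - \bfx'', \tilde\bfy_t\rangle] = \langle \bfx' - \bfx'', \bfy_t\rangle$ coordinatewise gives $\langle \tilde\bfx_t - \bfx, \bbE_t[\tilde\bfy_t]\rangle = \langle \tilde\bfx_t - \bfx, \bfy_t\rangle$). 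I would then insert and cancel $\hat\bfy_t$ to split $\langle \tilde\bfx_t - \bfx, \bbE_t[\tilde\bfy_t]\rangle$ into exactly the four summands of the displayed chain before the theorem — $\langle \tilde\bfx_t - \bfx, \hat\bfy_t\rangle$, $\langle \tilde\bfx_t, \bbE_t[\hat\bfy_t] - \hat\bfy_t\rangle$, $\langle \tilde\bfx_t, \bbE_t[\tilde\bfy_t] - \bbE_t[\hat\bfy_t]\rangle$, and $\langle \bfx, \hat\bfy_t - \bbE_t[\tilde\bfy_t]\rangle$ — whose time-sums are bounded respectively by the FTRL analysis (next paragraph), event $\calE_2$, the deterministic lemma ($\le bT\sum_i \gamma_i$), and event $\calE_3$ ($\le b\sum_i \bfx[i]\,\log(d/\delta_0)/(2\gamma_i)$ after substituting $\beta_i = 2\gamma_i/b$).

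For the leading term $\sum_t \langle \tilde\bfx_t - \bfx, \hat\bfy_t\rangle$ I would invoke the standard FTRL stability bound of \citet{lattimore2020bandit}: summing and telescoping the per-round inequality $\langle \tilde\bfx_t - \bfx, \hat\bfy_t\rangle \le \tfrac{1}{\eta}(\calD_F(\bfx, \tilde\bfx_t) - \calD_F(\bfx, \tilde\bfx_{t+1})) + \langle \tilde\bfx_t - \tilde\bfx_{t+1}, \hat\bfy_t\rangle - \tfrac{1}{\eta}\calD_F(\tilde\bfx_{t+1}, \tilde\bfx_t)$ yields $\tfrac{\text{diam}_F}{\eta} + \sum_t \mathtt{VAR}_t$, where replacing $\hat\bfy_t$ by $\hat\bfy_t^+$ inside $\mathtt{VAR}_t$ is valid because $\hat\bfy_t[i] \ge 0$ forces the coordinates with $\tilde\bfx_{t+1}[i] > \tilde\bfx_t[i]$ to contribute nonpositively and hence to be droppable. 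Event $\calE_4$ then upgrades $\sum_t \mathtt{VAR}_t$ to $\sum_t \bbE_t[\tfrac{\eta}{2}\|\hat\bfy_t^+\|^2_{(\nabla^2 F(\bfz_t))^{-1}}] + bm\sqrt{2T\log(1/\delta_0)}$, which already reproduces the first two terms of the claim plus part of the error term.

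It remains to substitute $\gamma_i = \gamma := \sqrt{m\log(5d/\delta)/(dT)}$ and $\delta_0 = \delta/5$ and tidy up. Using $\|\bfx\|_1 \le m$, the $\calE_3$ term becomes $\le \tfrac{b}{2}\sqrt{mdT\log(5d/\delta)}$ and the deterministic term is $bTd\gamma = b\sqrt{mdT\log(5d/\delta)}$ — this $\gamma$ is precisely the AM--GM balance point of those two. Every remaining error contribution has the form $bm\sqrt{T\log(1/\delta_0)}$ or $\sqrt{T\log(1/\delta_0)}$, and since $1 \le m \le d$ for $\calX \subseteq \{0,1\}^d$, each is at most an absolute constant times $b\sqrt{mdT\log(d/\delta)}$; absorbing $\log(5d/\delta) \le 3\log(d/\delta)$ (valid for $d/\delta \ge 2$, the only regime of interest) into $c$ collapses everything into $c\cdot b\cdot \sqrt{mdT\log(d/\delta)}$. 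Finally, none of the surviving error terms depends on $\bfx$ — the $\calE_3$/deterministic terms used only $\|\bfx\|_1 \le m$, and $\calD_F(\bfx, \tilde\bfx_1) \le \text{diam}_F$ — so the bound holds uniformly, i.e.\ for $\max_{\bfx \in \calX} R_T(\bfx)$. The only parts needing genuine care are making the four-way split line up term-for-term with the lemmas and justifying the $\hat\bfy_t^+$ substitution; after that it is AM--GM and constant-chasing.
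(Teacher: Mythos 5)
Your proposal is correct and follows essentially the same route as the paper: intersect the four good events via a union bound, split $\sum_t\langle\tilde\bfx_t-\bfx,\bbE_t[\tilde\bfy_t]\rangle$ into exactly the four summands controlled by the preceding lemmas, bound the leading term by the standard FTRL telescoping plus event $\calE_4$, and then balance the $bTd\gamma$ and $bm\log(d/\delta_0)/(2\gamma)$ terms with the prescribed $\gamma$. The only additions you make beyond the paper's write-up are explicit justifications it leaves implicit (the convex-combination argument for replacing $\bfy_t$ by $\bbE_t[\tilde\bfy_t]$, and the nonpositivity argument for dropping coordinates when passing to $\widehat\bfy_t^+$), both of which are right.
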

\section{DAGs: Additional Details}

\subsection{Omitted Details from \Cref{sec:alg-equal}}\label{appendix:alg-equal}
\begin{algorithm2e}[!ht] \label{alg:equal-path-length}
\caption{FTRL for online shortest path in a DAG $G=(V,E)$ with equal path lengths}
Let $K$ be the length of every path in the DAG $G$ from source to sink.

\For{$t = 1$ \KwTo $T$} {

Compute $\tilde{\bfx}_{t}\gets \arg\min_{\bfx \in \mathrm{co}(\calX)} \left( \eta \sum_{\tau=1}^{t-1} \langle \bfx, \widehat{\bfy}_\tau \rangle + F(\bfx) \right)$.

Initialize path $P_t \leftarrow (\source)$ and reset $v_0 \leftarrow \source$.

\For{$i = 1$ \KwTo $K$} {
    Sample outgoing edge $e_i = (v_{i-1}, v_i) \in \delta^+(v_{i-1})$ with probability proportional to $\tilde \bfx_{t}[e_i]$.

    Update $P_t \leftarrow P_t \circ (e_i, v_i)$
}

Choose the path $P_t$ and observe loss its $\ell_{t}$.

Determine $\bfx_{t} \in \calX$ corresponding to the path $P_t$ and construct the loss estimator $\hat\bfy_t$.

}

\end{algorithm2e}
First, we prove the following proposition.
\begin{proposition}\label{prop:expectation-bfxt}
    $\mathbb{E}_t[\bfx_t]=\tilde \bfx_t$
\end{proposition}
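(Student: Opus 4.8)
The plan is to prove the stronger coordinatewise statement: $\bbP_t[v\in P_t]=\tilde\bfx_t[v]$ for every vertex $v\in V$ and $\bbP_t[e\in P_t]=\tilde\bfx_t[e]$ for every edge $e\in E$. Since $\bfx_t$ is by construction the $\{0,1\}$-indicator vector of the sampled path $P_t$ over the coordinate set $V\cup E$ (so $\bfx_t[i]=\ind[i\in P_t]$), this immediately yields $\bbE_t[\bfx_t[i]]=\bbP_t[i\in P_t]=\tilde\bfx_t[i]$, i.e. $\bbE_t[\bfx_t]=\tilde\bfx_t$. So the whole task reduces to tracking the probability that the forward random walk hits a given vertex or edge.

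I would argue by induction along a topological ordering $\source=u_0,u_1,\dots,u_n=\sink$ of $V$, which exists because $G$ is a DAG (and $\source$ may be taken first, $\sink$ last, by the standing reachability assumption). The inductive hypothesis is that $\bbP_t[u_i\in P_t]=\tilde\bfx_t[u_i]$ for all $i<j$; the base case $\bbP_t[\source\in P_t]=1=\tilde\bfx_t[\source]$ holds since the walk always starts at $\source$. For the step at $u_j$: each incoming edge $e=(u_i,u_j)\in\delta^-(u_j)$ has tail index $i<j$, and the event $\{e\in P_t\}$ is exactly the event that the walk reaches $u_i$ and, upon doing so, selects $e$ out of $\delta^+(u_i)$ — this identification uses that the sampling is Markovian (the choice at $u_i$ depends only on the current vertex, not on how it was reached) and that a path in a DAG visits $u_i$ at most once. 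Since $e$ is chosen with probability $\tilde\bfx_t[e]\big/\sum_{e'\in\delta^+(u_i)}\tilde\bfx_t[e']$, and $\sum_{e'\in\delta^+(u_i)}\tilde\bfx_t[e']=\tilde\bfx_t[u_i]$ by the flow-conservation constraints defining $\mathrm{co}(\calX)$, the inductive hypothesis gives $\bbP_t[e\in P_t]=\bbP_t[u_i\in P_t]\cdot\tilde\bfx_t[e]/\tilde\bfx_t[u_i]=\tilde\bfx_t[e]$. Finally, $u_j$ lies on $P_t$ iff $P_t$ uses exactly one of its incoming edges, and these are mutually exclusive events, so $\bbP_t[u_j\in P_t]=\sum_{e\in\delta^-(u_j)}\bbP_t[e\in P_t]=\sum_{e\in\delta^-(u_j)}\tilde\bfx_t[e]=\tilde\bfx_t[u_j]$, again by flow conservation. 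Every edge is the incoming edge of some vertex, so the edge identities are covered as well, and the induction closes.

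The main obstacle is purely bookkeeping rather than conceptual. Two degenerate points need care: (i) vertices $u_i$ with $\tilde\bfx_t[u_i]=0$, where the conditional edge probability reads $0/0$ — but then $0\le\tilde\bfx_t[e]\le\tilde\bfx_t[u_i]=0$ and, by the inductive hypothesis, the walk reaches $u_i$ with probability $0$, so the contribution is consistently $0$ (and in fact, since the Tsallis-$1/2$ regularizer is Legendre, $\tilde\bfx_t$ lies in the relative interior of $\mathrm{co}(\calX)$ and this case never arises); and (ii) the flow-conservation identities at $\source$ and $\sink$, which must be read as $\tilde\bfx_t[\source]=\sum_{e\in\delta^+(\source)}\tilde\bfx_t[e]=1$ and $\tilde\bfx_t[\sink]=\sum_{e\in\delta^-(\sink)}\tilde\bfx_t[e]=1$, with $\delta^-(\source)=\delta^+(\sink)=\emptyset$. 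I would also note that the equal-path-length assumption of this subsection is not used in this proposition — it only forces the walk to reach $\sink$ after exactly $K$ steps, which already follows from acyclicity together with the reachability assumption — so Proposition~\ref{prop:expectation-bfxt} in fact holds for the general sampling scheme and can be reused verbatim in the later subsections.
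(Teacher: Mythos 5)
Your proof is correct and follows essentially the same route as the paper's: an induction along a topological ordering of the DAG, using the flow-conservation constraints of $\mathrm{co}(\calX)$ together with the Markovian edge-sampling rule to propagate $\bbP_t[\cdot]=\tilde\bfx_t[\cdot]$ from each vertex to its outgoing edges and then to later vertices. The only difference is cosmetic bookkeeping (you establish edge marginals from the tail vertex and then vertex marginals from incoming edges, while the paper orders these two steps the other way), plus your explicit handling of the $0/0$ degenerate case, which the paper leaves implicit.
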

\begin{proof}
    Let $v_1,v_2,\ldots,v_{|V|}$ be the topological order of the vertices in the DAG $G$, where $v_1=\source$ and $v_{|V|}=\sink$.
    We now prove our proposition using mathematical induction. Let $P(i)$ be the statement that for all $j\in \llbracket i\rrbracket$, we have $\mathbb{E}_t[\bfx_t[v_j]]=\tilde \bfx_t[v_j]$ and $\mathbb{E}_t[\bfx_t[e]]=\tilde \bfx_t[e]$ for any outgoing edge $e$ from $v_j$.

    Consider the base case of $i=1$. First, observe that $\mathbb{E}_t[\bfx_t[v_1]]=1=\tilde\bfx_t[v_1]$. Next, due to our sampling procedure, we have $$\mathbb{E}_t[\bfx_t[e]]=\frac{\tilde\bfx_t[e]}{\sum_{e'\in \delta^{+}(v_1)}\tilde\bfx_t[e]}=\tilde \bfx_t[e]$$ for any outgoing edge $e$ from $v_1$. Hence, $P(1)$ is true.

    Next, let us make the inductive hypothesis that $P(k)$ is true. Now we show that $P(k+1)$ is true.  First, observe that 
    \begin{align*}
        \mathbb{E}_t[\bfx_t[v_{k+1}]]&=\mathbb{E}_t\Big[\sum_{e\in\delta^{-}(v_{k+1})}\bfx_t[e]\Big]\\
        &=\sum_{e\in\delta^{-}(v_{k+1})}\tilde \bfx_t[e]\tag{due to inductive hypothesis}\\
        &=\tilde \bfx_t[v_{k+1}]\tag{due to flow constraints}
    \end{align*}
    Next, observe that for any outgoing edge $e$ from $v_{k+1}$, we have:
    \begin{align*}
        \mathbb{E}_t[\bfx_t[e]]&=\mathbb{E}_t[\bfx_t[e] \mid \bfx_t[v_{k+1}]=1]\cdot \mathbb{P}_t[\bfx_t[v_{k+1}]=1]\\
        &=\frac{\tilde\bfx_t[e]}{\sum_{e'\in\delta^{+}(v_{k+1})}\tilde\bfx_[e']}\cdot \tilde\bfx_t[v_{k+1}]\tag{due to our sampling procedure}\\
        &=\frac{\tilde\bfx_t[e]}{\tilde\bfx_t[v_{k+1}]}\cdot \tilde\bfx_t[v_{k+1}]\tag{due to flow constraints}\\
        &=\tilde\bfx_t[e]
    \end{align*}
    Hence, $P(k+1)$ is true. Hence, due to principle of mathematical induction, we have $\mathbb{E}_t[\bfx_t]=\tilde \bfx_t$.
\end{proof}

Next, recall that if all the paths have equal length of $K$, then we have $\mathbb{E}_t[\langle \bfx-\bfx',\tilde\bfy_t\rangle]=\langle \bfx-\bfx',\bfy_t\rangle$ for all $\bfx,\bfx'\in\calX$. If we use $F(\bfx)=-\sum_{v\in V}\sqrt{\bfx[v]}-\sum_{e\in E}\sqrt{\bfx[e]}$ as our regularizer, it easily follows that \Cref{eq:optimization1} has a unique minimizer $\tilde\bfx_t$ and $\tilde\bfx_t[v]>0$ for all $v\in V$ and $\tilde\bfx_t[e]>0$ for all $e\in E$. Therefore,  $\nabla^2F(z)$ is a diagonal matrix for any point $z$ on the chord $[\tilde x_t,\tilde x_{t+1}]$. Let $\hat\bfy_t^+\in\mathbb{R}^{V\cup E}_{\geq 0}$ be a vector indexed by the elements in $V\cup E$ such that $\hat\bfy_t^+[v]=\hat\bfy_t[v]\cdot\ind[\tilde \bfx_{t+1}[v]\leq\tilde\bfx_t[v]]$ for all $v\in V$ and $\hat\bfy_t^+[e]=\hat\bfy_t[e]\cdot\ind[\tilde \bfx_{t+1}[e]\leq\tilde\bfx_t[e]]$ for all $e\in E$. Let us set every entry of $\boldsymbol\gamma$ to $\sqrt{\frac{K\log(5(|V|+|E|)/\delta)}{|E|T}}$. Now we apply Lemma \ref{appendix:ftrl-thm} from \Cref{appendix:ftrl-framework} to get the following:

\begin{theorem}\label{main:generic-thm}
    Algorithm \ref{alg:equal-path-length} has the following guarantee on regret with probability at least $1-\delta$ against any adaptive adversary:
    \begin{equation*}
        \max_{\bfx\in \calX}R_T(\bfx)\leq \frac{\operatorname{diam}_F(\operatorname{co}(\calX))}{\eta}+\sum_{t=1}^T\mathbb{E}_t\left[\frac{\eta}{2}||\widehat{\bfy}_t^+||^2_{(\nabla^2F(z_t))^{-1}}\right]+c\cdot \sqrt{K|E|T\log(|E|/\delta)}
    \end{equation*}
    where $c$ is some absolute constant, $\operatorname{diam}_F(\operatorname{co}(\calX)):=\max_{\bfx,\bfx'\in \operatorname{co}(\calX)}F(\bfx)-F(\bfx')$, and $z_t$ is some point on the chord $[\tilde\bfx_t,\tilde \bfx_{t+1}]$.
\end{theorem}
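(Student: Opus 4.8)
\textbf{Proof proposal for Theorem~\ref{main:generic-thm}.}

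The plan is to instantiate the general FTRL-with-implicit-exploration framework of \Cref{appendix:ftrl-framework} (culminating in \Cref{appendix:ftrl-thm}) to the concrete setup of Algorithm~\ref{alg:equal-path-length}, and then verify its hypotheses one by one. Concretely, I would take the ambient dimension to be $d = |V \cup E| \le 2|E| + 1 = \mathcal{O}(|E|)$, with coordinates indexed by $V \cup E$; the decision set $\calX$ is the set of source-to-sink paths (all of length $K$), so the $\ell_1$-norm parameter is $m := \max_{\bfx\in\calX}\|\bfx\|_1 = 2K+1 = \mathcal{O}(K)$. The framework's loss estimators $\tilde\bfy_t$ and $\hat\bfy_t$ must be matched to those defined in \Cref{sec:alg-equal}: writing $\ell_{t,e} := 1 + \ell_t$ for edge coordinates and $\ell_{t,v} := 1 - \ell_t$ for vertex coordinates (with the convention $\ell_{t,\source} = \ell_{t,\sink} = 0$), we get exactly the $\tilde\bfy_t$ and $\hat\bfy_t$ of the paper, with the per-coordinate ``pseudo-loss'' bounded by $b := 2$ (since $\ell_t \in [-1,1]$ implies $\ell_{t,i} \in [0,2]$).

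The key steps, in order, are: (1) Invoke Proposition~\ref{prop:expectation-bfxt} to confirm the framework's requirement $\bbE_t[\bfx_t] = \tilde\bfx_t$, which also gives $\bbP_t[\bfx_t[i]=1] = \tilde\bfx_t[i]$ for every coordinate $i$. (2) Verify that $\tilde\bfy_t$ is \emph{relatively unbiased} in the sense of the framework, i.e.\ $\bbE_t[\langle \bfx - \bfx', \tilde\bfy_t\rangle] = \langle \bfx - \bfx', \bfy_t\rangle$ for all $\bfx,\bfx' \in \calX$: this is precisely Lemma~\ref{lm:est-expt} combined with the equal-length assumption, since $\bbE_t[\langle\bfx,\tilde\bfy_t\rangle] = \langle\bfx,\bfy_t\rangle + \|\bfx\|_1 - 2 = \langle\bfx,\bfy_t\rangle + 2K - 1$ is the same additive constant for every path. (3) Check that with the regularizer $F(\bfx) = -\sum_{v\in V}\sqrt{\bfx[v]} - \sum_{e\in E}\sqrt{\bfx[e]}$ (the Tsallis-$1/2$ entropy), $F$ is Legendre on the relevant domain, the FTRL problem \eqref{eq:optimization1} has a unique minimizer $\tilde\bfx_t$ lying in the relative interior of $\operatorname{co}(\calX)$ (so all coordinates of $\tilde\bfx_t$ are strictly positive), and $\nabla^2 F$ is a diagonal matrix with positive diagonal entries everywhere along any chord $[\tilde\bfx_t, \tilde\bfx_{t+1}]$; these are exactly the structural conditions the framework requires of $F$. (4) Set each entry of $\boldsymbol\gamma$ to $\gamma := \sqrt{\tfrac{K\log(5(|V|+|E|)/\delta)}{|E|T}}$, matching the framework's prescription $\gamma_i = \sqrt{m\log(5d/\delta)/(dT)}$ up to absorbing the $\mathcal{O}(1)$ factors in $m \asymp K$ and $d \asymp |E|$ into the final absolute constant. (5) Apply \Cref{appendix:ftrl-thm} with $b = 2$, $m = \mathcal{O}(K)$, $d = \mathcal{O}(|E|)$: its conclusion reads
\[
\max_{\bfx\in\calX} R_T(\bfx) \le \frac{\operatorname{diam}_F}{\eta} + \sum_{t=1}^T \bbE_t\!\Big[\tfrac{\eta}{2}\|\hat\bfy_t^+\|^2_{(\nabla^2 F(z_t))^{-1}}\Big] + c\cdot b\cdot\sqrt{m d T \log(d/\delta)},
\]
and since $b\sqrt{mdT\log(d/\delta)} = \mathcal{O}(\sqrt{K|E|T\log(|E|/\delta)})$, this is exactly the claimed bound after renaming the constant $c$.

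I expect the main obstacle to be step~(3): carefully checking that the Tsallis-$1/2$ regularizer genuinely puts $\tilde\bfx_t$ in the interior of the flow polytope so that every coordinate is strictly positive (this uses the standing assumption that every vertex is reachable from $\source$ and can reach $\sink$, so $\operatorname{co}(\calX)$ has nonempty relative interior and $F$'s barrier-like behavior forces the minimizer off the boundary), and that $\nabla^2 F$ stays diagonal and positive-definite on the chord — the diagonality is immediate from $F$ being coordinate-separable, but one must argue the chord never leaves the domain where $F$ is finite and twice differentiable. A secondary bit of bookkeeping is making sure the identification $\ell_{t,i} \in [0,2]$ and the relabeling of coordinates is consistent with the framework's sign conventions (vertices carry $1-\ell_t$, edges carry $1+\ell_t$), but once the dictionary between the two sections is fixed, steps (1), (2), (4), (5) are essentially immediate invocations of already-proved lemmas.
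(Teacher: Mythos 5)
Your proposal is correct and follows essentially the same route as the paper: the paper proves Theorem~\ref{main:generic-thm} precisely by instantiating Theorem~\ref{appendix:ftrl-thm} with $d = |V\cup E| = \mathcal{O}(|E|)$, $m = 2K+1$, $b = 2$ (since the pseudo-losses $1\pm\ell_t$ lie in $[0,2]$), relative unbiasedness from Lemma~\ref{lm:est-expt} under the equal-length assumption, Proposition~\ref{prop:expectation-bfxt} for $\bbE_t[\bfx_t]=\tilde\bfx_t$, and the stated choice of $\boldsymbol\gamma$. The interiority/diagonality check for the Tsallis-$1/2$ regularizer that you flag as the main obstacle is asserted in the paper at the same level of detail you give, so your verification plan matches the paper's proof step for step.
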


First we upper bound $\sum\limits_{v\in V}\sqrt{\bfx[v]}+\sum\limits_{e\in E}\sqrt{\bfx[e]}$ for any $\bfx\in\operatorname{co}(\calX)$ as follows:
\begin{align*}
    \sum\limits_{v\in V}\sqrt{\bfx[v]}+\sum\limits_{e\in E}\sqrt{\bfx[e]}&\leq \sqrt{(|V|+|E|)\cdot (\sum\limits_{v\in V}\bfx[v]+\sum\limits_{e\in E}\bfx[e])} \tag{Cauchy-Schwarz inequality}\\
    &\leq\sqrt{(2|E|+1)\cdot (\sum\limits_{v\in V}\bfx[v]+\sum\limits_{e\in E}\bfx[e])} \tag{as $|V|\leq |E|+1$}\\
    &\leq \sqrt{4(K+1)(|E|+1)}
\end{align*}
We get the last inequality due to the following. First by definition any point in $\bfx\in\operatorname{co}(\calX)$ is a convex combination of points in $\calX$. Therefore an upper bound on $\max_{\bfx'\in\calX}\sum\limits_{v\in V}\bfx'[v]+\sum\limits_{e\in E}\bfx'[e]$ is also an upper bound on $\sum\limits_{v\in V}\bfx[v]+\sum\limits_{e\in E}\bfx[e]$. As any path in the DAG $G$ has exactly $K$ edges, we have $\sum\limits_{v\in V}\bfx'[v]=K+1$ and $\sum\limits_{e\in E}\bfx'[e]=K$ for any $\bfx'\in\calX$.

\noindent
Now we upper bound the diameter as follows:
\begin{equation*}
    \operatorname{diam}_F(\operatorname{co}(\calX))\leq \max\limits_{\bfx\in \operatorname{co}(\calX)}\sum\limits_{v\in V}\sqrt{\bfx[v]}+\sum\limits_{e\in E}\sqrt{\bfx[e]}\leq \sqrt{4(K+1)(|E|+1)}
\end{equation*}

Next, we upper bound the second term of the regret upper bound above. Observe that $\nabla^2F(\bfz)=\operatorname{diag}(1/(4\bfz^{3/2}))$. Due to the definition of $\hat\bfy_t^+$, it follows that the term $||\hat\bfy_t^+||_{\nabla^2F(\bfz_t)^{-1}}^2$ is maximized when $\bfz_t=\tilde \bfx_t$. Hence, we have $\mathbb{E}_t\left[||\hat\bfy_t^+||_{\nabla^2F(z_t)^{-1}}^2\right]\leq 16\sum\limits_{v\in V}\sqrt{\tilde \bfx_t[v]}+16\sum\limits_{e\in E}\sqrt{\tilde\bfx_t[e]}\leq 32\sqrt{(K+1)(|E|+1)}$. Hence, we have the following by setting $\eta=\frac{1}{\sqrt{T}}$:
\begin{align*}
     \mathrm{Regret}(T)&\leq\frac{\operatorname{diam}_F(\operatorname{co}(\calX))}{\eta}+\sum_{t=1}^T\mathbb{E}_t\left[\frac{\eta}{2}||\widehat{\bfy}_t^+||^2_{(\nabla^2F(\bfz_t))^{-1}}\right]+c\cdot \sqrt{K|E|T\log(|E|/\delta)}\\
     &\leq \frac{2\sqrt{(K+1)|E|}}{\eta}+\frac{32T\eta}{2}\sqrt{(K+1)|E|}+c\cdot \sqrt{K|E|T\log(|E|/\delta)}\\
     &\leq 18\sqrt{(K+1)(|E|+1)T}+c\cdot \sqrt{K|E|T\log(|E|/\delta)}\\
\end{align*}
where we get the last inequality by setting $\eta=\frac{1}{\sqrt{T}}$.

\begin{theorem}\label{thm:equal-path-length-regret}
    Under the assumption that every path from the source to the sink has length $K$, Algorithm~\ref{alg:equal-path-length} incurs a regret of at most $\mathcal{O}(\sqrt{K|E|T\log(|E|/\delta)})$ against any adaptive adversary with probability at least $1 - \delta$.
\end{theorem}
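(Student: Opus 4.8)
The plan is to assemble pieces already in hand. Theorem~\ref{main:generic-thm} already states that, with probability at least $1-\delta$ against any adaptive adversary,
\[
\max_{\bfx\in\calX} R_T(\bfx) \;\le\; \frac{\operatorname{diam}_F(\operatorname{co}(\calX))}{\eta} \;+\; \sum_{t=1}^T \mathbb{E}_t\!\left[\frac{\eta}{2}\,\|\widehat\bfy_t^+\|^2_{(\nabla^2 F(z_t))^{-1}}\right] \;+\; c\sqrt{K|E|T\log(|E|/\delta)},
\]
so the proof reduces to (i) checking that Algorithm~\ref{alg:equal-path-length} meets the hypotheses of the FTRL-with-implicit-exploration framework of \Cref{appendix:ftrl-framework}, and (ii) bounding the first two terms and tuning $\eta$. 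For (i) I would verify: $\mathbb{E}_t[\bfx_t]=\tilde\bfx_t$ (Proposition~\ref{prop:expectation-bfxt}); the estimator $\tilde\bfy_t$ is relatively unbiased, i.e.\ $\mathbb{E}_t[\langle\bfx-\bfx',\tilde\bfy_t\rangle]=\langle\bfx-\bfx',\bfy_t\rangle$ for all $\bfx,\bfx'\in\calX$, which follows from Lemma~\ref{lm:est-expt} because the equal-length assumption makes $\|\bfx\|_1=2K+1$ constant, so the additive $\|\bfx\|_1-2$ term cancels in differences; the scalars are $\ell_{t,i}\in\{1+\ell_t,\,1-\ell_t\}\subseteq[0,2]$, so $b=2$; and with the Tsallis-$1/2$ regularizer $F(\bfx)=-\sum_v\sqrt{\bfx[v]}-\sum_e\sqrt{\bfx[e]}$ the barrier forces $\tilde\bfx_t$ into the relative interior, so $\nabla^2F(z)=\operatorname{diag}(1/(4z^{3/2}))$ is diagonal with positive entries along the chord $[\tilde\bfx_t,\tilde\bfx_{t+1}]$, as the framework requires.

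For (ii), since every source--sink path uses exactly $K$ edges and $K+1$ vertices, $\sum_v\bfx[v]+\sum_e\bfx[e]=2K+1$ for all $\bfx\in\calX$ and hence for all $\bfx\in\operatorname{co}(\calX)$; Cauchy--Schwarz over the at most $2|E|+1$ coordinates yields $\sum_v\sqrt{\bfx[v]}+\sum_e\sqrt{\bfx[e]}\le\sqrt{(2|E|+1)(2K+1)}\le 2\sqrt{(K+1)(|E|+1)}$, giving $\operatorname{diam}_F(\operatorname{co}(\calX))\le 2\sqrt{(K+1)(|E|+1)}$. For the stability term, write $\|\widehat\bfy_t^+\|^2_{(\nabla^2F(z))^{-1}}=\sum_i 4\,z[i]^{3/2}(\widehat\bfy_t^+[i])^2$; since $\widehat\bfy_t^+[i]\neq 0$ only when $\tilde\bfx_{t+1}[i]\le\tilde\bfx_t[i]$, on those coordinates $z[i]\le\tilde\bfx_t[i]$ along the chord, so the expression is maximized at $z=\tilde\bfx_t$. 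Then $(\widehat\bfy_t[i])^2\le 4\,\ind[\bfx_t[i]=1]/\tilde\bfx_t[i]^2$ together with $\mathbb{E}_t[\ind[\bfx_t[i]=1]]=\tilde\bfx_t[i]$ gives $\mathbb{E}_t[\|\widehat\bfy_t^+\|^2_{(\nabla^2F(z_t))^{-1}}]\le 16\big(\sum_v\sqrt{\tilde\bfx_t[v]}+\sum_e\sqrt{\tilde\bfx_t[e]}\big)\le 32\sqrt{(K+1)(|E|+1)}$ by the same estimate. Choosing $\eta=1/\sqrt{T}$ balances the two terms ($\operatorname{diam}_F/\eta\le 2\sqrt{(K+1)(|E|+1)T}$ and $\sum_t\frac{\eta}{2}\cdot 32\sqrt{(K+1)(|E|+1)}=16\sqrt{(K+1)(|E|+1)T}$), so the total is $18\sqrt{(K+1)(|E|+1)T}+c\sqrt{K|E|T\log(|E|/\delta)}=\mathcal{O}(\sqrt{K|E|T\log(|E|/\delta)})$, using $(K+1)(|E|+1)=\mathcal{O}(K|E|)$ and $\log(|E|/\delta)\ge 1$.

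The genuinely delicate step is the local-norm estimate: the key point is that truncating $\widehat\bfy_t$ to the coordinates where $\tilde\bfx$ decreases is exactly what makes $z_t=\tilde\bfx_t$ the worst case on the chord, so that no spurious factor of $\eta$ or of $K$ leaks into the stability term; everything else is the bookkeeping already carried out in the paragraphs preceding the statement, which I would simply cite.
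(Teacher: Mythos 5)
Your proposal is correct and follows essentially the same route as the paper's own argument in \Cref{appendix:alg-equal}: verify the framework hypotheses (relative unbiasedness via \Cref{lm:est-expt} with constant $\|\bfx\|_1$, $\mathbb{E}_t[\bfx_t]=\tilde\bfx_t$ via \Cref{prop:expectation-bfxt}, $b=2$, diagonal Hessian of the Tsallis-$1/2$ regularizer), then bound the diameter by Cauchy--Schwarz, bound the stability term by evaluating the local norm at $z_t=\tilde\bfx_t$ thanks to the truncation in $\widehat\bfy_t^+$, and tune $\eta=1/\sqrt{T}$, matching the paper's constants and conclusion.
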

\subsection{Omitted Details from \Cref{sec:alg-make-equal}}\label{appendix:alg-make-equal}
\begin{algorithm2e}[ht] \label{alg:equal-path-length-2}
\caption{FTRL for online shortest path in a DAG $G=(V,E)$}
Let $K$ be the longest path in the DAG $G$ from source to sink.

\For{$t = 1$ \KwTo $T$} {

Compute $\tilde{\bfx}_{t}\gets \arg\min_{\bfx \in \mathrm{co}(\calX^\dagger)} \left( \eta \sum_{\tau=1}^{t-1} \langle \bfx, \widehat{\bfy}_\tau^\dagger \rangle + F(\bfx) \right)$.

Initialize path $P_t \leftarrow (\source)$ and reset $v_0 \leftarrow \source$.

\For{$i = 1$ \KwTo $K$} {
    Sample outgoing edge $e_i = (v_{i-1}, v_i) \in \delta^+(v_{i-1})$ with probability proportional to $\tilde \bfx_{t}[e_i]$.

    Update $P_t \leftarrow P_t \circ (e_i, v_i)$
}

Choose the path $P_t$ and observe loss its $\ell_{t}$.

Determine $\bfx_{t}^\dagger \in \calX^\dagger$ corresponding to the path $P_t$ and construct the loss estimator $\hat\bfy_t^\dagger$.

}

\end{algorithm2e}

Recall that $K(v)$ is the length of the longest path from $\source$ to $v$ and $K(\source)=0$. For any edge $(u,v) \in E$, define $\calI((u,v)) := \{i \in \llbracket K-1\rrbracket : K(u) < i < K(v)\}$. Now we make the following assumption.
\begin{assumption}
    For any $i\in\llbracket K-1 \rrbracket$, $|e \in E : i \in \calI(e)| \geq 1$.
\end{assumption}

Note that if there exists an index $i\in\llbracket K-1 \rrbracket$ such that $|e \in E : i \in \calI(e)| =0$, then we have $b(\bfx)[i]=0$ for all $\bfx\in \calX$. Consequently, this bit can be excluded from our representation.

Next, we prove the following proposition.
\begin{proposition}\label{prop:expectation-bfxt-2}
    $\mathbb{E}_t[\bfx_t^\dagger]=\tilde \bfx_t$
\end{proposition}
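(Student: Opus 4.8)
The plan is to reduce this proposition to Proposition~\ref{prop:expectation-bfxt} together with a direct computation of the conditional expectation of each auxiliary bit $b(\bfx_t)[i]$. Recall that $\bfx_t^\dagger = \bfx_t \circ b(\bfx_t)$, where $\bfx_t$ is the path sampled by the traversal procedure of Algorithm~\ref{alg:equal-path-length-2}. This procedure is \emph{identical} to the one analyzed in Proposition~\ref{prop:expectation-bfxt}: starting from $\source$, at each vertex $v$ we pick an outgoing edge $e\in\delta^+(v)$ with probability proportional to $\tilde\bfx_t[e]$, until we reach $\sink$. The only difference is that the FTRL iterate $\tilde\bfx_t$ now lives in $\operatorname{co}(\calX^\dagger)$ rather than $\operatorname{co}(\calX)$; but the edge-sampling weights $\tilde\bfx_t[e]$ for $e\in E$ are exactly the $E$-coordinates of the iterate, and the flow constraints on the $V\cup E$ coordinates of $\operatorname{co}(\calX^\dagger)$ are the same as those of $\operatorname{co}(\calX)$. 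Hence the argument of Proposition~\ref{prop:expectation-bfxt} applies verbatim and gives $\bbE_t[\bfx_t[v]] = \tilde\bfx_t[v]$ for all $v\in V$ and $\bbE_t[\bfx_t[e]] = \tilde\bfx_t[e]$ for all $e\in E$. This disposes of the first $|V|+|E|$ coordinates of $\bfx_t^\dagger$.

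It remains to show $\bbE_t[b(\bfx_t)[i]] = \tilde\bfx_t[i]$ for every $i\in\llbracket K-1\rrbracket$. By definition, $b(\bfx_t)[i] = \ind[\exists (u,v)\in E:\ \bfx_t[(u,v)]=1,\ K(u) < i < K(v)]$, i.e., $b(\bfx_t)[i] = \sum_{e\in E:\ i\in\calI(e)} \bfx_t[e]$. The key observation is that the events $\{\bfx_t[e]=1\}$ for $e$ ranging over $\{e\in E: i\in\calI(e)\}$ are mutually exclusive: along the sampled path $P_t = (v_0,e_1,\dots,e_k,v_k)$ the longest-path labels strictly increase, $K(v_0) < K(v_1) < \dots < K(v_k)$, so there is at most one index $j$ with $K(v_{j-1}) < i < K(v_j)$, i.e. at most one edge of $P_t$ belongs to $\{e: i\in\calI(e)\}$ (this is precisely the counting fact already used in the proof of Lemma~\ref{lm:est-expt-aug}). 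Therefore $b(\bfx_t)[i] = \sum_{e:\,i\in\calI(e)} \bfx_t[e]$ holds as a genuine sum of $0/1$ indicators with at most one nonzero term, and taking conditional expectation and using the established identity $\bbE_t[\bfx_t[e]] = \tilde\bfx_t[e]$ yields
\[
\bbE_t[b(\bfx_t)[i]] = \sum_{e\in E:\ i\in\calI(e)} \tilde\bfx_t[e].
\]
Finally, I invoke the defining linear constraints of $\operatorname{co}(\calX^\dagger)$: since every $\bfx^\dagger\in\calX^\dagger$ satisfies $\bfx^\dagger[i] = \sum_{e:\,i\in\calI(e)} \bfx^\dagger[e]$ (each path's $i$-th auxiliary bit equals the sum over its edges straddling level $i$, which is $0$ or $1$), this identity is preserved under convex combinations, so $\tilde\bfx_t[i] = \sum_{e:\,i\in\calI(e)} \tilde\bfx_t[e]$. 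Combining the last two displays gives $\bbE_t[b(\bfx_t)[i]] = \tilde\bfx_t[i]$, and stacking all coordinates gives $\bbE_t[\bfx_t^\dagger] = \tilde\bfx_t$.

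The only mildly delicate point — and the one I would state carefully — is the justification that the auxiliary-bit constraint $\bfx^\dagger[i] = \sum_{e:\,i\in\calI(e)}\bfx^\dagger[e]$ is indeed part of the definition of $\operatorname{co}(\calX^\dagger)$ (equivalently, that the polytope $\operatorname{co}(\calX^\dagger)$ is cut out by the flow constraints on $V\cup E$ together with these "level" equalities on the $\llbracket K-1\rrbracket$ coordinates); this is where Assumption~2 of this subsection is used, ensuring no auxiliary coordinate is degenerate. Everything else is a routine transcription of Proposition~\ref{prop:expectation-bfxt} and the counting lemma already proved. I do not anticipate any genuine obstacle beyond making this polytope description precise.
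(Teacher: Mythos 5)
Your proof is correct and follows essentially the same route as the paper: the $V\cup E$ coordinates are handled by Proposition~\ref{prop:expectation-bfxt} (valid since the $V\cup E$ marginal of $\operatorname{co}(\calX^\dagger)$ satisfies the same flow constraints), and the auxiliary coordinates by the linear identity $\bfx[i]=\sum_{e\in E:\, i\in\calI(e)}\bfx[e]$ on $\operatorname{co}(\calX^\dagger)$, which the paper likewise invokes and justifies later via exactly your disjointness observation for the sets $\calI(e)$ along a path. One minor remark: Assumption~2 of that subsection is not actually needed for this step, since the identity holds for every path even when the sum is identically zero; the assumption only serves to discard degenerate auxiliary bits.
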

\begin{proof}
    Due to Proposition \ref{prop:expectation-bfxt}, we have $\mathbb{E}_t[\bfx_t^\dagger[v]]=\tilde \bfx_t[v]$ for all $v\in V$ and $\mathbb{E}_t[\bfx_t^\dagger[e]]=\tilde \bfx_t[e]$ for all $e\in E$. Fix $i\in \llbracket K-1\rrbracket$. Later in this section, we prove that for any $\bfx\in \operatorname{co}(\calX^\dagger)$, we have $\bfx[i] = \sum_{e \in E: i \in \calI(e)} \bfx[e]$. Hence, we have $\tilde \bfx_t[i]=\sum_{e \in E: i \in \calI(e)} \tilde \bfx_t[e]=\mathbb{E}_t[\sum_{e \in E: i \in \calI(e)} \bfx_t^\dagger[e]]=\mathbb{E}_t[\bfx_t^\dagger[i]]$.
\end{proof}

Recall the definition of the loss estimators $\hat\bfy_t^\dagger$ and $\tilde \bfy_t^\dagger$ from \Cref{sec:alg-make-equal}. Next, recall that we have $\mathbb{E}_t[\langle \bfx_{(1)}^\dagger-\bfx_{(2)}^\dagger,\tilde\bfy_t^\dagger\rangle]=\langle \bfx_{(1)}-\bfx_{(2)},\bfy_t\rangle$ for all $\bfx_{(1)},\bfx_{(2)}\in\calX$. If we use $F(\bfx)=-\sum_{v\in V}\sqrt{\bfx[v]}-\sum_{e\in E}\sqrt{\bfx[e]}-\sum_{i\in\llbracket K-1\rrbracket}\sqrt{\bfx[i]}$ as our regularizer, it easily follows that our FTRL equation has a unique minimizer $\tilde\bfx_t$ and $\tilde\bfx_t[v]>0$ for all $v\in V$, $\tilde\bfx_t[e]>0$ for all $e\in E$ and $\tilde\bfx_t[i]>0$ for all $i\in\llbracket K-1\rrbracket$. Therefore,  $\nabla^2F(z)$ is a diagonal matrix for any point $z$ on the chord $[\tilde x_t,\tilde x_{t+1}]$. Let $\hat\bfy_t^+\in\mathbb{R}^{V\cup E\cup \llbracket K-1\rrbracket}_{\geq 0}$ be a vector indexed by the elements in $V\cup E\cup \llbracket K-1\rrbracket$ such that $\hat\bfy_t^+[v]=\hat\bfy_t^\dagger[v]\cdot\ind[\tilde \bfx_{t+1}[v]\leq\tilde\bfx_t[v]]$ for all $v\in V$, $\hat\bfy_t^+[e]=\hat\bfy_t[e]\cdot\ind[\tilde \bfx_{t+1}[e]\leq\tilde\bfx_t[e]]$ for all $e\in E$, and $\hat\bfy_t^+[i]=\hat\bfy_t[i]\cdot\ind[\tilde \bfx_{t+1}[i]\leq\tilde\bfx_t[i]]$ for all $i\in \llbracket K-1\rrbracket$. Let us set every entry of $\boldsymbol\gamma$ and $\hat{\boldsymbol\gamma}$ to $\sqrt{\frac{K\log(5(|V|+|E|+K)/\delta)}{|E|T}}$. Now we apply Lemma \ref{appendix:ftrl-thm} from \Cref{appendix:ftrl-framework} to get the following:

\begin{theorem}\label{main:generic-thm-2}
    Algorithm \ref{alg:equal-path-length} has the following guarantee on regret with probability at least $1-\delta$ against any adaptive adversary:
    \begin{equation*}
        \max_{\bfx\in \calX}R_T(\bfx)\leq \frac{\operatorname{diam}_F(\operatorname{co}(\calX^\dagger))}{\eta}+\sum_{t=1}^T\mathbb{E}_t\left[\frac{\eta}{2}||\widehat{\bfy}_t^+||^2_{(\nabla^2F(z_t))^{-1}}\right]+c\cdot \sqrt{K|E|T\log(|E|/\delta)}
    \end{equation*}
    where $c$ is some absolute constant, $\operatorname{diam}_F(\operatorname{co}(\calX^\dagger)):=\max_{\bfx,\bfx'\in \operatorname{co}(\calX^\dagger)}F(\bfx)-F(\bfx')$, and $z_t$ is some point on the chord $[\tilde\bfx_t,\tilde \bfx_{t+1}]$.
\end{theorem}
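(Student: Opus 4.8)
\textbf{Proof plan for \Cref{main:generic-thm-2}.}
The plan is to invoke the general FTRL-with-implicit-exploration framework of \Cref{appendix:ftrl-framework} (culminating in \Cref{appendix:ftrl-thm}) with the combinatorial set instantiated as $\calX^\dagger$, so I must first check that all the hypotheses of that framework hold in the augmented setting. Set $d := |V| + |E| + (K-1)$, so that $\calX^\dagger \subseteq \{0,1\}^d$, and note $m := \max_{\bfx^\dagger \in \calX^\dagger} \|\bfx^\dagger\|_1 \le 2K+1 + (K-1) = \calO(K)$ since each path contributes $2k+1$ coordinates in $V\cup E$ and at most $K-1$ auxiliary bits. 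The estimator $\tilde\bfy_t^\dagger = \tilde\bfy_t \circ \tilde\bfc_t$ has exactly the coordinate-wise form $\tilde\bfy_t^\dagger[i] = \ind[\bfx_t^\dagger[i]=1]\cdot \ell_{t,i}/\bbP_t[\bfx_t^\dagger[i]=1]$ required by the framework, where $\ell_{t,i} = 1+\ell_t \in [0,2]$ on edge coordinates, $\ell_{t,i} = 1-\ell_t \in [0,2]$ on vertex coordinates, and $\ell_{t,i} = 2$ on auxiliary coordinates; hence one may take the absolute constant $b = 2$. The biased estimator $\hat\bfy_t^\dagger$ likewise matches the framework's $\widehat\bfy_t[i] = \ind[\bfx_t^\dagger[i]=1]\cdot\ell_{t,i}/(\bbP_t[\bfx_t^\dagger[i]=1]+\gamma_i)$, with the exploration parameters taken uniformly equal to $\sqrt{K\log(5(|V|+|E|+K)/\delta)/(|E|T)}$ on all of $V$, $E$, and $\llbracket K-1\rrbracket$.

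Next I would verify the two remaining structural hypotheses. First, \emph{relative unbiasedness}: the framework requires $\bbE_t[\langle \bfx^\dagger - \bfx'^\dagger, \tilde\bfy_t^\dagger\rangle] = \langle \bfx - \bfx', \bfy_t\rangle$ for all $\bfx, \bfx' \in \calX$. This is immediate from \Cref{lm:est-expt-aug}, since $\bbE_t[\langle \bfx^\dagger, \tilde\bfy_t^\dagger\rangle] = \langle \bfx, \bfy_t\rangle + 2K-1$ and the additive term $2K-1$ is the same for every path, so it cancels in the difference. Second, \emph{the Hessian is diagonal with positive entries along the chord}: the algorithm samples $\bfx_t^\dagger$ via the edge-traversal procedure, and by \Cref{prop:expectation-bfxt-2} we have $\bbE_t[\bfx_t^\dagger] = \tilde\bfx_t$, so $\bbP_t[\bfx_t^\dagger[i]=1] = \tilde\bfx_t[i]$ as the framework demands. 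With the regularizer $F(\bfx) = -\sum_{v\in V}\sqrt{\bfx[v]} - \sum_{e\in E}\sqrt{\bfx[e]} - \sum_{i\in\llbracket K-1\rrbracket}\sqrt{\bfx[i]}$, the map $F$ is Legendre on $\operatorname{co}(\calX^\dagger)$, the FTRL objective has a unique minimizer $\tilde\bfx_t$, and (using the reachability assumption together with \Cref{ass:reward}-style arguments and the assumption that every auxiliary index is ``covered'' by some edge) one checks $\tilde\bfx_t[i] > 0$ on every coordinate; hence $\tilde\bfx_{t+1}$ also lies in the interior, every point $\bfz$ on the chord $[\tilde\bfx_t, \tilde\bfx_{t+1}]$ has all coordinates strictly positive, and $\nabla^2 F(\bfz) = \operatorname{diag}(\tfrac{1}{4}\bfz^{-3/2})$ is diagonal with positive diagonal entries.

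With all hypotheses verified, applying \Cref{appendix:ftrl-thm} with $\delta_0 = \delta/5$ and the stated choice of $\gamma_i$ directly yields
\[
    \max_{\bfx\in\calX} R_T(\bfx) \le \frac{\operatorname{diam}_F(\operatorname{co}(\calX^\dagger))}{\eta} + \sum_{t=1}^T \bbE_t\Bigl[\tfrac{\eta}{2}\|\widehat\bfy_t^+\|_{(\nabla^2 F(\bfz_t))^{-1}}^2\Bigr] + c\cdot b\cdot\sqrt{mdT\log(d/\delta)},
\]
and it remains only to simplify the last term: with $b = 2$, $m = \calO(K)$, and $d = |V|+|E|+K-1 = \calO(|E|)$ (since $|V| \le |E|+1$ and $K \le |E|$), we get $c\cdot b\cdot\sqrt{mdT\log(d/\delta)} = \calO(\sqrt{K|E|T\log(|E|/\delta)})$, which absorbs any logarithmic factors in $K$ into the logarithm of $|E|$. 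This gives exactly the claimed bound. The one genuinely delicate point — and the step I expect to require the most care — is confirming that the FTRL iterate $\tilde\bfx_t$ and its successor $\tilde\bfx_{t+1}$ land strictly in the relative interior of $\operatorname{co}(\calX^\dagger)$ on \emph{every} coordinate, including the auxiliary bits; this is where the additional assumption that each $i\in\llbracket K-1\rrbracket$ is covered by at least one edge (and the paragraph identifying $\bfx[i] = \sum_{e: i\in\calI(e)}\bfx[e]$ on $\operatorname{co}(\calX^\dagger)$) is essential, since otherwise the Tsallis-$1/2$ gradient would blow up and the diagonal-Hessian hypothesis of the framework could fail.
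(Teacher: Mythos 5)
Your proposal is correct and follows essentially the same route as the paper: instantiate the general framework of \Cref{appendix:ftrl-framework} on $\calX^\dagger$ with the Tsallis-$1/2$ regularizer, verify relative unbiasedness via \Cref{lm:est-expt-aug} (the constant $2K-1$ cancels in differences), use \Cref{prop:expectation-bfxt-2} and the strict positivity of the FTRL minimizer to justify the diagonal-Hessian hypothesis, and then apply \Cref{appendix:ftrl-thm} with $b=2$, $m=\calO(K)$, $d=\calO(|E|)$ and the stated uniform choice of $\boldsymbol\gamma,\hat{\boldsymbol\gamma}$. Your added care about the auxiliary coordinates (the covering assumption ensuring each $i\in\llbracket K-1\rrbracket$ has $\tilde\bfx_t[i]>0$) is exactly the point the paper addresses with its stated assumption and the identity $\bfx[i]=\sum_{e:\,i\in\calI(e)}\bfx[e]$.
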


First we upper bound $\sum\limits_{v\in V}\sqrt{\bfx[v]}+\sum\limits_{e\in E}\sqrt{\bfx[e]}+\sum\limits_{i=1}^{K-1}\sqrt{\bfx[i]}$ for any $\bfx\in\operatorname{co}(\calX^\dagger)$ as follows:
\begin{align*}
    &\sum\limits_{v\in V}\sqrt{\bfx[v]}+\sum\limits_{e\in E}\sqrt{\bfx[e]}+\sum\limits_{i=1}^{K-1}\sqrt{\bfx[i]}\\
    &\leq \sqrt{(|V|+|E|+K-1)\cdot (\sum\limits_{v\in V}\bfx[v]+\sum\limits_{e\in E}\bfx[e]+\sum\limits_{i=1}^{K-1}\bfx[i])} \tag{Cauchy-Schwarz inequality}\\
    &\leq\sqrt{(3|E|)\cdot (\sum\limits_{v\in V}\bfx[v]+\sum\limits_{e\in E}\bfx[e]+\sum\limits_{i=1}^{K-1}\bfx[\ell+i])} \tag{as $|V|\leq |E|+1,\;K\leq |E|$}\\
    &\leq \sqrt{9K|E|}
\end{align*}
We get the last inequality due to the following. First by definition any point in $\bfx\in\operatorname{co}(\calX^\dagger)$ is a convex combination of points in $\calX$. Therefore, an upper bound on $\max_{x'\in \calX^\dagger}\sum\limits_{v\in V}\bfx'[v]+\sum\limits_{e\in E}\bfx'[e]+\sum\limits_{i=1}^{K-1}\bfx'[i]$ is also an upper bound on $\sum\limits_{v\in V}\bfx[v]+\sum\limits_{e\in E}\bfx[e]+\sum\limits_{i=1}^{K-1}\bfx[\ell+i]$. As any path in the DAG $G$ has at most $K$ edges, we have $\sum\limits_{v\in V}\bfx'[v]\leq K+1$ and $\sum\limits_{e\in E}\bfx'[e]\leq K$. We also have $\sum\limits_{i=1}^{K-1}\bfx'[i]\leq K-1$.

\noindent
Now we upper bound the diameter as follows:
\begin{equation*}
    \operatorname{diam}_F(\operatorname{co}(\calX^\dagger))\leq \max\limits_{\bfx\in \operatorname{co}(\calX^\dagger)}\sum\limits_{v\in V}\sqrt{\bfx[v]}+\sum\limits_{e\in E}\sqrt{\bfx[e]}+\sum\limits_{i=1}^{K-1}\sqrt{\bfx[i]}\leq \sqrt{9K|E|}
\end{equation*}

Next, we upper bound the second term of the regret upper bound above. Next observe that $\nabla^2F(z)=\operatorname{diag}(1/(4z^{3/2}))$. Due to the definition of $\hat\bfy_t^+$, it follows that the term $||\hat\bfy_t^+||_{\nabla^2F(z_t)^{-1}}^2$ is maximized when $z_t=\tilde \bfx_t$. Hence, we have $\mathbb{E}_t\left[||\bfy_t'||_{\nabla^2F(z_t)^{-1}}^2\right]\leq 16\sum\limits_{v\in V}\sqrt{\tilde \bfx_t[v]}+16\sum\limits_{e\in E}\sqrt{\tilde\bfx_t[e]}+16\sum\limits_{i=1}^{K-1}\sqrt{\tilde \bfx_t[i]}\leq 48\sqrt{K|E|}$.

Hence, we have the following by setting $\eta=\frac{1}{\sqrt{T}}$:
\begin{align*}
     \mathrm{Regret}(T)&\leq\frac{\operatorname{diam}_F(\operatorname{co}(\calX^\dagger))}{\eta}+\sum_{t=1}^T\mathbb{E}_t\left[\frac{\eta}{2}||\widehat{\bfy}_t^+||^2_{(\nabla^2F(z_t))^{-1}}\right]+c\cdot \sqrt{K|E|T\log(|E|/\delta)}\\
     &\leq \frac{3\sqrt{K|E|}}{\eta}+\frac{48T\eta}{2}\sqrt{K|E|}+c\cdot \sqrt{K|E|T\log(|E|/\delta)}\\
     &\leq 27\sqrt{K|E|T}+c\cdot \sqrt{K|E|T\log(|E|/\delta)}\\
\end{align*}

Our analysis leads to the following main theorem.
\begin{theorem}\label{thm:equal-path-length-regret-2}
    Under the assumption that every path from the source to the sink has length at most $K$, Algorithm~\ref{alg:equal-path-length-2} incurs a regret of at most $\mathcal{O}(\sqrt{K|E|T\log(|E|/\delta)})$ against any adaptive adversary with probability at least $1 - \delta$.
\end{theorem}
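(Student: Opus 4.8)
\emph{Proof proposal.} The plan is to recognize Algorithm~\ref{alg:equal-path-length-2} as an instance of the abstract FTRL-with-implicit-exploration template of Appendix~\ref{appendix:ftrl-framework}, applied to the augmented decision set $\calX^\dagger$, and then to bound each of the three terms produced by Theorem~\ref{appendix:ftrl-thm}. First I would check that the hypotheses of the framework hold. The set is $\calX^\dagger\subseteq\{0,1\}^d$ with $d=|V|+|E|+(K-1)=\mathcal{O}(|E|)$ (using $|V|\le|E|+1$ and $K\le|E|$), and $\ell_1$-diameter $m=\max_{\bfx^\dagger\in\calX^\dagger}\|\bfx^\dagger\|_1=\mathcal{O}(K)$. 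The per-coordinate ``loss pieces'' are $\ell_{t,e}=1+\ell_t$, $\ell_{t,v}=1-\ell_t$ and $\ell_{t,i}=2$, all lying in $[0,2]$, so the constant $b=2$. Proposition~\ref{prop:expectation-bfxt-2} gives $\bbE_t[\bfx_t^\dagger]=\tilde\bfx_t$, hence $\bbP_t[\bfx_t^\dagger[i]=1]=\tilde\bfx_t[i]$, matching the sampling requirement, and $\hat\bfy_t^\dagger,\tilde\bfy_t^\dagger$ are exactly the biased/unbiased pair the framework needs. Lemma~\ref{lm:est-expt-aug} shows $\bbE_t[\langle\bfx^\dagger,\tilde\bfy_t^\dagger\rangle]=\langle\bfx,\bfy_t\rangle+2K-1$, and since the additive term $2K-1$ is path-independent, $\bbE_t[\langle\bfx^\dagger-\bfx'^\dagger,\tilde\bfy_t^\dagger\rangle]=\langle\bfx-\bfx',\bfy_t\rangle$ for all $\bfx,\bfx'\in\calX$; that is, $\tilde\bfy_t^\dagger$ is relatively unbiased. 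Finally, with the Tsallis-$1/2$ regularizer $F(\bfx)=-\sum_v\sqrt{\bfx[v]}-\sum_e\sqrt{\bfx[e]}-\sum_i\sqrt{\bfx[i]}$, $F$ is Legendre on $\operatorname{co}(\calX^\dagger)$, so $\tilde\bfx_t$ is unique and lies in the relative interior (all coordinates strictly positive, after discarding any auxiliary bit $i$ with $\{e\in E:i\in\calI(e)\}=\emptyset$), whence $\nabla^2F(z)=\operatorname{diag}(1/(4z^{3/2}))$ is diagonal with positive entries along the whole chord $[\tilde\bfx_t,\tilde\bfx_{t+1}]$.

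Granting the framework, Theorem~\ref{appendix:ftrl-thm} (with $\gamma_i=\sqrt{m\log(5d/\delta)/(dT)}$, $\delta_0=\delta/5$) yields, with probability at least $1-\delta$,
\[
\mathrm{Regret}(T)\ \le\ \frac{\operatorname{diam}_F(\operatorname{co}(\calX^\dagger))}{\eta}\;+\;\sum_{t=1}^T\bbE_t\!\left[\tfrac{\eta}{2}\,\|\hat\bfy_t^+\|^2_{(\nabla^2F(z_t))^{-1}}\right]\;+\;c\,b\,\sqrt{m\,d\,T\log(d/\delta)},
\]
where $\hat\bfy_t^+$ zeroes the coordinates on which $\tilde\bfx_{t+1}$ exceeds $\tilde\bfx_t$. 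I would then bound the three terms. (i) By Cauchy--Schwarz and $\|\bfx^\dagger\|_1=\mathcal{O}(K)$ for every $\bfx^\dagger\in\calX^\dagger$, together with $|V|\le|E|+1$, $K\le|E|$, we get $\operatorname{diam}_F\le\sqrt{(|V|+|E|+K-1)\cdot\mathcal{O}(K)}=\mathcal{O}(\sqrt{K|E|})$. (ii) Since $(\nabla^2F(z))^{-1}=\operatorname{diag}(4z^{3/2})$ and $\hat\bfy_t^+$ is supported only on coordinates $i$ with $\tilde\bfx_{t+1}[i]\le\tilde\bfx_t[i]$, every point $z_t$ on the chord satisfies $z_t[i]\le\tilde\bfx_t[i]$ there, so $\|\hat\bfy_t^+\|^2_{(\nabla^2F(z_t))^{-1}}=\sum_i4z_t[i]^{3/2}\hat\bfy_t^+[i]^2$ is maximized at $z_t=\tilde\bfx_t$; then $\hat\bfy_t^\dagger[i]\le b\,\ind[\bfx_t^\dagger[i]=1]/\tilde\bfx_t[i]$ and $\bbE_t[\ind[\bfx_t^\dagger[i]=1]]=\tilde\bfx_t[i]$ give $\bbE_t[\|\hat\bfy_t^+\|^2_{(\nabla^2F(z_t))^{-1}}]\le4b^2\sum_i\sqrt{\tilde\bfx_t[i]}\le4b^2\sqrt{d\,m}=\mathcal{O}(\sqrt{K|E|})$. (iii) Plugging $b=2$, $m=\mathcal{O}(K)$, $d=\mathcal{O}(|E|)$ into the last term gives $\mathcal{O}(\sqrt{K|E|T\log(|E|/\delta)})$. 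Summing and choosing $\eta=1/\sqrt{T}$ to balance (i) and (ii) yields $\mathrm{Regret}(T)=\mathcal{O}(\sqrt{K|E|T})+\mathcal{O}(\sqrt{K|E|T\log(|E|/\delta)})=\mathcal{O}(\sqrt{K|E|T\log(|E|/\delta)})$, as claimed. Computational efficiency is immediate: $\operatorname{co}(\calX^\dagger)$ is cut out by polynomially many linear constraints (the flow constraints on $V\cup E$ plus $\bfx[i]=\sum_{e\in E:\,i\in\calI(e)}\bfx[e]$ for each auxiliary bit), so each FTRL step is a polynomial-size convex program, and the per-round edge sampling is clearly polynomial.

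The step I expect to require the most care is reconciling Algorithm~\ref{alg:equal-path-length-2} with the abstract template, since Appendix~\ref{appendix:ftrl-framework} is phrased for a set whose elements are the objects actually incurring loss, whereas here the learner plays $\bfx_t\in\calX$ but runs FTRL over $\operatorname{co}(\calX^\dagger)$ with the augmented estimator. What makes this go through is precisely the constant-shift form of Lemma~\ref{lm:est-expt-aug} (the $+2K-1$ being path-independent), which preserves relative unbiasedness and hence the regret comparison against every $\bfx\in\calX$; verifying that the supermartingale and FTRL-stability steps of Appendix~\ref{appendix:ftrl-framework} carry over verbatim under this substitution is the only genuinely non-mechanical part. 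The remaining obstacle is bookkeeping: confirming the local-norm term is maximized at the FTRL iterate (which hinges on the sign pattern baked into $\hat\bfy_t^+$), so that the Tsallis-$1/2$ regularizer delivers the $\sqrt{K|E|}$ scaling rather than something worse, and then the routine $\eta=1/\sqrt T$ tuning.
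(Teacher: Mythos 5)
Your proposal is correct and follows essentially the same route as the paper's own proof: it instantiates the FTRL-with-implicit-exploration framework of \Cref{appendix:ftrl-framework} on the augmented set $\calX^\dagger$ (relative unbiasedness via Lemma~\ref{lm:est-expt-aug}, sampling via Proposition~\ref{prop:expectation-bfxt-2}, Tsallis-$1/2$ regularizer), bounds the diameter and the local-norm term by $\mathcal{O}(\sqrt{K|E|})$ exactly as in \Cref{appendix:alg-make-equal} (including the observation that $\|\hat\bfy_t^+\|^2_{(\nabla^2F(z_t))^{-1}}$ is maximized at $z_t=\tilde\bfx_t$), and tunes $\eta=1/\sqrt{T}$. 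The choices of $b$, $m$, $d$, and $\boldsymbol\gamma$ match the paper's up to constants, so no substantive difference or gap remains.
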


We now show that $\operatorname{co}(\calX^\dagger)$ can be represented using a polynomial number of linear constraints. Recall that $\calX^\dagger$ represents the set of all paths, including the appended $K-1$ bits. We now claim that, given a point $\bfx \in [0,1]^{V\cup E\cup\llbracket K-1\rrbracket}$, we can efficiently determine whether $\bfx$ lies within the convex hull of $\calX^\dagger$. The coordinate values corresponding to the edges and vertices must satisfy the flow constraints, which can be verified efficiently.

Recall that for any edge $(u,v) \in E$, we defined $\calI((u,v)) := \{i \in \llbracket K-1\rrbracket : K(u) < i < K(v)\}$. For any two edges $e_1, e_2$ on the same path, observe that $\calI(e_1) \cap \calI(e_2) = \emptyset$. Consequently, the coordinate values of $\bfx$ corresponding to the indices in $\llbracket K-1\rrbracket$ must satisfy the following condition:  
\[
\bfx[i] = \sum_{e \in E: i \in \calI(e)} \bfx[e] \quad \forall i \in \llbracket K-1\rrbracket.
\]  

If this condition fails for some $i \in \llbracket K-1\rrbracket$, then $\bfx$ does not lie in the convex hull of $\calX^\dagger$. Conversely, if all flow constraints and the above condition hold, then $\bfx$ belongs to the convex hull. This verification is efficient, as $\calI(e)$ can be computed efficiently.

\textbf{Remark:} One does not need to compute the exact minimizer $\tilde{\bfx}_t$ of the FTRL equation in each round $t$. Our analysis remains valid if we instead compute an approximate minimizer $\hat{\bfx}_t$ satisfying $ \|\hat{\bfx}_t - \tilde{\bfx}_t\|_{\infty} \leq \frac{1}{T^2} $ and $ \|\hat{\bfx}_t\| > 0 $. Such an approximate minimizer can be efficiently computed using standard optimization methods, such as the Ellipsoid method, in $\text{poly}(|E|, \log T)$ time steps, since our regularizer is both Legendre and strongly convex over $\operatorname{co}(\calX^\dagger)$, and $\operatorname{co}(\calX^\dagger)$ can be represented using a polynomial number of linear constraints.

\subsection{Omitted Details from \Cref{sec:alg-centroid}}\label{appendix:alg-centroid}
Here, we present the full version of \Cref{sec:alg-centroid}, including the omitted details.

Recall that $\delta^{-}(v)$ denotes the incoming edges and $\delta^{+}(v)$ denotes the outgoing edges of vertex $v$. Let $C: V \to \bbN$ denote the number of distinct paths from the source $\source$ to any vertex $v$. It holds that $C(\source) = 1$ and $C(v) := \sum_{(u, v) \in \delta^{-}(v)} C(u)$ for any $v \neq \source$. According to the definition, it satisfies that $C(\sink) = |\mathcal{X}|$. 

Let $h(v) := \arg\max_{(u, v) \in \delta^{-}(v)} C(u)$ be the incoming edge that brings the maximum number of paths to vertex $v$, with ties broken arbitrarily. Let
$$E^\clubsuit := \{h(v) \mid v \in V \setminus \{\source\}\}$$ 
be the set of all such edges. The underlying subgraph $S := (V, E^\clubsuit)$ forms a directed spanning tree of $G$. The next lemma shows that the number of non-tree edges (edges not in $E^\clubsuit$) on any path from $\source$ to $\sink$ in $G$ is at most $\log |\mathcal{X}|$.

\begin{lemma} \label{lm:centroid-decomposition-logX}
    Let $P = (v_0 = \source, e_1, v_1, \dots, e_k, v_{k} = \sink)$ be a path from the source to sink. We have that the number of non-tree edges on the path $P$ is upper bounded by 
    \begin{align*}
        \sum_{i = 1}^k \ind[e_i \notin E^\clubsuit] \leq \log\left(|\calX|\right).
    \end{align*}
\end{lemma}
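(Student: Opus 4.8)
The plan is to track the quantity $C(v_i)$ --- the number of source-to-$v_i$ paths --- as we walk along $P$, and to show that it at least doubles every time we traverse a non-tree edge while never decreasing on a tree edge. Since $C(\source) = 1$ and $C(\sink) = |\calX|$, the number of non-tree edges is then bounded by $\log |\calX|$.

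First I would record a monotonicity fact that follows directly from the recurrence $C(v) = \sum_{(u,v) \in \delta^{-}(v)} C(u)$ together with positivity of $C$ (every vertex is reachable from $\source$, so each $C(u) \geq 1$). For any edge $e_i = (v_{i-1}, v_i)$ on $P$, the term $C(v_{i-1})$ appears in the sum defining $C(v_i)$, and all terms are positive, so $C(v_i) \geq C(v_{i-1})$; that is, $C$ is nondecreasing along $P$.

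Next I would sharpen this on non-tree edges. Suppose $e_i \notin E^\clubsuit$. By definition $h(v_i) = \arg\max_{(u,v_i) \in \delta^{-}(v_i)} C(u)$ and $E^\clubsuit = \{h(v) : v \in V \setminus \{\source\}\}$, so $e_i \neq h(v_i)$. Write $h(v_i) = (u^\star, v_i)$; since $G$ has no parallel edges ($E \subseteq V \times V$), $e_i \neq h(v_i)$ forces $u^\star \neq v_{i-1}$, and maximality of $h(v_i)$ gives $C(u^\star) \geq C(v_{i-1})$. Hence the sum defining $C(v_i)$ contains the two distinct summands $C(u^\star)$ and $C(v_{i-1})$, so $C(v_i) \geq C(u^\star) + C(v_{i-1}) \geq 2 C(v_{i-1})$.

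Finally I would chain these inequalities. Let $m := \sum_{i=1}^k \ind[e_i \notin E^\clubsuit]$. Telescoping the bounds $C(v_i) \geq C(v_{i-1})$ (for tree edges) and $C(v_i) \geq 2 C(v_{i-1})$ (for the $m$ non-tree edges) along $P$ gives $|\calX| = C(\sink) = C(v_k) \geq 2^{m} \cdot C(v_0) = 2^m \cdot C(\source) = 2^m$, and taking $\log$ base $2$ yields $m \leq \log|\calX|$, as claimed. I do not expect a genuine obstacle here; the only point needing slight care is verifying that $C(u^\star)$ and $C(v_{i-1})$ are indeed distinct terms of the sum (which is exactly where $u^\star \neq v_{i-1}$, and hence the absence of parallel edges, is used) and recalling the identity $C(\sink) = |\calX|$ established in the preceding text.
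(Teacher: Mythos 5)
Your proposal is correct and follows essentially the same argument as the paper: $C$ is nondecreasing along the path, at least doubles across each non-tree edge by maximality of $h(v_i)$, and telescoping from $C(\source)=1$ to $C(\sink)=|\calX|$ gives the bound. Your explicit check that $u^\star \neq v_{i-1}$ (no parallel edges) is a nice bit of care that the paper leaves implicit.
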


\begin{proof}
Since the number of distinct paths is always non-negative, for any $ i \in \llbracket k \rrbracket $, we have  
\[
C(v_i) = \sum_{(u, v_i) \in \delta^{-}(v_i)} C(u) \geq C(v_{i-1}),
\]  
where $ v_{i - 1} \in \delta^{-}(v_i) $. Moreover, for any non-tree edge $ e_i = (v_{i - 1}, v_i) \notin E^\clubsuit $, consider the tree edge $ h(v_i) = (u_{i - 1}, v_i) $, which is an incoming edge to vertex $ v_i $. The number of distinct paths from $ \source $ to $ v_i $ can then be lower bounded by:
\[
C(v_i) = \sum_{(u, v_i) \in \delta^{-}(v_i)} C(u) \geq C(u_{i-1}) + C(v_{i-1}) \geq 2C(v_{i - 1}),
\]
where the last inequality follows from the selection criteria of the tree edge $h(v_i)$. More generally, we have that

\[
\ind[e_i \notin E^\clubsuit] \leq \log\left(\frac{C(v_i)}{C(v_{i-1})}\right).
\]
Since $C(v_k) = C(\sink)= |\mathcal{X}|$ and $C(v_0) = C(\source) = 1$, summing these inequalities yields

\[
\sum_{i = 1}^k \ind[e_i \notin E^\clubsuit] \leq \log\left(\frac{C(\sink)}{C(v_0)}\right) = \log\left(|\mathcal{X}|\right).
\]
\end{proof}

We now introduce the \textit{centroid-based decomposition}: Given a directed tree $S = (V, E^\clubsuit)$, we identify a vertex $c \in V$ such that the connected components $\hat{S}_1, \dots, \hat{S}_k$ resulting from its removal satisfy $|\hat{V}_i| \leq |V|/2$ for all $i \in \llbracket k \rrbracket$, where $\hat{V}_i$ is the set of vertices in the subtree $\hat{S}_i$. Such a vertex $c$, known as the \textit{centroid}, always exists in any tree (see \citep{jordan1869assemblages,della2019new}).  

We associate the centroid $c$ with the tree $S$ by defining $S_c:= S$. The above procedure is then applied recursively to each component $\hat{S}_i$ for $i \in \llbracket k \rrbracket$. If a component reduces to a single vertex $c$, we designate $c$ as its centroid and terminate the recursion.  

Since the sets $\hat{V}_i$ resulting from the removal of $c$ form a partition of $V \setminus \{c\}$, each vertex $v \in V$ will eventually be assigned as the centroid of some subtree $S_v = (V_v, E^\clubsuit_v)$. Consequently, this procedure generates a collection of subtrees $\mathcal{T} := \{S_v : v \in V\}$, where each vertex $v$ is uniquely associated with a subtree of $S$ in which it serves as the centroid.  Furthermore, we define $\mathcal{T}(S_v) := \{S_w : w \in V_v\}$ as the centroid-based decomposition of the subtree $S_v$.  

The above construction transforms $S$ into a hierarchy of subtrees. The following folklore lemma establishes that the centroid-based decomposition systematically organizes every path in $S$.  

\begin{lemma} \label{lm:centroid-decomposition-uniqueness}  
    Let $\mathcal{T}$ be the centroid-based decomposition of the directed tree $S$.  
    For any pair of vertices $(u, v) \in V \times V$, there exists a unique subtree $S_w \in \mathcal{T}$ with centroid $w$ such that:  
    \begin{itemize}  
        \item Both $u$ and $v$ belong to the subtree $S_w$, i.e., $u, v \in V_w$, where $V_w$ is the vertex set of $S_w$.  
        \item The path from $u$ to $v$ in the underlying undirected graph of $S$ passes through $w$.  
    \end{itemize}  
\end{lemma}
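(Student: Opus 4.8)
The plan is to first record two structural facts about the centroid-based decomposition and then establish existence and uniqueness separately. The first fact is that every subtree $S_w = (V_w, E^\clubsuit_w)$ produced by the recursion is a connected, induced subtree of $S$ (its edges are exactly the edges of $S$ between vertices of $V_w$); this follows by an easy induction on the recursion, since $S$ is connected and deleting the centroid of a connected tree leaves connected components, each of which is again a connected tree on which we recurse. The second fact is that the vertex sets $\{V_w : w \in V\}$ form a laminar family: for distinct $w, w'$, either $V_w \cap V_{w'} = \emptyset$ or one strictly contains the other, and if $V_w \subsetneq V_{w'}$ then $w' \notin V_w$ and $V_w$ lies inside a single connected component of the forest $S_{w'} \setminus \{w'\}$. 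This also follows by induction: deleting the centroid $w'$ of $S_{w'}$ partitions $V_{w'} \setminus \{w'\}$ into the vertex sets of the children subtrees, and any $S_z$ with $z \in V_{w'} \setminus \{w'\}$ is nested inside exactly one of them. Throughout I will use the elementary convexity property of trees: if a vertex set induces a connected subgraph of a tree, it contains the unique path between any two of its members.

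For existence, I descend the hierarchy while maintaining the invariant that the current subtree $S'$ contains both $u$ and $v$, and hence (by convexity) the entire $u$-$v$ path of $S$. Start with $S'$ equal to the root subtree $S_c$. Given $S'$ with centroid $w'$: if $w'$ lies on the $u$-$v$ path, return $w := w'$; otherwise the path is a connected subgraph of $S' \setminus \{w'\}$ and thus lies in one component $\hat S'_j$, so in particular $u, v \in \hat S'_j$ and we recurse on $S'' := \hat S'_j$, which has at most half as many vertices. Since the subtree size strictly decreases at each recursion step, the process terminates, and it can only terminate in the first case, yielding $w$ with $u, v \in V_w$ and the $u$-$v$ path passing through $w$.

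For uniqueness, suppose $w \neq w'$ both satisfy the two conditions. Since $u \in V_w \cap V_{w'}$, laminarity forces the two sets to be nested, and since distinct subtrees have distinct centroids we may assume $V_w \subsetneq V_{w'}$. By the structural fact, $w' \notin V_w$ and $V_w$ is contained in a single connected component $D$ of $S_{w'} \setminus \{w'\}$. Then $u, v \in V_w \subseteq D$, so the $u$-$v$ path inside the connected subtree $S_{w'}$ stays in $D$ (convexity applied within $S_{w'}$) and therefore avoids $w'$; but this path is exactly the $u$-$v$ path of $S$ (convexity again, using $u, v \in V_{w'}$), contradicting the hypothesis that it passes through $w'$. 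Hence the subtree is unique.

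The main obstacle is the bookkeeping for the laminar/hierarchical structure — precisely formulating and proving that each $S_w$ is an induced connected subtree and that all further subtrees are nested inside the children of $S_{w'}$. Once those are in place, the tree-convexity observation carries all the geometric content, and both existence and uniqueness become short.
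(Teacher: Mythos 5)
Your proposal is correct and follows essentially the same route as the paper: the paper proves the lemma by induction on the tree size, with exactly your dichotomy (either the current centroid lies on the $u$--$v$ path, in which case that subtree works and no other can contain both endpoints, or the path lies in a single component and one recurses), which is precisely your descent argument for existence combined with your nesting argument for uniqueness. Your explicit laminarity and tree-convexity lemmas are just a repackaging of the bookkeeping that the paper's single induction absorbs implicitly, so the two proofs differ only in organization, not in substance.
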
  
\begin{proof}
    We will prove the statement by induction, showing that it holds for any tree $S = (V, E^\clubsuit)$ with at most $k$ vertices, along with its corresponding centroid-based decomposition $\mathcal{T}$.
    
    \paragraph{Base Case:} When the tree $S = (\{c\}, \emptyset)$ contains only a single vertex $v = c$, the statement holds trivially by $S_c$. The only valid pair is $(v, v)$, and the path from $v$ to itself contains $c$ by definition.
    
    \paragraph{Inductive Step:} Assume the statement holds for all trees with at most $k$ vertices. Consider a tree $S = (V, E^\clubsuit)$ with $|V| = k+1$ vertices and a pair of vertices $(u, v) \in V \times V$. Let $c$ be the centroid of $S$, and consider the path from $u$ to $v$ in the underlying undirected graph of $S$. We distinguish two cases:
    
    \begin{enumerate}
        \item \textbf{If the path from $u$ to $v$ contains $c$:} Then $S_c$ is the desired subtree. Since the undirected path from $u$ to $v$ passes through $c$, $u$ and $v$ must either lie in different connected components formed after removing $c$ from $S$, or one of them is $c$ itself. In both cases, no other subtree $S_{w'} \in \calT$ can contain both $u$ and $v$. Hence, $S_c$ is the only subtree satisfies the condition.
        
        \item \textbf{If the path from $u$ to $v$ does not contain $c$:} In this case, both $u$ and $v$ lie entirely within one of the connected components $\hat S_i$ formed by removing $c$ from $S$. By the induction hypothesis, there exists one subtree $S_w \in  \calT(\hat S_i)$ with the statement holds. For any other subtree $S_{w'} \in  \calT \setminus \calT(\hat S_i)$, we have neither $u$ nor $v$ contained in $S_{w'}$. Thus, $S_w$ is the only subtree satisfies the condition.
    \end{enumerate}
    
    \paragraph{Conclusion:} By mathematical induction, the statement holds for any tree $S = (V, E^\clubsuit)$. Thus, for any pair of vertices $(u, v) \in V \times V$, there exists a unique vertex $c \in V$ such that $u, v \in V_c$ and the path from $u$ to $v$ in the underlying undirected graph of $S$ passes through $c$.
\end{proof}

The following folklore lemma demonstrates that the total number of vertices introduced by the centroid-based decomposition is nearly linear in the number of vertices of the original tree:
\begin{lemma} \label{lm:centroid_total_size}
    Let $\calT$ be the centroid-based decomposition of the directed tree $S$.
    The total number of vertices among $S_v = (V_v, E^\clubsuit_v) \in \calT$ in centroid-based decomposition is upper-bounded by
    \begin{align*}
        \sum_{S_v \in \calT} |V_v| \leq  (1 + \log |V|) |V|.
    \end{align*}
\end{lemma}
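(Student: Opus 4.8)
The plan is to reorganize the sum $\sum_{S_v \in \calT} |V_v|$ by counting, for each vertex $u \in V$, how many subtrees in $\calT$ contain $u$ in their vertex set. Concretely, I would write
\[
\sum_{S_v \in \calT} |V_v| \;=\; \sum_{v \in V} \sum_{u \in V} \ind[u \in V_v] \;=\; \sum_{u \in V} \bigl|\{\, v \in V : u \in V_v \,\}\bigr|,
\]
so that it suffices to show every vertex $u$ lies in at most $1 + \log|V|$ of the sets $V_v$.

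To bound this count, I would follow the recursion that produces $\calT$ and assign a \emph{level} to each subtree it creates: the original tree $S$ has level $0$, and whenever a subtree at level $\ell$ has its centroid removed, each resulting connected component — which becomes a subtree further down the recursion — receives level $\ell+1$. A short induction on $\ell$, using the centroid property $|\hat V_i| \le |V_{\text{parent}}|/2$, shows that any subtree at level $\ell$ has at most $|V|/2^{\ell}$ vertices. Since every subtree is nonempty, this forces $2^\ell \le |V|$, so no subtree has level exceeding $\lfloor \log|V|\rfloor$, and the levels range over $\{0,1,\dots,\lfloor\log|V|\rfloor\}$.

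Next I would argue that a fixed vertex $u$ belongs to at most one subtree at each level. This is because the components created by deleting a centroid $c$ partition $V_{\text{parent}} \setminus \{c\}$: starting from the unique level-$0$ subtree $S \ni u$, at each step either $u$ is the centroid of the current subtree — in which case $u$ lies in none of its child components and the chain ends — or $u$ lies in exactly one child component, which is then the unique level-$(\ell+1)$ subtree containing $u$. Hence $u$ appears in at most one subtree per level, and therefore in at most $\lfloor\log|V|\rfloor + 1 \le 1 + \log|V|$ subtrees in total. Plugging this into the display above gives $\sum_{S_v \in \calT} |V_v| \le (1+\log|V|)\,|V|$.

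I do not expect a genuine obstacle here, since both the counting identity and the level bookkeeping are elementary; the only point requiring a little care is verifying that the components at a given recursion level genuinely partition the underlying vertex set, as it is precisely this fact that yields ``at most one subtree per level'' and hence the logarithmic depth bound.
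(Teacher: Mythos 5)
Your proof is correct, but it takes a different route from the paper's. The paper proves the bound by induction on the size of the tree: for a tree $S$ with centroid $c$ and components $\hat S_1,\dots,\hat S_k$ after removing $c$, it applies the induction hypothesis to each $\hat S_i$, uses $|\hat V_i|\leq |V|/2$ to absorb the ``$+1$'' into $\log|V|$ via $(1+\log|\hat V_i|)|\hat V_i|\leq |\hat V_i|\log|V|$, and sums over the components together with the top-level contribution $|V_c|=|V|$. You instead double count: you bound, for each fixed vertex $u$, the number of subtrees $S_v\in\calT$ with $u\in V_v$, by assigning recursion levels to the subtrees, showing via the halving property that a level-$\ell$ subtree has at most $|V|/2^{\ell}$ vertices (hence depth at most $\lfloor\log|V|\rfloor$), and showing that the components created at each step partition the parent's vertex set minus its centroid, so $u$ lies in at most one subtree per level. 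Both arguments hinge on the same centroid property $|\hat V_i|\leq |V_{\mathrm{parent}}|/2$; your version is slightly longer but proves the stronger per-vertex statement (every vertex lies in at most $1+\log|V|$ subtrees of the decomposition, i.e.\ the recursion depth is logarithmic), from which the lemma follows immediately, whereas the paper's induction bounds only the aggregate sum and is more compact. Your one flagged point of care---that the components at each step genuinely partition the parent's vertices excluding the centroid---is exactly the fact the paper also uses (there, in the form $\sum_i |\hat V_i| = |V|-1$), and it holds by construction, so there is no gap.
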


\begin{proof}
We will prove the statement by induction, showing that it holds for any tree $S = (V, E^\clubsuit)$ with at most $k$ vertices, along with its corresponding centroid-based decomposition $\mathcal{T}$.

\paragraph{Base Case:} When the tree $S = (\{c\}, \emptyset)$ contains only a single vertex $v = c$, we have:
\[
\sum_{S_v \in \calT} |V_{v}| = |V| = 1 \leq (1 + \log |V|) |V|,
\]
so the inequality holds.

\paragraph{Inductive Step:} Assume the statement holds for all trees with at most $k$ vertices. Consider a tree $S = (V, E^\clubsuit)$ with $|V| = k+1$ vertices. Denote by $\hat S_1, \dots, \hat S_k$ the subtrees after the removal of centroid $c$ from $S$. Let $\hat V_i$ be the set of vertices of $\hat S_i$.  By the induction hypothesis, we have
\[
\sum_{v\in \hat V_i} |V_{v}| \leq (1 + \log |\hat V_i|) |\hat V_i| \leq |\hat V_i| \log |V|,
\]
where the second inequality follows from $|\hat V_i| \leq |V_c| / 2$ which is the property of the centroid.

Therefore, the summation $\sum_{v \in V} |V_v|$ can be upper-bounded via:
\[
\sum_{S_v \in \calT} |V_v| = |V_c| + \sum_{i=1}^k \sum_{v \in \hat V_i} |V_{v}|
\leq |V| + \sum_{i=1}^k |\hat V_i| \log |V| 
\leq  (1 + \log |V|) |V|.
\]
where the last equality holds as $\sum_{i=1}^k |\hat V_i| = |V| - 1$.

\paragraph{Conclusion:} By mathematical induction on the size of the tree, we have that 
\[
        \sum_{S_v \in \calT} |V_v| \leq  (1 + \log |V|) |V|.
\]
\end{proof}

\begin{figure}[htbp]
    \centering
    \begin{tabular}{m{6.5cm}|m{8.2cm}}
    \toprule
        \centering\begin{tikzpicture}{object/.style={thin,double,<->}}
            \node[nodelbl, label=above:{\tiny{$C(A)=1$}}] (A) at (2/2,2) {\tiny{$A$}};
            \node[nodelbl, label=above:{\tiny{$C(B)=1$}}] (B) at (6/2+0.6,2) {\tiny{$B$}};
            \node[nodelbl, label=left:{\tiny{$C(C)=1$}}] (C) at (1/2,1) {\tiny{$C$}};
            \node[nodelbl, label=above right:{\tiny{$C(D)=2$}}] (D) at (3/2,1) {\tiny{$D$}};
            \node[nodelbl, label=below left:{\tiny{$C(E)=3$}}] (E) at (5/2+0.6,1) {\tiny{$E$}};
            \node[nodelbl, label=right:{\tiny{$C(F)=4$}}] (F) at (7/2+0.6,1) {\tiny{$F$}};
            \node[nodelbl, label=below:{\tiny{$C(G)=3$}}] (G) at (2/2,0) {\tiny{$G$}};
            \node[nodelbl, label=below:{\tiny{$C(H)=10$}}] (H) at (6/2+0.6,0) {\tiny{$H$}};

            \foreach \edge in {
                (A) -- (B),
                (A) -- (C),
                (C) -- (D),
                (D) -- (E),
                (D) -- (G),
                (E) -- (F),
                (F) -- (H),
            } {
                \StandardPath \edge;
            }

            \foreach \edge in {
                (A) -- (D),
                (B) -- (E),
                (B) -- (F),
                (C) -- (G),
                (E) -- (H),
                (G) -- (H),
            } {
                \HighlightPath \edge;
            }

            \node at (2.3, -1) {$G$};
        \end{tikzpicture} 
    &
        \begin{tikzpicture}
            \MakeTriplet A {2,2}
            \MakeTriplet B {6,2}
            
            \MakeTriplet C {1,1}
            \MakeTriplet D {3,1}
            \MakeTriplet E {5,1}
            \MakeTriplet F {7,1}
    
            \MakeTriplet G {2,0}
            \MakeTriplet H {6,0}
    
            \foreach \edge in {
                (A2) to[bend left=25] (B3),
                (A2) -- (C3),
                (E1) to[bend right=25] (F2),
                (F2) -- (H3),
                (A1) -- (D2),
                (C1) to[bend right=30] (D2),
                (D2) -- (G3),
                (D2) to[bend left=30] (E3),
                (D2) to[bend right=25] (F3),
                (D2) to[bend right=50] (H3),
            } {
                \StandardPath \edge;
            }
    
            \foreach \edge in {
                (A3) -- (D1),
                (C3) -- (G1),
                (E3) -- (H1),
                (G3) to[bend right=33] (H1),
                (B3) -- (E1),
                (B3) -- (F1),
            } {
                \HighlightPath \edge;
            }  
            \node at (4, -1) {$G^\dagger$};
        \end{tikzpicture} 
    \\ \midrule
        \centering\begin{tikzpicture}
            \node[nodelbl] (A) at (2/2,2) {\tiny{$A$}};
            \node[nodelbl] (B) at (6/2+0.6,2) {\tiny{$B$}};
            \node[nodelbl] (C) at (1/2,1) {\tiny{$C$}};
            \node[nodelblb] (D) at (3/2,1) {\textcolor{white}{\tiny{$D$}}};
            \node[nodelbl] (E) at (5/2+0.6,1) {\tiny{$E$}};
            \node[nodelbl] (F) at (7/2+0.6,1) {\tiny{$F$}};
            \node[nodelbl] (G) at (2/2,0) {\tiny{$G$}};
            \node[nodelbl] (H) at (6/2+0.6,0) {\tiny{$H$}};

            \foreach \edge in {
                (A) -- (B),
                (A) -- (C),
                (C) -- (D),
                (D) -- (E),
                (D) -- (G),
                (E) -- (F),
                (F) -- (H),
            } {
                \StandardPath \edge;
            }
            \node at (2.3, -1) {$S = S_D$};
        \end{tikzpicture} 
    &
    \begin{tikzpicture}
            \MakeTriplet A {2,2}
            \MakeTriplet B {6,2}
            
            \MakeTriplet C {1,1}
            \MakeTriplet D {3,1}
            \MakeTriplet E {5,1}
            \MakeTriplet F {7,1}
    
            \MakeTriplet G {2,0}
            \MakeTriplet H {6,0}
    
            \foreach \edge in {
                (A2) to[bend left=30] (B3),
                (A2) -- (C3),
                (E1) to[bend right=30] (F2),
                (F2) -- (H3),
            } {
                \StandardPath \edge;
            }
    
            \foreach \edge in {
                (A1) -- (D2),
                (C1) to[bend right=30] (D2),
                (D2) -- (G3),
                (D2) to[bend left=30] (E3),
                (D2) to[bend right=25] (F3),
                (D2) to[bend right=50] (H3),
            } {
                \HighlightPath \edge;
            } 
            \node at (4, -1) {$S^\dagger = S^\dagger_D$};
        \end{tikzpicture}
    \\ \midrule
        \centering\begin{tikzpicture}
            \node[nodelblb] (A) at (2/2,2) {\textcolor{white}{\tiny{$A$}}};
            \node[nodelbl] (B) at (6/2+0.6,2) {\tiny{$B$}};
            \node[nodelbl] (C) at (1/2,1) {\tiny{$C$}};
            \node[nodelbl] (E) at (5/2+0.6,1) {\tiny{$E$}};
            \node[nodelblb] (F) at (7/2+0.6,1) {\textcolor{white}{\tiny{$F$}}};
            \node[nodelblb] (G) at (2/2,0){\textcolor{white}{\tiny{$G$}}};
            \node[nodelbl] (H) at (6/2+0.6,0) {\tiny{$H$}};

            \foreach \edge in {
                (A) -- (B),
                (A) -- (C),
                (E) -- (F),
                (F) -- (H),
            } {
                \StandardPath \edge;
            }
            \node at (2.3, -1) {$S_A \cup S_G \cup S_F$};
        \end{tikzpicture}
    &
        \begin{tikzpicture}
            \MakeTriplet A {2,2}
            \MakeTriplet B {6,2}
            
            \MakeTriplet C {1,1}
            \MakeTriplet E {5,1}
            \MakeTriplet F {7,1}
    
            \MakeTriplet G {2,0}
            \MakeTriplet H {6,0}

            \foreach \edge in {
            } {
                \StandardPath \edge;
            }
    
            \foreach \edge in {
                (A2) to[bend left=30] (B3),
                (A2) -- (C3),
                (E1) to[bend right=30] (F2),
                (F2) -- (H3),
            } {
                \HighlightPath \edge;
            } 
            \node at (4, -1) {$S_A^\dagger \cup S_G^\dagger \cup S_F^\dagger$};
        \end{tikzpicture}
    \\ \bottomrule
    \end{tabular}
    \caption{An example graph conversion from $G$ to $G^\dagger$ is shown. The non-tree edges $E \setminus E^\clubsuit$ are shaded in $G$, and they correspond to the shaded edges in $G^\dagger$. The graph $S = (V, E^\clubsuit)$ has a centroid vertex $D$. Removing $D$ from $S$ results in three subtrees: $S_A$, $S_G$, and $S_F$. The new linked edges for the corresponding centroids are shaded in the graphs on the right. Recall $C(\cdot)$ is the number of distinct path from source to the vertex.}
    \label{fig:enter-label}
\end{figure}
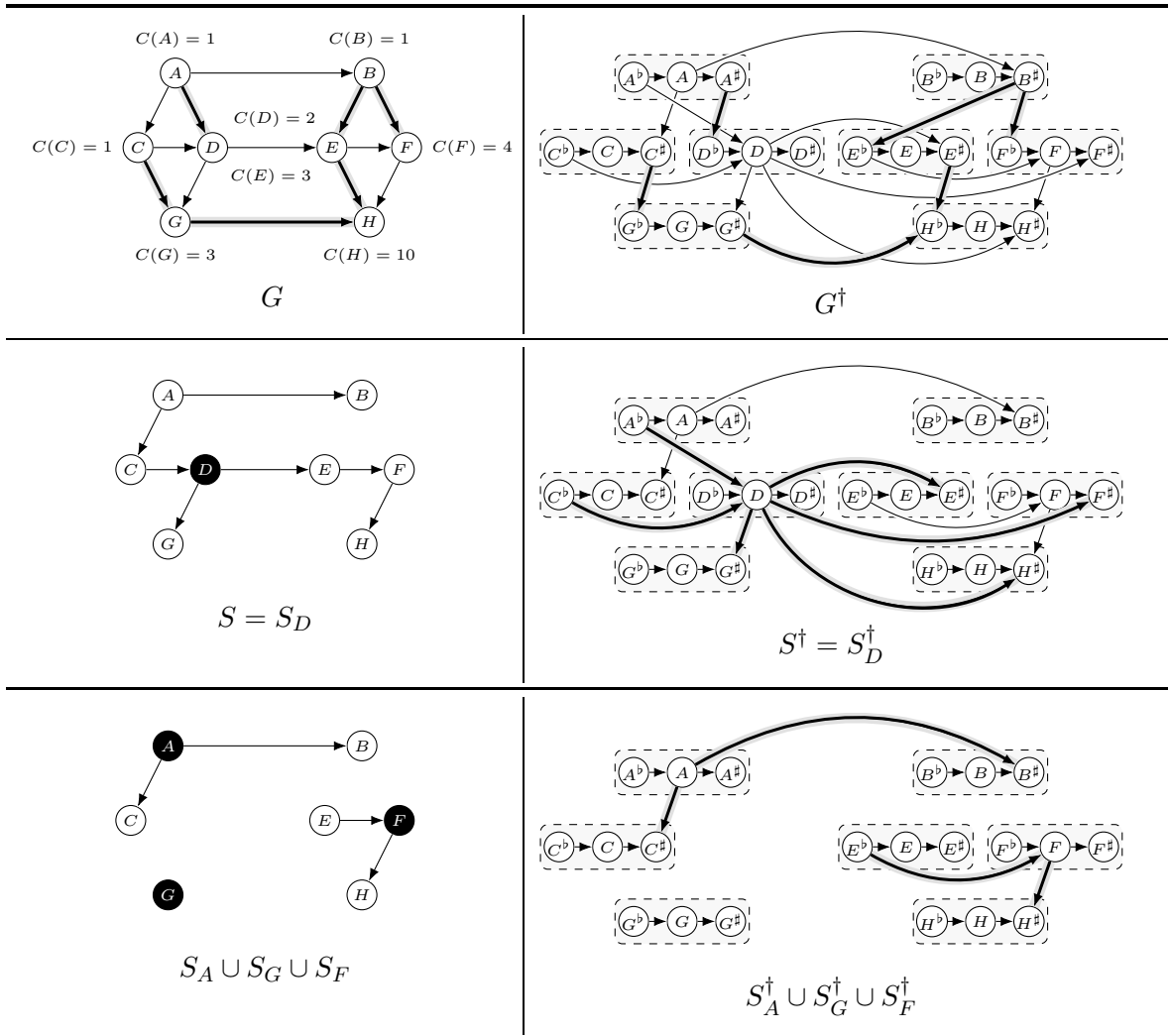

Starting from the tree $S=(V, E^\clubsuit)$, we start by transforming a selected tree $S_c = (V_c, E^\clubsuit_c)$ with centroid vertex $c$ into an equivalent graph $S^\dagger_c = (V^\dagger_c, E^\dagger_c)$ in a recursive way, using the centroid-based decomposition. Let $\hat{S}_1, \dots, \hat{S}_k$ be the subtrees obtained after removing the centroid $c$ from the tree $S_c$. Now, we do the following in a recursive way:
\begin{enumerate}
    \item Initialize $S^\dagger_c \leftarrow \bigcup_{i=1}^k \hat{S}^\dagger_i$, where $\hat S^\dagger_i$ is the transformed graph of $\hat{S}_i$.
    \item Update $V^\dagger_c \leftarrow V^\dagger_c \cup \{c^\flat, c, c^\sharp\}$.
    \item For each vertex $v \in V_c$:
    \begin{enumerate}
        \item If there is a directed path from $v$ to $c$ in $S_c$, or if $v$ is $c$, update $E^\dagger_c \leftarrow E^\dagger_c \cup \{(v^\flat, c)\}$.
        \item If there is a directed path from $c$ to $v$ in $S_c$, or if $v$ is $c$, update $E^\dagger_c \leftarrow E^\dagger_c \cup \{(c, v^\sharp)\}$.
    \end{enumerate}
\end{enumerate}
Finally, the graph $G^\dagger = (V^\dagger, E^\dagger)$ is generated as follows:
\begin{enumerate}
    \item Initialize $G^\dagger \leftarrow S^\dagger$.
    \item For each non-tree edge $(u, v) \in E \setminus E^\clubsuit$, update $E^\dagger \leftarrow E^\dagger \cup \{(u^\sharp, v^\flat)\}$.
\end{enumerate}
We refer the reader to \Cref{fig:enter-label} for one example of such a conversion.
We now demonstrate that the converted graph $G^\dagger$ is essentially equivalent to $G$. We define a mapping $\sigma: E^\dagger \to 2^E$ as follows:
\begin{itemize}
    \item For $e^\dagger = (v^\flat, c)$, $\sigma(e^\dagger)$ consists of all edges on the unique path from $v$ to $c$ in the tree $S$.
    \item For $e^\dagger = (c, v^\sharp)$, $\sigma(e^\dagger)$ consists of all edges on the unique path from $c$ to $v$ in the tree $S$.
    \item For $e^\dagger = (u^\sharp, v^\flat)$, $\sigma(e^\dagger) = \{(u, v)\} \subseteq E \setminus E^\clubsuit$ contains the corresponding edge.
\end{itemize}
The above mapping assigns each edge $e^\dagger = (u^\dagger, v^\dagger) \in E^\dagger$ a path from $u$ to $v$ (which may be empty), as specified by $\sigma(e^\dagger)$, where $w^\dagger \in \{w^\flat, w, w^\sharp\}$ for $w \in \{u, v\}$. 
Denote by $\calP^\dagger$ the set of paths from $\source^\flat$ to $\sink^\sharp$ in $G^\dagger$.
The following lemma establishes an important property of $\sigma(e^\dagger)$.
\restateLemma{lem:me1capme2-main}
\begin{proof}
    Consider a path $P^\dagger$ in $G^\dagger$. Suppose, for the sake of contradiction, that there exist two distinct edges $e_1^\dagger$ and $e_2^\dagger$ in $P^\dagger$ such that $\sigma(e_1^\dagger) \cap \sigma(e_2^\dagger) \neq \emptyset$. Let $(u, v)$ be an edge in the intersection. This implies that there is a sub-path in $G$ that starts at $u$, passes through $v$, and ends at $u$, contradicting our assumption that $G$ is a DAG.
\end{proof}

The next lemma shows that $G^\dagger$ is a DAG:
\begin{lemma}\label{gnew:dag}
    The graph $G^\dagger$ is a Directed Acyclic Graph with source node $\source^\flat$ and sink node $\sink^\sharp$.
\end{lemma}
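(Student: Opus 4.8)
The plan is to establish three facts and combine them: (a) $G^\dagger$ is acyclic; (b) $\source^\flat$ has no incoming edge in $G^\dagger$; and (c) $\sink^\sharp$ has no outgoing edge in $G^\dagger$. Together with the observation that $\source^\flat, \sink^\sharp \in V^\dagger$ — which holds because every vertex of $V$, in particular $\source$ and $\sink$, is eventually the centroid of some subtree in the decomposition and therefore contributes its triple $\{v^\flat, v, v^\sharp\}$ to $V^\dagger$ — this gives the lemma.

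For (a) I would argue by contradiction, reusing the walk-lifting device from the proof of Lemma~\ref{lem:me1capme2-main}. Suppose $G^\dagger$ contains a directed cycle $v_0^\dagger \to v_1^\dagger \to \dots \to v_m^\dagger = v_0^\dagger$, and write $e_i^\dagger := (v_{i-1}^\dagger, v_i^\dagger)$. From the three cases in the definition of $\sigma$, each $\sigma(e_i^\dagger)$ is a (possibly empty) directed path in $G$ from the base vertex of $v_{i-1}^\dagger$ to the base vertex of $v_i^\dagger$, where the base vertex of any $w^\dagger \in \{w^\flat, w, w^\sharp\}$ is $w$. Concatenating $\sigma(e_1^\dagger), \dots, \sigma(e_m^\dagger)$ therefore yields a closed walk in $G$; since $G$ is a DAG, this walk has no edges, so every $\sigma(e_i^\dagger)$ is empty and consequently all base vertices along the cycle coincide — call the common value $w$. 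Inspecting $\sigma$, an edge has empty image only if it is of the form $(v^\flat, c)$ or $(c, v^\sharp)$ with $v = c$; with all base vertices equal to $w$, these are exactly the edges $w^\flat \to w$ and $w \to w^\sharp$. Hence the cycle uses only these two edges, but they form the length-two path $w^\flat \to w \to w^\sharp$ (there is no edge back into $w^\flat$ nor out of $w^\sharp$ among them), which is acyclic — a contradiction, so $G^\dagger$ has no directed cycle.

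For (b), observe that among the three families of edges of $G^\dagger$, edges of the form $(v^\flat, c)$ have head $c$ (undecorated) and edges of the form $(c, v^\sharp)$ have head $v^\sharp$, so the only edges whose head is a $\flat$-vertex $v^\flat$ are the copies $(u^\sharp, v^\flat)$ of non-tree edges $(u,v) \in E \setminus E^\clubsuit$. Since $\source$ is the source of $G$, it has no incoming edge in $G$ (an incoming edge $(u,\source)$ together with a path from $\source$ to $u$, which exists by assumption, would create a cycle in $G$), so there is no non-tree edge of the form $(u,\source)$, and thus $\source^\flat$ has no incoming edge in $G^\dagger$. Part (c) is symmetric: the only edges whose tail is a $\sharp$-vertex $v^\sharp$ are the copies $(u^\sharp, v^\flat)$ of non-tree edges, and since $\sink$ has no outgoing edge in $G$, $\sink^\sharp$ has no outgoing edge in $G^\dagger$. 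The main obstacle is the finite case analysis inside part (a) — pinning down exactly which $\sigma$-images are empty and checking that the subgraph induced on a single triple $\{w^\flat, w, w^\sharp\}$ contains no edges other than $w^\flat \to w$ and $w \to w^\sharp$; everything else is routine structural bookkeeping, and the lifting of a hypothetical $G^\dagger$-cycle to a closed walk in $G$ is precisely the argument already used for Lemma~\ref{lem:me1capme2-main}.
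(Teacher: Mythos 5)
Your proof is correct and follows essentially the same route as the paper: lift a hypothetical cycle in $G^\dagger$ through $\sigma$ to a closed walk in $G$ and invoke acyclicity of $G$, then verify the source/sink claims by inspecting which edge forms can enter a $\flat$-vertex or leave a $\sharp$-vertex. Your explicit empty-image analysis of the degenerate case (all base vertices equal) is just a more detailed version of the paper's one-line observation that no cycle can involve only the triple $\{c^\flat, c, c^\sharp\}$.
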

\begin{proof}
First, we show that $G^\dagger$ is a directed acyclic graph. Note that by construction, there is no directed cycle involving only the three vertices $c^\flat, c, c^\sharp$ for any $c\in V$. Suppose, for the sake of contradiction, that there exists a path $P^\dagger = (v_0^\dagger, e_1^\dagger, v_1^\dagger, \dots, v_{\ell-1}^\dagger, e_\ell^\dagger, v_\ell^\dagger)$ in $G^\dagger$ such that $v_0^\dagger = v_\ell^\dagger$ and $w^\dagger\in \{w^\flat,w,w^\sharp\}$ for all $w\in V$. Let $i$ be the smallest index such that $v_0 \neq v_i$. This implies that there is a path in $G$ from $v_0$ to $v_i$ using the edges $\bigcup_{j \in \llbracket  i \rrbracket} \sigma(e_j^\dagger)$, and a path from $v_i$ back to $v_0$ using the edges $\bigcup_{j \in \llbracket i+1, \ell \rrbracket} \sigma(e_j^\dagger)$, which contradicts our assumption that $G$ is a DAG.

By construction, there is no incoming edges to $\source^\flat$ in $G^\dagger$ as there are no incoming edges to $\source$ in $G^\dagger$. Similarly, there is no outgoing edges from $\sink^\flat$ in $G^\dagger$ as there are no outgoing edges from $\sink$ in $G^\dagger$.
\end{proof}

The next lemma demonstrates that this mapping establishes a bijection between the paths from $\source$ to $\sink$ in $G$ and the paths $P^\dagger$ from $\source^\flat$ to $\sink^\sharp$ in $G^\dagger$. Note that we slightly abuse the notation $\sigma$.
\restateLemma{gnew:paths-main}
\begin{proof}
    First, observe that for a path $P^\dagger$ in $G^\dagger$ from $\source^\flat$ to $\sink^\sharp$, the set of edges $\bigcup_{e^\dagger \in P^\dagger} \sigma(e^\dagger)$ forms a path $P$ from $\source$ to $\sink$, where the union is over all the edges in $P^\dagger$. This is because any edge $(u^\dagger,v^\dagger)\in E^\dagger$ corresponds to a path from $u$ to $v$ in $G$. As $\sigma(e^\dagger)$ can be computed efficiently for any edge $e^\dagger$, $\sigma(P^\dagger)$ can be computed efficiently for any path $P^\dagger$. Let $g$ denote this mapping from $P^\dagger$ to $P$. We now show that the mapping $g$ is a bijection.

    Consider a path $P = (v_0, e_1, v_1, \dots, e_k, v_k)$ in $G$, where $v_0 = \source$ and $v_k = \sink$. Let us assume that there is at least one non-tree edges. An analogous proof exists if there are no non-tree edges. Let $e_{i_1}, e_{i_2}, \dots, e_{i_t}$ be the sequence of non-tree edges in the path, that is, $e_{i_j} \in E \setminus E^\clubsuit$ for each $j \in \llbracket t \rrbracket$. These edges partition the path $P$ into several segments:
    \[
        (v_0, e_1, \dots, v_{i_1 - 1}), e_{i_1}, (v_{i_1}, e_{i_1+1}, \dots, v_{i_2 - 1}), \dots, e_{i_t}, (v_{i_t}, e_{i_t+1}, \dots, v_k),
    \]
    where each segment $(v_{i_j}, e_{i_j+1}, \dots, v_{i_{j+1} - 1})$ is a path in $G$ that consists only of edges from the directed tree $S = (V, E^\clubsuit)$. 
    
    Let $i_0=0$ and $i_{t+1}=k+1$. By \Cref{lm:centroid-decomposition-uniqueness}, for any $j\in \llbracket 0,t\rrbracket$, there exists a unique subtree $S_{c_j} \in \calT$ with centroid $c_j$ such that $S_{c_j}$ contains the path from $v_{i_j}$ to $v_{i_{j+1}-1}$, and this path contains $c_j$. By the construction of the graph $G^\dagger$, the edges
    \[
    e_j^\flat := (v^\flat_{i_j}, c_j) \in E^\dagger \quad \text{and} \quad e_j^\sharp := (c_j, v^\sharp_{i_{j+1}-1}) \in E^\dagger
    \]
    are present in $G^\dagger$. Furthermore, for any $j\in \llbracket t\rrbracket$, the graph $G^\dagger$ also contains the edge $e_{i_j}^\dagger := (v^\sharp_{i_{j}-1}, v^\flat_{i_j}) \in E^\dagger$, since $e_{i_j} = (v_{i_j-1}, v_{i_j}) \in E \setminus E^\clubsuit$ is a non-tree edge. As a result, $G^\dagger$ contains the following path from $v_{i_0}^\flat=\source^\flat$ to $v_{i_{t+1}-1}^\sharp=\sink^\sharp$ : 
    \[
        P^\dagger := (v_{i_0}^\flat, e_{i_0}^\flat, c_{i_0}, e_{i_0}^\sharp, v_{i_1-1}^\sharp, e_{i_1}^\dagger, v_{i_1}^\flat, \dots, e_{i_t}^\sharp, v_{i_{t+1}-1}^\sharp).
    \]

\end{proof}

The previous lemma shows that the decision problem for the shortest path in $G$ can be converted to the shortest path problem in $G^\dagger$, and vice versa. Let $w: E \to \bbR$ be a weight function in the graph $G = (V, E)$. Define $w^\dagger: E^\dagger \to \bbR$ as the weight function for the converted graph $G^\dagger = (V^\dagger, E^\dagger)$:
\begin{align}\label{eq:weight-conversion}
    w^\dagger(e^\dagger) := \ind[|\sigma(e^\dagger)|\geq 1]\cdot\sum_{e \in \sigma(e^\dagger)} w(e).
\end{align}
Using this mapping, we can convert a decision problem on $G$ to a decision problem on $G^\dagger$:
\restateLemma{gnew:conversion-main}
The proof of the above lemma can be found in the main body.
Finally, we need to show that the graph $G^\dagger$ satisfies the required size constraints:
\restateLemma{gnew:size-main}
\begin{proof}
    Each vertex $v \in V$ corresponds to three vertices $v^\flat$, $v$, and $v^\sharp$ in the graph $G^\dagger$. Thus, $G^\dagger$ contains a total of $3|V|$ vertices. Moreover, for each node $c \in V$, we add at most $|V_c| + 1$ edges of the form $(v^\flat, c)$ or $(c, v^\sharp)$. For each non-tree edge $e \in E \setminus E^\clubsuit$, we add one additional edge to $E^\dagger$. Therefore, the total number of edges can be bounded by:
    \[
        |E^\dagger| \leq \sum_{c \in V} (|V_c| + 1) + |E \setminus E^\ddagger| \leq |V| \log |V| + 2|V| + |E|,
    \]
    where the last inequality follows from Lemma~\ref{lm:centroid_total_size}.

    Recall from the proof of Lemma \ref{gnew:paths-main} that any path $P^\dagger$ can be represented as 
    \[
        P^\dagger := (v_0^\flat, e_0^\flat, c_0, e_0^\sharp, v_{i_1-1}^\sharp, e_{i_1}^\dagger, v_{i_1}^\flat, \dots, e_t^\sharp, v_{i_t}^\sharp),
    \]
    where $e_{i_j}^\dagger$ corresponds to some non-tree edge $e_{i_j} \in E \setminus E^\clubsuit$. According to Lemma \ref{lm:centroid-decomposition-logX}, we have $t \leq \log |\mathcal{X}|$. Since there are exactly $3t + 2$ edges in $P^\dagger$, the longest path in $G^\dagger$ is upper bounded by $\calO(\log |\calX|)$.
\end{proof}

By combining Lemma~\ref{gnew:conversion-main}, Lemma~\ref{gnew:size-main}, and applying the regret guarantee of our FTRL algorithm from the previous section, we establish the main theorem:
\restateTheorem{actual:thm}

\section{Applications}\label{appendix:applications}
In this section, we demonstrate the application of our FTRL approach to various well-known combinatorial sets $\calX \subseteq \{0,1\}^d$. The core idea is to efficiently reduce problems involving these combinatorial sets to a problem on a directed acyclic graph (DAG) and establish the corresponding regret bound. Our method can be seen as a computationally efficient FTRL approach for these sets.  

In certain cases, we either match or improve upon the $\mathcal{O}(\sqrt{dT\log |\calX|})$ regret bound achieved by EXP3 with Kiefer-Wolfowitz exploration. In other cases, we demonstrate improvements over the best-known high-probability regret guarantees achieved by an efficient algorithm, specifically that of \cite{zimmert2022return}. We note that the high-probability regret guarantee was formally proven only for continuous sets by \citet{zimmert2022return}. However, we believe their analysis extends to discrete decision sets, such as the combinatorial sets considered, using the same techniques as \citet{abernethy2008competing}.

\subsection{Hypercube}
Let $\calX = \{0,1\}^d$ denote the combinatorial hypercube. We construct a DAG $G$ as follows. The vertex set and edge set are
$$V := \{v_0\} \cup \{v_i^\dagger, v_i\}_{i=1}^d, \quad E := \{(v_{i-1}, v_{i}), (v_{i-1}, v_{i}^\dagger), (v_{i}^\dagger, v_{i})\}_{i=1}^d$$
respectively. In the graph $G$, $\source = v_0$ is the source vertex, and $\sink = v_{d}$ is the sink vertex.

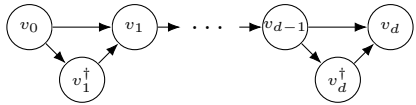
\begin{figure}[h]
    \centering
    \tikzstyle{nodelbl}=[draw,fill=white,circle,inner sep=0, minimum width=6mm]
    \begin{tikzpicture}{object/.style={thin,double,<->}}
        \node[nodelbl] (v0) at (0,0) {\tiny{$v_0$}};
        \node[nodelbl] (v11) at (0.7,-0.7) {\tiny{$v_{1}^\dagger$}};
        \node[nodelbl] (v1) at (1.4,0) {\tiny{$v_1$}};

        \node (zzz) at (2.4, 0.) {\dots};
        
        \node[nodelbl,label=center:{\tiny{$v_{d-1}$}}] (vd-1) at (3.4,0) {};
        \node[nodelbl] (vd1) at (4.1,-0.7) {\tiny{$v_{d}^\dagger$}};
        \node[nodelbl] (vd) at (4.8,0) {\tiny{$v_d$}};

        \foreach \edge in {
            (v0) -- (v1),
            (v0) -- (v11),
            (v11) -- (v1),
            (v1) -- (zzz),
            (zzz) -- (vd-1),
            (vd-1) -- (vd),
            (vd-1) -- (vd1),
            (vd1) -- (vd)
        } {
            \StandardPath \edge;
        }
    \end{tikzpicture} 
    \caption{Conversion of hypercube to DAG}
\end{figure}

Next, for any loss function $\bfy_t : \llbracket d \rrbracket \to \mathbb{R}$, we define a weight function $w_t: E \to \bbR$ as follows:
\begin{align*}
    w_t(e) = \begin{cases}
        \bfy_t[i] & \textrm{if } e=(v_{i-1},v_i^\dagger) \textrm{ for some } i\in\llbracket d\rrbracket \\
        0 & \textrm{otherwise}
    \end{cases}.
\end{align*}

We now apply our FTRL algorithm from \Cref{sec:alg-equal} to the DAG $G$. At each round $t$, if the FTRL algorithm selects a path $P_t$ in $G$, we choose $\bfx_t \in \calX$ such that for any $i \in \llbracket d \rrbracket$, $\bfx_t[i] = 1$ if the edge $(v_{i-1}, v_{i}^\dagger)$ is part of the path $P_t$, and $\bfx_t[i] = 0$ otherwise. By the construction of $w_t$, it follows that $\langle \bfx_t, \bfy_t \rangle = w_t(P_t)$. Consequently, we provide $\langle \bfx_t, \bfy_t \rangle$ as the bandit feedback for the path $P_t$ to the FTRL algorithm. The way we choose $\bfx_t$ induces a bijective mapping between the set of vectors in $\calX$ and the set of paths in $G$, ensuring correctness. Moreover, our algorithm is computationally efficient.

 Finally, observe that each path in $G$ has a length of $d$. Thus, we incur a high-probability regret of $\tilde{\calO}(d\sqrt{T})$ against an adaptive adversary, which is near-optimal for the hypercube. This also improves upon the best-known high-probability regret bound of $\tilde{\calO}(d^2\sqrt{T})$ achieved by \cite{zimmert2022return}.

\subsection{Multi-Task Multi-Armed Bandits}\label{appendix:application-multi-task}

In the Multi-task Multi-Armed Bandit problem, we are given a set of $m$ MAB problems, where in the $i$-th MAB problem there are $d_i$ arms. In each round, we choose one arm from each MAB problem simultaneously and receive the sum of the losses of the arms chosen as the loss feedback. The goal is to do regret minimization w.r.t best arm in each MAB problem in hindsight.

The multi-task MAB problem is formally formulated as follows. Let $d=\sum_{i=1}^m d_i$. Let $d_{1:i}=\sum_{j=1}^id_j$ and let $d_{1:0}=0$. The set $\calX$ of arms is defined as follows:
\begin{equation*}
    \calX=\left\{\bfx\in \{0,1\}^d: \forall j\in \llbracket m \rrbracket \sum_{i=d_{1:j-1}+1}^{d_{1:j}} \bfx[i]=1\right\}
\end{equation*}
For each round $t$, the loss function $\bfy_t:\llbracket d \rrbracket\to \mathbb{R}$ is chosen by an adversary. In each round the agent draw $\bfx_t\in \calX$ and observe loss $\langle \bfx_t,\bfy_t\rangle\in [-1,1]$. The goal is to minimize the following regret:
\begin{equation*}
    \mathrm{Regret}(T) := \sum_{t=1}^T \langle \bfx_t, \bfy_t \rangle - \min_{\bfx\in\calX}\sum_{t=1}^T \langle \bfx, \bfy_t \rangle
\end{equation*}

We now reduce the problem to online shortest path on DAG. We first construct a DAG $G$ as follow: The vertex set and edge set are
$$V := \{v_i\}_{i=0}^m \cup \{v_{i}^j \mid i \in \llbracket m \rrbracket, j \in \llbracket d_i \rrbracket \}, \qquad E := \{(v_{i-1},v_{i}^j), (v_{i}^j, v_{i}) \mid i \in \llbracket m \rrbracket, j \in \llbracket d_i \rrbracket \}$$
respectively. In graph $G$, $\source = v_0$ is the source vertex, and $\sink = v_m$ is the sink vertex.

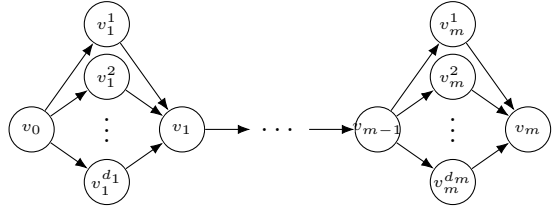
\begin{figure}[h]
    \centering
    \tikzstyle{nodelbl}=[draw,fill=white,circle,inner sep=-.5mm, minimum width=6mm]
    \begin{tikzpicture}{object/.style={thin,double,<->}}
        \node[nodelbl] (v0) at (0,0) {\tiny{$v_0$}};
        \node[nodelbl] (v11) at (1,1.4) {\tiny{$v^1_1$}};
        \node[nodelbl] (v12) at (1,0.7) {\tiny{$v^2_1$}};
        \node[nodelbl] (v1d) at (1,-0.7) {\tiny{$v^{d_1}_{1}$}};
        \node[yshift=0.25em] at ($(v12)!0.5!(v1d)$) {\vdots};
        \node[nodelbl] (v1) at (2,0) {\tiny{$v_1$}};

        \node (zzz) at (3.3, 0.) {\dots};
        
        \node[nodelbl,label=center:{\tiny{$v_{m-1}$}}] (vm-1) at (4.6,0) {};
        \node[nodelbl] (vm1) at (5.6,1.4) {\tiny{$v^1_m$}};
        \node[nodelbl] (vm2) at (5.6,0.7) {\tiny{$v^2_m$}};
        \node[nodelbl] (vmd) at (5.6,-0.7) {\tiny{$v^{d_m}_{m}$}};
        \node[yshift=0.25em] at ($(vm2)!0.5!(vmd)$) {\vdots};
        \node[nodelbl] (vm) at (6.6,0) {\tiny{$v_m$}};

        \foreach \edge in {
            (v0) -- (v11),
            (v0) -- (v12),
            (v0) -- (v1d),
            (v11) -- (v1),
            (v12) -- (v1),
            (v1d) -- (v1),
            (v1) -- (zzz),
            (zzz) -- (vm-1),
            (vm-1) -- (vm1),
            (vm-1) -- (vm2),
            (vm-1) -- (vmd),
            (vm1) -- (vm),
            (vm2) -- (vm),
            (vmd) -- (vm)
        } {
            \StandardPath \edge;
        }
    \end{tikzpicture} 
    \caption{Conversion of Multi-task MAB to DAG}
\end{figure}

Next, for any loss vector $\bfy_t : \llbracket d \rrbracket \to \mathbb{R}$, we define a weight function $w_t : E \to \mathbb{R}$ as follows:
\[
    w_t(e) = \begin{cases}
        \bfy_t[d_{1:i-1}+j] & \textrm{if } e=(v_{i-1},v_i^j) \textrm{ for some } i\in\llbracket m\rrbracket, j \in \llbracket d_i \rrbracket \\
        0 & \textrm{otherwise}
    \end{cases}.
\]

We now apply our FTRL algorithm from \Cref{sec:alg-equal} to the DAG $G$. At each round $t$, if the FTRL algorithm selects a path $P_t$ in $G$, we choose $\bfx_t \in \calX$ such that for any $i \in \llbracket m \rrbracket$ and any $j\in \llbracket d_i \rrbracket$, $\bfx_t[d_{1:i-1}+j] = 1$ if the edge $(v_{i-1}, v_i^j)$ is part of the path $P_t$, and $\bfx_t[d_{1:i-1}+j] = 0$ otherwise. By the construction of $w_t$, it follows that $\langle \bfx_t, \bfy_t \rangle = w_t(P_t)$. Consequently, we provide $\langle \bfx_t, \bfy_t \rangle$ as the bandit feedback for the path $P_t$ to the FTRL algorithm. The way we choose $\bfx_t$ induces a bijective mapping between the set of vectors in $\calX$ and the set of paths in $G$, ensuring correctness. Moreover, our algorithm is computationally efficient.

For our FTRL approach in \Cref{sec:alg-equal}, instead of equating all the coordinates of $\boldsymbol{\gamma}$ to the same fixed value, we assign it differently. Then using our analysis in \Cref{appendix:alg-equal}, one can easily show that the following holds with probability at least $1-\delta$:
\begin{align*}
    R_T(\bfx)\leq c_1\cdot d_\star\sqrt{T}+c_2\cdot \left(T\sum_{v\in V}\boldsymbol{\gamma}[v]+T\sum_{e\in E}\boldsymbol{\gamma}[e]+\sum_{v\in V}\bfx[v]\cdot\frac{\log(d/\delta)}{\boldsymbol{\gamma}[v]}+\sum_{e\in E}\bfx[e]\cdot\frac{\log(d/\delta)}{\boldsymbol{\gamma}[e]}\right)
\end{align*}
where $d_\star:=\max_{\bfx\in \operatorname{co}(\calX) }\sum_{e\in E}\sqrt{\bfx[e]}+\sum_{v\in V} \sqrt{\bfx[v_i]}$

We now assign values to each coordinate of $\boldsymbol{\gamma}$ as follows. For all $i\in \llbracket 0,m\rrbracket$, we have:
\begin{equation*}
    \boldsymbol{\gamma}[v_i]=\sqrt{\frac{\log(d/\delta)}{T}}.
\end{equation*}
Next for all $i\in \llbracket m\rrbracket$ and all $j\in\llbracket d_i\rrbracket$, we have:
\begin{equation*}
    \boldsymbol{\gamma}[v_i^j]=\boldsymbol{\gamma}[(v_{i-1}, v_i^j)]=\boldsymbol{\gamma}[(v_i^j, v_i)]=\sqrt{\frac{\log(d/\delta)}{d_iT}}.
\end{equation*}

Having defined the vector $\boldsymbol{\gamma}$, we now upper bound the second term in the regret above. First, we have:
\begin{equation*}
    T\sum_{i=1}^m\boldsymbol{\gamma}[v_i]+\sum_{i=1}^m\frac{\log(d/\delta)}{\boldsymbol{\gamma}[v_i]}\leq 2m\sqrt{T\log(d/\delta)}\leq 2\sum_{i=1}^m\sqrt{d_iT\log(d/\delta)}
\end{equation*}
where we get the inequality due to the fact that $d_i\geq 1$ for all $i\in\llbracket m \rrbracket$.

Next, we have
\begin{align*}
    &\quad T\sum_{i=1}^m\sum_{j=1}^{d_i}\left(\boldsymbol{\gamma}[v_i^j]+\boldsymbol{\gamma}[(v_{i-1},v_i^j)]+\boldsymbol{\gamma}[(v_i^j,v_i)]\right)\\
    &+\sum_{i=1}^m\sum_{j=1}^{d_i}\left(\frac{\bfx[v_i^j]\cdot\log(d/\delta)}{\boldsymbol{\gamma}[v_i^j]}+\frac{\bfx[(v_{i-1},v_i^j)]\cdot\log(d/\delta)}{\boldsymbol{\gamma}[(v_{i-1},v_i^j)]}+\frac{\bfx[(v_i^j,v_i)]\cdot\log(d/\delta)}{\boldsymbol{\gamma}[(v_i^j,v_i)]}\right)\\
     &= T\sum_{i=1}^m\sum_{j=1}^{d_i}\left(\boldsymbol{\gamma}[v_i^j]+\boldsymbol{\gamma}[(v_{i-1},v_i^j)]+\boldsymbol{\gamma}[(v_i^j,v_i)]\right)\\
    &+\sum_{i=1}^m\sum_{j=1}^{d_i}\left(\frac{\bfx[v_i^j]\cdot\log(d/\delta)}{\boldsymbol{\gamma}[v_i^j]}+\frac{\bfx[v_i^j]\cdot\log(d/\delta)}{\boldsymbol{\gamma}[(v_{i-1},v_i^j)]}+\frac{\bfx[v_i^j]\cdot\log(d/\delta)}{\boldsymbol{\gamma}[(v_i^j,v_i)]}\right)\tag{as $\bfx[v_i^j]=\bfx[(v_{i-1},v_i^j)]=\bfx[(v_i^j,v_i)]$}\\
    &= 3T \sum_{i=1}^m\sum_{j=1}^{d_i} \sqrt{\frac{\log(d/\delta)}{d_iT}}+3\sum_{i=1}^m \sqrt{d_iT\log(d/\delta)}\\
    &= 6\sum_{i=1}^m \sqrt{d_iT\log(d/\delta)}
 \end{align*}
where we get the second equality due to the fact that for any $i\in \llbracket m\rrbracket$, there exists exactly one index $j\in\llbracket d_i\rrbracket$ such that $\bfx[v_i^j]=1$.

Hence, the second term of the regret above is upper bounded by $8\sum_{i=1}^m \sqrt{d_iT\log(d/\delta)}$.

Next, we upper bound $d_\star$. Fix any flow $\bfx\in \operatorname{co}(\calX)$. First observe that $\bfx[v_i]=1$ for all $i\in\llbracket 0,m \rrbracket$. Due to the flow constraints, we also have $\sum_{j=1}^{d_i}\bfx[(v_{i-1},v_i^j)]=1$ for any $i\in \llbracket m \rrbracket$. Next observe that for any $i \in \llbracket m \rrbracket$ and any $j\in \llbracket d_i \rrbracket$, $\bfx[(v_{i-1},v_i^j)]=\bfx[v_i^j]=\bfx[(v_i^j,v_i)]$. Now we have the following:
\begin{align*}
    d_\star&= 3\sum_{i=1}^m\sum_{j=1}^{d_i} \sqrt{\bfx[(v_{i-1},v_i^j)]}+m+1 \leq 3\sum_{i=1}^m\sqrt{d_i}+m+1\leq 5\sum_{i=1}^m\sqrt{d_i}
\end{align*}
where we get the first inequality due to Cauchy-Swartz and we get the second equality as $m\leq\sum_{i=1}^m\sqrt{d_i}$. 

Hence, we obtain a high-probability regret upper bound of $\tilde{\mathcal{O}}(\sum_{i=1}^m\sqrt{d_i T})$. In \Cref{appendix:multi-MAB-lower-bound}, we show that this bound is nearly tight by proving a lower bound of $\Omega(\sum_{i=1}^m\sqrt{d_i T})$. Moreover, this bound can be significantly better than the $\mathcal{O}(\sqrt{dT\log |\mathcal{X}|})$ bound obtained using EXP3 with Kiefer-Wolfowitz exploration. For instance, if $d_i = 2$ for all $i \in \llbracket m-1 \rrbracket$ and $d_m = m^2$, our regret bound is $\tilde{\mathcal{O}}(\sqrt{m^2T})$, whereas EXP3 with Kiefer-Wolfowitz exploration incurs a regret of at least $\Omega(\sqrt{m^3T})$. We refer the reader to \Cref{appendix:exp3-lower-bound} for a detailed discussion of this lower bound for EXP3 with Kiefer-Wolfowitz exploration.  

In \Cref{appendix:multi-mab-simple}, we present a much simpler approach to solving the multi-task MAB problem. In \Cref{appendix:minimax-lower}, we establish a minimax regret lower bound on DAGs using the reduction presented in this section.

\subsection{Extensive-form games under Bandit feedback}
Extensive-form games under Bandit feedback can be modeled as follows. There is a set of decision nodes $\mathcal{X}$, a set of observation nodes $\mathcal{Y}$, and a set of terminal nodes $\mathcal{Z}$. Each decision node $x \in \mathcal{X}$ is associated with a set of actions $A_x$, while each observation node $y \in \mathcal{Y}$ is associated with a set of actions $B_y$. W.l.o.g let us assume that $|A_x|>1$ for all $\bfx\in\calX$ and $|B_y|>1$ for all $y\in\calY$. The non-terminal nodes are governed by injective transition functions: $\rho_x : A_x \to \mathcal{Y} \cup \mathcal{Z}$ for decision nodes, and $\rho_y : B_y \to \mathcal{X}\cup \mathcal{Z}$ for observation nodes. For any two distinct nodes $v_1,v_2$, the ranges of $\rho_{v_1}$ and $\rho_{v_2}$ have empty intersection.

At each round $t$, for every decision node $x \in \mathcal{X}$, we select an action $a_{x,t} \in A_x$. Similarly, for every observation node $y \in \mathcal{Y}$, an adversary selects an action $b_{y,t} \in B_y$. The adversary also specifies a loss function $\mathbf{y}_t : \mathcal{Z} \to [-1, 1]$ for the terminal nodes. Starting from the root node $x^\mathsf{r} \in \mathcal{X}$, the game proceeds as follows: 

\begin{itemize}
    \item If a decision node $x$ is reached, the next node that is visited is $\rho_x[a_{x,t}]$.
    \item If an observation node $y$ is reached, the next node that is visited is $\rho_y[b_{y,t}]$.
    \item If a terminal node $z$ is reached, the process terminates, and we incur a loss of $\mathbf{y}_t[z]$.
\end{itemize} 
Note that in an extensive-form game, no node is visited more than once.

Let $\mathbf{a}_t := \{a_{x,t}\}_{x \in \mathcal{X}}$ and $\mathbf{b}_t := \{b_{y,t}\}_{y \in \mathcal{Y}}$ represent the configurations of actions at decision and observation nodes, respectively. Define $z(\mathbf{a}_t, \mathbf{b}_t)$ as the terminal node reached when transitioning according to $\mathbf{a}_t$ and $\mathbf{b}_t$. As a decision maker, we observe only the loss $\bfy_t[z(\mathbf{a}_t, \mathbf{b}_t)]$ incurred at the terminal node $z(\mathbf{a}_t, \mathbf{b}_t)$; the sequence of nodes visited during the process remains unobserved. Let $\mathcal{A}$ denote the set of all possible configurations of actions at decision nodes. The objective is to minimize the regret:

\[
\mathrm{Regret}(T) := \max_{\mathbf{a} \in \mathcal{A}} \sum_{t=1}^T \big(\bfy_t[z(\mathbf{a}_t, \mathbf{b}_t)] - \bfy_t[z(\mathbf{a}, \mathbf{b}_t)]\big).
\]

One can reformulate this problem as an adversarial linear bandit problem and use EXP3 with Kiefer-Wolfowitz exploration to get a regret upper bound of $\calO(\sqrt{|\calZ|T\log(N)})$ where $N$ is defined as follows. For each terminal node $z$ we define $n(z):=1$. For each decision node $x$, we define $n(x)=\sum_{a\in A_x}n(\rho_x[a])$, and for each observation node $y$, we define $n(y)=\prod_{a\in A_y}n(\rho_y[a])$. Now define $N:=n(x^\mathsf{r})$ where $x^\mathsf{r}$ is the root node. We refer the reader to \Cref{appendix:extensive-linear} for more details. We now reduce extensive-form games to a problem on DAG and show that our approach incurs a regret of $\tilde \calO(\sqrt{|\calZ|T\log(N)})$.

We define a DAG $G=(V,E)$ as follows. Let $V=\{u_{\mathsf{s}},u_{\mathsf{t}}:u\in \calX\cup\calY\cup\calZ\}$. For each terminal node $z\in\calZ$, we add the edge $(z_{\mathsf{s}},z_{\mathsf{t}})$ to $E$. Next for each decision node $\bfx\in \calX$, we add the set of edges $\{(x_{\mathsf{s}},u_{\mathsf{s}}),(u_{\mathsf{t}},x_{\mathsf{t}}): u\in \{\rho_x[a] :a\in A_x\}\}$ to $E$. Finally for each observation node $y\in \calY$, let us index the nodes in $\{\rho_y[b] :b\in B_x\}$ as $\{u^{(1)},u^{(2)},\ldots,u^{(\ell)}\}$. We first add the set of edges $\{(y_{\mathsf{s}},u^{(1)}_{\mathsf{s}}),(u^{(\ell)}_{\mathsf{t}},y_{\mathsf{t}})\}$ to $E$. If $\ell>1$, we then add the set of edges $\{(u^{(i)}_{\mathsf{t}},u^{(i+1)}_{\mathsf{s}}):i\in \llbracket \ell-1 \rrbracket\}$ to $E$. It is easy to observe that $x^\mathsf{r}_\mathsf{s}$ is the source node of $G$ and $x^\mathsf{r}_\mathsf{t}$ is the sink node of $G$. We refer the reader to \Cref{fig:extensive-1,fig:extensive-2} for two examples of this construction.

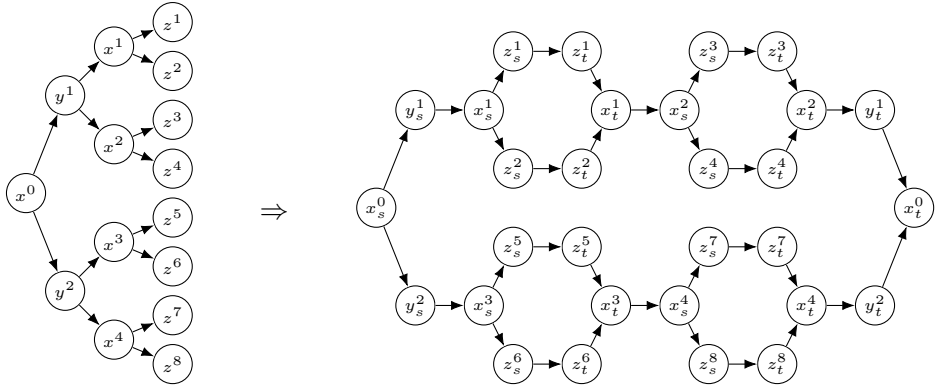
\begin{figure}[h]
    \centering
    \tikzstyle{nodelbl}=[draw,fill=white,circle,inner sep=-.5mm, minimum width=5.2mm]
    \begin{tikzpicture}[scale=1.3]
        \node[nodelbl] (x0) at (.1,0) {\tiny{$x^0$}};
        \node[nodelbl] (y1) at (.5,1) {\tiny{$y^1$}};
        \node[nodelbl] (y2) at (.5,-1) {\tiny{$y^2$}};
        \node[nodelbl] (x1) at (1,1.5) {\tiny{$x^1$}};
        \node[nodelbl] (x2) at (1,0.5) {\tiny{$x^2$}};
        \node[nodelbl] (x3) at (1,-0.5) {\tiny{$x^3$}};
        \node[nodelbl] (x4) at (1,-1.5) {\tiny{$x^4$}};
        \node[nodelbl] (z1) at (1.6,1.75) {\tiny{$z^1$}};
        \node[nodelbl] (z2) at (1.6,1.25) {\tiny{$z^2$}};
        \node[nodelbl] (z3) at (1.6,0.75) {\tiny{$z^3$}};
        \node[nodelbl] (z4) at (1.6,0.25) {\tiny{$z^4$}};
        \node[nodelbl] (z5) at (1.6,-0.25) {\tiny{$z^5$}};
        \node[nodelbl] (z6) at (1.6,-0.75) {\tiny{$z^6$}};
        \node[nodelbl] (z7) at (1.6,-1.25) {\tiny{$z^7$}};
        \node[nodelbl] (z8) at (1.6,-1.75) {\tiny{$z^8$}};

        \foreach \edge in {
            (x0) -- (y1),
            (x0) -- (y2),
            (y1) -- (x1),
            (y1) -- (x2),
            (y2) -- (x3),
            (y2) -- (x4),
            (x1) -- (z1),
            (x1) -- (z2),
            (x2) -- (z3),
            (x2) -- (z4),
            (x3) -- (z5),
            (x3) -- (z6),
            (x4) -- (z7),
            (x4) -- (z8)
        } {
            \StandardPath \edge;
        }
    \end{tikzpicture} 
    \raisebox{2.2cm}{$\qquad\Rightarrow\qquad$}
    \begin{tikzpicture}[scale=1.3]

        \node[nodelbl] (x0s) at (0.1,0)   {\tiny{$x^0_s$}};
        \node[nodelbl] (x0t) at (5.6,0)   {\tiny{$x^0_t$}};
        
        \node[nodelbl] (y1s) at (0.5,1)   {\tiny{$y^1_s$}};
        \node[nodelbl] (x1s) at (1.2,1)   {\tiny{$x^1_s$}};
        \node[nodelbl] (x1t) at (2.5,1)   {\tiny{$x^1_t$}};
        \node[nodelbl] (x2s) at (3.2,1)  {\tiny{$x^2_s$}}; 
        \node[nodelbl] (x2t) at (4.5,1)  {\tiny{$x^2_t$}}; %
        \node[nodelbl] (y1t) at (5.2,1)  {\tiny{$y^1_t$}}; %
        
        \node[nodelbl] (y2s) at (0.5,-1)   {\tiny{$y^2_s$}};
        \node[nodelbl] (x3s) at (1.2,-1)   {\tiny{$x^3_s$}};
        \node[nodelbl] (x3t) at (2.5,-1)   {\tiny{$x^3_t$}};
        \node[nodelbl] (x4s) at (3.2,-1)   {\tiny{$x^4_s$}};
        \node[nodelbl] (x4t) at (4.5,-1)   {\tiny{$x^4_t$}};
        \node[nodelbl] (y2t) at (5.2,-1)   {\tiny{$y^2_t$}};

        \node[nodelbl] (z1s) at (1.5,1.6)   {\tiny{$z^1_s$}};
        \node[nodelbl] (z1t) at (2.2,1.6)   {\tiny{$z^1_t$}}; %
        \node[nodelbl] (z3s) at (3.5,1.6)   {\tiny{$z^3_s$}}; %
        \node[nodelbl] (z3t) at (4.2,1.6)   {\tiny{$z^3_t$}}; %
        
        \node[nodelbl] (z2s) at (1.5,0.4)   {\tiny{$z^2_s$}};
        \node[nodelbl] (z2t) at (2.2,0.4)   {\tiny{$z^2_t$}};
        \node[nodelbl] (z4s) at (3.5,0.4)   {\tiny{$z^4_s$}};
        \node[nodelbl] (z4t) at (4.2,0.4)   {\tiny{$z^4_t$}};
        
        \node[nodelbl] (z5s) at (1.5,-0.4)  {\tiny{$z^5_s$}};
        \node[nodelbl] (z5t) at (2.2,-0.4)  {\tiny{$z^5_t$}};
        \node[nodelbl] (z7s) at (3.5,-0.4)  {\tiny{$z^7_s$}};
        \node[nodelbl] (z7t) at (4.2,-0.4)  {\tiny{$z^7_t$}};
        
        \node[nodelbl] (z6s) at (1.5,-1.6)  {\tiny{$z^6_s$}};
        \node[nodelbl] (z6t) at (2.2,-1.6)  {\tiny{$z^6_t$}};
        \node[nodelbl] (z8s) at (3.5,-1.6)  {\tiny{$z^8_s$}};
        \node[nodelbl] (z8t) at (4.2,-1.6)  {\tiny{$z^8_t$}};

        \foreach \edge in {
            (x0s) -- (y1s),
            (y1t) -- (x0t),
            (x0s) -- (y2s),
            (y2t) -- (x0t),
            (y1s) -- (x1s),
            (x1t) -- (x2s),
            (x2t) -- (y1t),
            (y2s) -- (x3s),
            (x3t) -- (x4s),
            (x4t) -- (y2t),
            (x1s) -- (z1s),
            (z1t) -- (x1t),
            (x1s) -- (z2s),
            (z2t) -- (x1t),
            (x2s) -- (z3s),
            (z3t) -- (x2t),
            (x2s) -- (z4s),
            (z4t) -- (x2t),
            (x3s) -- (z5s),
            (z5t) -- (x3t),
            (x3s) -- (z6s),
            (z6t) -- (x3t),
            (x4s) -- (z7s),
            (z7t) -- (x4t),
            (x4s) -- (z8s),
            (z8t) -- (x4t),
            (z1s) -- (z1t),
            (z2s) -- (z2t),
            (z3s) -- (z3t),
            (z4s) -- (z4t),
            (z5s) -- (z5t),
            (z6s) -- (z6t),
            (z7s) -- (z7t),
            (z8s) -- (z8t),
        } {
            \StandardPath \edge;
        }
    \end{tikzpicture} 
    \caption{Example extensive-form game 1}\label{fig:extensive-1}
\end{figure}
\begin{figure}[h]
    \centering
    \tikzstyle{nodelbl}=[draw,fill=white,circle,inner sep=-.5mm, minimum width=5.2mm]
    \begin{tikzpicture}[scale=1.3]
        \node[nodelbl] (x0) at (0,-1) {\tiny{$x^0$}};
        \node[nodelbl,label=center:{\tiny{$z^{1}$}}] (z01) at (1.,0.5) {};
        \node[nodelbl,label=center:{\tiny{$z^{2}$}}] (z02) at (1.,0.) {};
        \node[nodelbl,label=center:{\tiny{$z^{m}$}}] (z0m) at (1.,-1) {};
        \node[yshift=0.25em] at ($(z02)!0.5!(z0m)$) {\vdots};
        \node[nodelbl] (y) at (1.,-1.5) {\tiny{$y$}};
        \node[nodelbl] (x1) at (2,-1.) {\tiny{$x^1$}};
        \node[nodelbl] (xn) at (2,-2) {\tiny{$x^n$}};
        \node[yshift=0.25em] at ($(x1)!0.5!(xn)$) {\vdots};
        \node[nodelbl,label=center:{\tiny{$z^{11}$}}] (z11) at (2.6,-0.75) {};
        \node[nodelbl,label=center:{\tiny{$z^{12}$}}] (z12) at (2.6,-1.25) {};
        \node[nodelbl,label=center:{\tiny{$z^{n1}$}}] (zn1) at (2.6,-1.75) {};
        \node[nodelbl,label=center:{\tiny{$z^{n2}$}}] (zn2) at (2.6,-2.25) {};

        \foreach \edge in {
            (x0) -- (z01),
            (x0) -- (z02),
            (x0) -- (z0m),
            (x0) -- (y),
            (y) -- (x1),
            (y) -- (xn),
            (x1) -- (z11),
            (x1) -- (z12),
            (xn) -- (zn1),
            (xn) -- (zn2),
        } {
            \StandardPath \edge;
        }
    \end{tikzpicture} 
    \raisebox{2.2cm}{$\qquad\Rightarrow\qquad$}
    \begin{tikzpicture}[scale=1.3]
        \node[nodelbl] (x0s) at (0.7,0.) {\tiny{$x^0_s$}};
        \node[nodelbl] (x0t) at (6.9,0.) {\tiny{$x^0_t$}};
        \node[nodelbl,label=center:{\tiny{$z^{1}_s$}}] (z01s) at (2.7,1.25) {};
        \node[nodelbl,label=center:{\tiny{$z^{1}_t$}}] (z01t) at (4.9,1.25) {};
        \node[nodelbl,label=center:{\tiny{$z^{2}_s$}}] (z02s) at (2.7,0.75) {};
        \node[nodelbl,label=center:{\tiny{$z^{2}_t$}}] (z02t) at (4.9,0.75) {};
        \node[nodelbl,label=center:{\tiny{$z^{m}_s$}}] (z0ms) at (2.7,-0.25) {};
        \node[nodelbl,label=center:{\tiny{$z^{m}_t$}}] (z0mt) at (4.9,-0.25) {};
        \node[yshift=0.25em] at ($(z02s)!0.5!(z0ms)$) {\vdots};
        \node[yshift=0.25em] at ($(z02t)!0.5!(z0mt)$) {\vdots};
        \node[nodelbl] (ys) at (1.,-1.5) {\tiny{$y_s$}};
        \node[nodelbl] (yt) at (6.6,-1.5) {\tiny{$y_t$}};
        \node[nodelbl] (x1s) at (1.7,-1.5) {\tiny{$x^1_s$}};
        \node[nodelbl] (x1t) at (3.0,-1.5) {\tiny{$x^1_t$}};
        \node[nodelbl,label=center:{\tiny{$z^{11}_s$}}] (z11s) at (2.0,-0.9) {};
        \node[nodelbl,label=center:{\tiny{$z^{11}_t$}}] (z11t) at (2.7,-0.9) {};
        \node[nodelbl,label=center:{\tiny{$z^{12}_s$}}] (z12s) at (2.0,-2.1) {};
        \node[nodelbl,label=center:{\tiny{$z^{12}_t$}}] (z12t) at (2.7,-2.1) {};
        
        \node (zzz) at (3.8, -1.5) {\dots};

        \node[nodelbl] (xns) at (4.6,-1.5) {\tiny{$x^n_s$}};
        \node[nodelbl,label=center:{\tiny{$z^{n1}_s$}}] (zn1s) at (4.9,-0.9) {};
        \node[nodelbl,label=center:{\tiny{$z^{n1}_t$}}] (zn1t) at (5.6,-0.9) {};
        \node[nodelbl,label=center:{\tiny{$z^{n2}_s$}}] (zn2s) at (4.9,-2.1) {};
        \node[nodelbl,label=center:{\tiny{$z^{n2}_t$}}] (zn2t) at (5.6,-2.1) {};
        \node[nodelbl] (xnt) at (5.9,-1.5) {\tiny{$x^n_t$}};

        \foreach \edge in {
            (x0s) -- (z01s),
            (z01s) -- (z01t),
            (z01t) -- (x0t),
            (x0s) -- (z02s),
            (z02s) -- (z02t),
            (z02t) -- (x0t),
            (x0s) -- (z0ms),
            (z0ms) -- (z0mt),
            (z0mt) -- (x0t),
            (x0s) -- (ys),
            (yt) -- (x0t),
            (ys) -- (x1s),
            (x1s) -- (z11s),
            (z11s) -- (z11t),
            (z11t) -- (x1t),
            (x1s) -- (z12s),
            (z12s) -- (z12t),
            (z12t) -- (x1t),
            (x1t) -- (zzz),
            (zzz) -- (xns),
            (xns) -- (zn1s),
            (zn1s) -- (zn1t),
            (zn1t) -- (xnt),
            (xns) -- (zn2s),
            (zn2s) -- (zn2t),
            (zn2t) -- (xnt)
            (xnt) -- (yt),
        } {
            \StandardPath \edge;
        }
    \end{tikzpicture} 
    \caption{Example extensive-form game 2}\label{fig:extensive-2}
\end{figure}

Next, for any loss function $\bfy_t : \mathcal{Z} \to [-1, 1]$ and configuration of actions $\bfb_t$, we define a corresponding weight function $w_t : E \to \mathbb{R}$ for the edges of $G$ as follows. For each edge $e=(v_{\mathsf{s}},v_{\mathsf{t}})$ such that $\exists \bfa\in\calA$ such that $v=z(\bfa,\bfb_t)$, we define $w_t(e):=\bfy_t[z(\bfa,\bfb_t)]$. For rest of the edges $e$, we define $w_t(e):=0$.

We now apply our FTRL algorithm from \Cref{sec:alg-centroid} to the DAG $G$. At each round $t$, if the FTRL algorithm selects a path $P_t$ in $G$, we choose $\bfa_t \in \calA$ as follows. For each $\bfx\in \calX$, if $(x_{\mathsf{s}},u_{\mathsf{s}})\in P_t$ for some $u\in \calY\cup\calZ$, we assign $a_{x,t}=\rho_x^{-1}[u]$; otherwise we arbitrarily choose $a_{x,t}$ as the node $x$ will not be reached in the current time-step by the decision process.  By the construction of $w_t$, it follows that $\bfy_t[z(\bfa_t,\bfb_t)] = w_t(P_t)$. Consequently, we provide $\bfy_t[z(\bfa_t,\bfb_t)]$ as the bandit feedback for the path $P_t$ to the FTRL algorithm. The way we choose $\bfa_t$ induces a bijective mapping between the set of configurations in $\calA$ and the set of paths in $G$, ensuring correctness. Moreover, our algorithm is computationally efficient.

It can be shown that the number of edges in $G$ is $\calO(|\calZ|)$ and number of paths from the source node to sink node is $N$. We refer the reader to \Cref{appendix:extensive-fact} for the proof of this fact. Hence, we incur a high-probability regret of $\tilde \calO(\sqrt{|\calZ|T\log(N)})$ against an adaptive adversary. This is the first efficient algorithm to match the regret of EXP3 with Kiefer-Wolfowitz exploration, upto logarithmic factors. Our algorithm contributes to the long line of research on learning in extensive-form games under bandit feedback. See \citet{farina2021bandit} for learning in this setting without access to trajectory information (which matches our setting) and \citet{fiegel2023adapting} for learning with additional trajectory information (which differs from our setting).

\subsection{Shortest Walk in Directed Graphs}
Finding the shortest simple path in directed graphs with cycles when edges can have negative weights is $\mathsf{NP}$-hard. This result extends to finding the shortest trail as well. Therefore in this section, we focus on the online shortest walk in directed graphs. In a walk, repeated vertices or edges are permitted. Given a weight function $w_t: E \to \mathbb{R}$, the weight of a walk $W = (v_0, e_1, v_1, \dots, e_k, v_k)$ is defined as the sum of the weights of its edges:
\[
w(W) := \sum_{i=1}^k w(e_i).
\]
Let $G = (V, E)$ be a directed graph with source $\source$ and sink $\sink$. In each round $t$, an agent selects a walk $W_t$ of length at most $K\leq |E|$ from $\source$ to $\sink$, and the environment simultaneously chooses a weight function $w_t(\cdot)$. The agent's goal is to minimize cumulative regret relative to the optimal path:
\[
\mathrm{Regret}(T) := \sum_{t = 1}^T w_t(W_t) - \min_{W \in \mathcal{W}} \sum_{t = 1}^T w_t(W),
\]
where $\mathcal{W}$ is the set of all walks from $\source$ to $\sink$ of length at most $K\leq |E|$. This problem was first studied by \citet{awerbuch2004adaptive} and can be represented as an adversarial linear bandit problem in $\mathbb{R}^{|E|}$.

We solve the online shortest walk problem in directed graph, by reducing the problem to online shortest path problem on DAG. Given the graph $G = (V, E)$, we construct a layered DAG $G^\dagger = (V^\dagger, E^\dagger)$. The vertex set is defined as:
\[
    V^\dagger := \{v^{(i)} : v \in V, i \in \llbracket 0, K \rrbracket\}.
\]

The edge set is defined as:
\[
   E^\dagger := \left\{(u^{(i-1)}, v^{(i)}) : (u, v) \in E\cup\{(\sink,\sink)\}, i \in \llbracket K \rrbracket\right\}.
\]

For the weight function $w_t: E \to \bbR$, we construct a weight assignment $w^\dagger_t:E^\dagger\to \bbR$ as follow:
\[
    w^\dagger_t\big((u^{(i-1)}, v^{(i)})\big) := \ind[u\neq v]\cdot w_t\big((u, v)\big)
\]
for all $(u, v) \in E\cup\{(\sink,\sink)\}$ and $i \in \llbracket K\rrbracket$.

We now consider the online shortest path problem in the DAG $G^\dagger$, where the set of paths consists of those from the source node $\source^{(0)}$ to the sink node $\sink^{(K)}$. We apply our FTRL algorithm from \Cref{sec:alg-equal} to $G^\dagger$, but first, we preprocess $G^\dagger$ to discard redundant nodes and edges that will never be part of any path from $\source^{(0)}$ to $\sink^{(K)}$.  

At each round $t$, if the FTRL algorithm selects a path $P_t$ in $G^\dagger$, we choose $W_t \in \calW$ as follows: for each edge $(u^{(i)}, v^{(i+1)}) \in P_t$ with $u \neq v$, we add $(u, v)$ as the $i$-th edge of the walk $W_t$. By the construction of $w_t^\dagger$, it follows that $w_t[W_t] = w_t^\dagger[P_t]$. Consequently, we provide $w_t[W_t]$ as the bandit feedback for the path $P_t$ to the FTRL algorithm. The way we choose $W_t$ induces a bijective mapping between the set of walks in $\calW$ and the set of paths in $G^\dagger$, ensuring correctness. Moreover, our algorithm is computationally efficient.

Observe that the number of edges in $G^\dagger$ is $\calO(K|E|)$ and length of any path from the source $\source^{(0)}$ to sink $\sink^{(K)}$ is $K$. Hence we incur a high-probability regret of at most $\tilde\calO(\sqrt{K^2|E|T})$ against an adversary. This improves upon the best-known high-probability regret bound of $\tilde{\mathcal{O}}(|E|^2\sqrt{T})$ achieved by \citet{zimmert2022return}.  

An open question remains: is there an efficient algorithm that matches the high-probability regret bound of $\tilde{\mathcal{O}}(\sqrt{K|E|T})$ achieved by EXP3 with Kiefer-Wolfowitz exploration?  

\subsection{Colonel Blotto game}

In a Colonel Blotto game, two players, $A$ and $B$, have $N$ and $M$ soldiers, respectively, which they must allocate across $K$ battlefields. In each round $t$, player $A$ selects an allocation of $N$ soldiers, denoted as $\mathbf{a}_t = (a_{t,1},\ldots,a_{t,K})$, where $a_{t,i} \geq 0$ represents the number of soldiers assigned to battlefield $i$, and the total allocation satisfies $\sum_{i=1}^{K} a_{t,i} = N$. Simultaneously, player $B$ chooses an allocation of $M$ soldiers, given by $\mathbf{b}_t = (b_{t,1},\ldots,b_{t,K})$.  

At each battlefield $i$, player $A$ incurs a loss of $\mathbf{y}_{t,i}[a_{t,i},b_{t,i}]$. Our goal is to control player $A$ and minimize the following regret:

\[
\mathrm{Regret}(T) := \max_{\mathbf{a} \in \mathcal{A}} \sum_{t=1}^T \sum_{i=1}^K \left(\mathbf{y}_{t,i}[a_{t,i},b_{t,i}] - \mathbf{y}_{t,i}[a_{i},b_{t,i}] \right),
\]
where $\mathcal{A}$ denotes the set of all possible allocations of $N$ soldiers across the $K$ battlefields by player $A$. We assume that for any $\mathbf{a} =(a_1,\ldots,a_K)\in \mathcal{A}$, the total loss satisfies $\sum_{i=1}^K \mathbf{y}_{t,i}[a_{i},b_{t,i}] \in [-1,1]$, and player $A$ observes only $\sum_{i=1}^K \mathbf{y}_{t,i}[a_{t,i},b_{t,i}]$ as bandit feedback at the end of round $t$. It is easy to see that this problem can be represented as a combinatorial bandit problem in $\{0,1\}^{KN}$. 

This problem can be reduced to a problem on directed acyclic graphs (DAGs), as first shown by \citet{behnezhad2023fast}. For completeness, we provide the reduction here.  

We construct a DAG $G = (V, E)$ as follows. Define the vertex set as:  
\[
V = \{ v_{0}^0, v_{K}^N \} \cup \{ v_{i}^j \mid i \in \llbracket K-1 \rrbracket, j \in \llbracket 0, N \rrbracket \}.
\]

Next, we add the following sets of edges to $E$:  
\begin{itemize}
    \item \(\{ (v_{0}^0, v_{1}^j) \mid j \in \llbracket 0, N \rrbracket \}\),  
    \item \(\{ (v_{i-1}^{j_0}, v_{i}^{j_1}) \mid i \in \llbracket 1,K-1 \rrbracket, j_0 \in \llbracket 0, N \rrbracket, j_1 \in \llbracket j_0, N \rrbracket \}\),
    \item \(\{ (v_{K-1}^j, v_{K}^N) \mid j \in \llbracket 0, N \rrbracket \}\).  
\end{itemize}
Observe that $\source = v_{0}^0$ serves as the source node and $\sink = v_{K}^N$ as the sink node.  

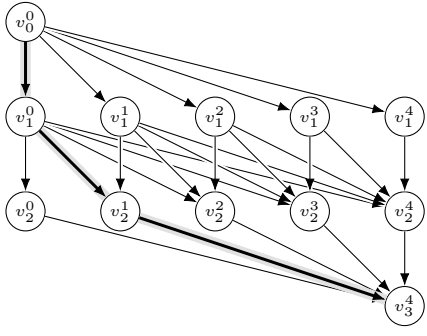
\begin{figure}[h]
    \centering
    \tikzstyle{nodelbl}=[draw,fill=white,circle,inner sep=-.5mm, minimum width=5.2mm]
    \begin{tikzpicture}[scale=1.8]
        \node[nodelbl] (v00) at (0., 0.) {\tiny{$v_0^0$}};
        \node[nodelbl] (v10) at (0., -0.7) {\tiny{$v_1^0$}};
        \node[nodelbl] (v11) at (0.7, -0.7) {\tiny{$v_1^1$}};
        \node[nodelbl] (v12) at (1.4, -0.7) {\tiny{$v_1^2$}};
        \node[nodelbl] (v13) at (2.1, -0.7) {\tiny{$v_1^3$}};
        \node[nodelbl] (v14) at (2.8, -0.7) {\tiny{$v_1^4$}};
        \node[nodelbl] (v20) at (0., -1.4) {\tiny{$v_2^0$}};
        \node[nodelbl] (v21) at (0.7, -1.4) {\tiny{$v_2^1$}};
        \node[nodelbl] (v22) at (1.4, -1.4) {\tiny{$v_2^2$}};
        \node[nodelbl] (v23) at (2.1, -1.4) {\tiny{$v_2^3$}};
        \node[nodelbl] (v24) at (2.8, -1.4) {\tiny{$v_2^4$}};
        \node[nodelbl] (v34) at (2.8, -2.1) {\tiny{$v_3^4$}};
        
        \foreach \edge in {
            (v00) -- (v11),
            (v00) -- (v12),
            (v00) -- (v13),
            (v00) -- (v14),
            (v10) -- (v20),
            (v10) -- (v22),
            (v10) -- (v23),
            (v10) -- (v24),
            (v11) -- (v21),
            (v11) -- (v22),
            (v11) -- (v23),
            (v11) -- (v24),
            (v12) -- (v22),
            (v12) -- (v23),
            (v12) -- (v24),
            (v13) -- (v23),
            (v13) -- (v24),
            (v14) -- (v24),
            (v20) -- (v34),
            (v22) -- (v34),
            (v23) -- (v34),
            (v24) -- (v34),
        } {
            \StandardPath \edge;
        }
        \foreach \edge in {
            (v00) -- (v10),
            (v10) -- (v21),
            (v21) -- (v34),
        } {
            \HighlightPath \edge;
        }
    \end{tikzpicture} 
    \caption{Dag for Blotto game with $N = 4$ soldiers and $K = 3$ battlefields. The shaded path corresponds to the allocation $\bfa=(0,1,3)$.}
\end{figure}

For any set of loss functions $\mathbf{y}_{t,i}$ and allocation $\mathbf{b}_t$, we define a corresponding weight function $w_t: E \to \mathbb{R}$ on the edges of $G$ as follows. For any edge $e = (v_{i-1}^{j_0}, v_{i}^{j_1})$, we set  
\[
w_t(e) := \mathbf{y}_{t,i}[j_1 - j_0, b_{t,i}].
\]

We now apply our FTRL algorithm from \Cref{sec:alg-equal} to the DAG $G$. At each round $t$, if the FTRL algorithm selects a path $P_t$ in $G$, we determine $\mathbf{a}_t \in \mathcal{A}$ as follows. Fix an index $i \in \llbracket K \rrbracket$ and consider the edge $e = (v_{i-1}^{j_0}, v_{i}^{j_1}) \in P_t$. We then set  $a_{t,i}=j_1 - j_0.$ By the construction of $w_t$, it follows that $\sum_{i=1}^K \mathbf{y}_{t,i}[a_{t,i}, b_{t,i}] = w_t(P_t).$ Thus, we provide $\sum_{i=1}^K \mathbf{y}_{t,i}[a_{t,i}, b_{t,i}]$ as the bandit feedback for the path $P_t$ to the FTRL algorithm. The way we choose $\bfa_t$ induces a bijective mapping between the set of allocations in $\calA$ and the set of paths in $G$, ensuring correctness. Moreover, our algorithm is computationally efficient.

Observe that the number of edges in $G$ is $\mathcal{O}(K^2N)$, and the length of any path from the source to the sink is $K$. Consequently, we incur a high-probability regret of at most $\tilde{\mathcal{O}}(\sqrt{K^3NT})$ against an adaptive adversary. This improves upon the best-known high-probability regret bound of $\tilde{\mathcal{O}}(K^2N^2\sqrt{T})$ achieved by \citet{zimmert2022return}.  

An open question remains: is there an efficient algorithm that matches the high-probability regret bound of $\tilde{\mathcal{O}}(\sqrt{K^2NT})$ achieved by EXP3 with Kiefer-Wolfowitz exploration?  
\subsection{m-sets}

An $m$-set is the set of vectors $\mathcal{X} := \{ x \in \{0,1\}^d : \|x\|_1 = m \}.$ We construct a DAG $G$ as follows. The vertex set of $G$ is  
\[
V = \{ v_i^j : i \in \llbracket 0, d-m \rrbracket, j \in \llbracket 0, m \rrbracket \}.
\]
The edge set is defined as  
\[
E = \{ (v_i^{j-1}, v_i^j) : i \in \llbracket 0, d-m \rrbracket, j \in \llbracket m \rrbracket \} 
\cup \{ (v_{i-1}^j, v_i^j) : i \in \llbracket d-m \rrbracket, j \in \llbracket 0, m \rrbracket \}.
\]
Note that $\source = v_0^0$ is the source node of $G$, and $\sink = v_{d-m}^m$ is the sink node.  

\begin{figure}[h]
    \centering
    \tikzstyle{nodelbl}=[draw,fill=white,circle,inner sep=-.5mm, minimum width=5.2mm]
    \begin{tikzpicture}[scale=1.8]
        \node[nodelbl] (v00) at (0., 0.) {\tiny{$v_0^0$}};
        \node[nodelbl] (v01) at (0.7, 0.) {\tiny{$v_1^0$}};
        \node[nodelbl] (v02) at (1.4, 0.) {\tiny{$v_2^0$}};
        \node[nodelbl] (v03) at (2.1, 0.) {\tiny{$v_3^0$}};
        \node[nodelbl] (v10) at (0., -0.7) {\tiny{$v_0^1$}};
        \node[nodelbl] (v11) at (0.7, -0.7) {\tiny{$v_1^1$}};
        \node[nodelbl] (v12) at (1.4, -0.7) {\tiny{$v_2^1$}};
        \node[nodelbl] (v13) at (2.1, -0.7) {\tiny{$v_3^1$}};
        \node[nodelbl] (v20) at (0., -1.4) {\tiny{$v_0^2$}};
        \node[nodelbl] (v21) at (0.7, -1.4) {\tiny{$v_1^2$}};
        \node[nodelbl] (v22) at (1.4, -1.4) {\tiny{$v_2^2$}};
        \node[nodelbl] (v23) at (2.1, -1.4) {\tiny{$v_3^2$}};
        
        \foreach \edge in {
            (v02) -- (v03),
            (v10) -- (v11),
            (v11) -- (v12),
            (v20) -- (v21),
            (v21) -- (v22),
            (v22) -- (v23),
            (v00) -- (v10),
            (v10) -- (v20),
            (v01) -- (v11),
            (v11) -- (v21),
            (v12) -- (v22),
            (v03) -- (v13),
        } {
            \StandardPath \edge;
        }
        \foreach \edge in {
            (v00) -- (v01),
            (v01) -- (v02),
            (v02) -- (v12),
            (v12) -- (v13),
            (v13) -- (v23),
        } {
            \HighlightPath \edge;
        }
    \end{tikzpicture} 
    \caption{DAG for $m$-set with $m=2$ and $d=5$. The shaded path corresponds to the vector $\bfx$ such that $\bfx[3]=\bfx[5]=1$ and $\bfx[1]=\bfx[2]=\bfx[4]=0$.}
\end{figure}
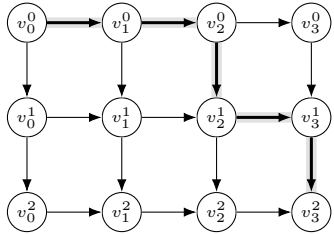

Next, for any loss function $\bfy_t : \llbracket d \rrbracket \to \mathbb{R}$, we define a corresponding weight function $w_t : E \to \mathbb{R}$ for the edges of $G$ as follows. For an edge $e = (v_{i}^j, v_{i+1}^j)$, we set $w_t(e) := 0$. For an edge $e = (v_{i}^{j-1}, v_{i}^{j})$, we define $w_t(e) := \bfy_t[i+j]$.

We now apply our FTRL algorithm from \Cref{sec:alg-equal} to the DAG $G$. At each round $t$, if the FTRL algorithm selects a path $P_t$ in $G$, we choose $\bfx_t \in \calX$ such that for any $k \in \llbracket d \rrbracket$, we set $\bfx_t[k] = 1$ if there exists an edge $(v_{i}^{j-1}, v_{i}^{j}) \in P_t$ with $k = i + j$, and $\bfx_t[k] = 0$ otherwise. By the construction of $w_t$, it follows that $\langle \bfx_t, \bfy_t \rangle = w_t(P_t)$. Consequently, we provide $\langle \bfx_t, \bfy_t \rangle$ as the bandit feedback for the path $P_t$ to the FTRL algorithm. The way we choose $\bfx_t$ induces a bijective mapping between the set of vectors in $\calX$ and the set of paths in $G$, ensuring correctness. Moreover, our algorithm is computationally efficient. 

Finally, observe that each path in $G$ has a length of $d$ and the number of edges in $G$ is $\calO(md)$. Thus, we incur a high-probability regret of $\tilde\calO(\sqrt{md^2T})$ against an adaptive adversary. This improves upon the best-known high-probability regret bound of $\tilde{\mathcal{O}}(d^2\sqrt{T})$ achieved by \citet{zimmert2022return}.  

An open question remains: is there an efficient algorithm that matches the high-probability regret bound of $\tilde{\mathcal{O}}(\sqrt{mdT})$ achieved by EXP3 with Kiefer-Wolfowitz exploration?

\section{Multi-Task MAB: Additional Details}
\subsection{Multi-Task MAB Lower Bound}\label{appendix:multi-MAB-lower-bound}
Consider the Multi-task MAB instance where the set of arms $\calX$ is defined as follows: 
\begin{equation*}
    \calX=\left\{\bfx\in \{0,1\}^d: \forall j\in \llbracket  m \rrbracket \sum_{i=d_{1:j-1}+1}^{d_{1:j}} \bfx[i]=1\right\}
\end{equation*}
where $d_i\geq 2$, $d=\sum_{i=1}^md_i$, $d_{1:j}=\sum_{i=1}^jd_i$ and $d_{1:0}=0$ for all $j\in \llbracket m\rrbracket$.

Let us fix one regret minimizing algorithm, say $\calA$ and assume that $\calA$ is deterministic. Now we show that algorithm $\calA$ incurs a regret of $\Omega(\sum_{i=1}^m\sqrt{d_iT})$. We later extend the result to randomized algorithms using Yao's lemma. For all $j\in\llbracket m\rrbracket$, let $\varepsilon_j>0$ be a parameter that we fix later in the proof.

Let $\tilde\calX=\left\{\bfx\in \{0,1\}^d: \forall j\in \llbracket  m \rrbracket \sum_{i=d_{1:j-1}+1}^{d_{1:j}} \bfx[i]\leq 1\right\}$.
First we describe an instance $I_{\tilde \bfx}$, where $\tilde \bfx\in \tilde \calX$. In this instance, in each round $t$, we choose a loss function $\bfy_t:\llbracket d\rrbracket\to [-1,1]$ as follows. First we choose an index $j\in \llbracket m\rrbracket$ uniformly at random. Now we define $\bfy_t[i]=0$ if $i\notin \llbracket d_{1:j-1}+1,d_{1:j}\rrbracket$. Next for all $i\in \llbracket d_{j}\rrbracket$, we sample $v_i\sim \mathrm{Ber}(\frac{1}{2}-\varepsilon_j\cdot\ind[\tilde \bfx[d_{1:j-1}+i]=1])$ and assign $\bfy_t[d_{1:j-1}+i]=v_i$. Now observe that expected loss of any arm $\bfx\in\calX$ under the instance $I_{\tilde \bfx}$ is $\frac{1}{m}\sum\limits_{j=1}^m\sum\limits_{i=d_{1:j-1}+1}^{d_{1:j}}(\frac{1}{2}-\varepsilon_j\cdot\ind[\bfx[i]=\tilde \bfx[i]=1])$. For $\bfx\in \calX$, $\bfx$ is the best arm for the instance $I_{\bfx}$ and its reward is $\mu^*:=\frac{1}{2}-\frac{1}{m}\sum\limits_{j=1}^m\varepsilon_j$.
    
    In each round $t$, if $\calA$ chooses an arm $\bfx_t\in \calX$, then the regret $R_T(\bfx)$ on an instance $I_{\bfx}$, where $\bfx\in \calX$, is equal to:
    \[R_T(\bfx)=\mathbb{E}_{I_{\bfx}}\Bigg[\sum_{t=1}^T\langle \bfx_t,\bfy_t\rangle\Bigg]-T\cdot \mu^*=\frac{1}{m}\sum_{j=1}^m\varepsilon_j T-\frac{\varepsilon_j}{m}\sum_{t=1}^T\sum\limits_{j=1}^m\sum\limits_{i=d_{1:j-1}+1}^{d_{1:j}}\mathbb{P}_{I_{\bfx}}[\bfx_t[i]=\bfx[i]=1]\]
    where $\mathbb{P}_{I_{\bfx}}$ is probability law under the instance $I_{\bfx}$.

    Now for any instance $I_{\bfx}$ such that $\bfx\in \calX$, the regret can be broken down as $R_T(\bfx)=\sum_{j=1}^{m} R_T^{(j)}(\bfx)$ where 
    \[R_T^{(j)}(\bfx):=\frac{\varepsilon_j T}{m}-\frac{\varepsilon_j}{m}\sum_{t=1}^T\sum\limits_{i=d_{1:j-1}+1}^{d_{1:j}}\mathbb{P}_{I_{\bfx}}[\bfx_t[i]=1,\bfx[i]=1]\] 
    
    Let $\calI=\bigcup_{\bfx\in\calX}I_{\bfx}$. Fix an index $j\in\llbracket m\rrbracket$. We now show that $ \mathbb{E}_{I_{\bfx'}\sim \mathrm{Unif}(\calI)}[R_T^{(j)}(\bfx')]\geq c\sqrt{d_jT}$ where $c$ is some absolute constant. Let $$\calX^{(j)}:=\Bigg\{\bfx\in \{0,1\}^d: \forall i\in\llbracket m\rrbracket\setminus\{j\}\; \sum_{s=d_{1:i-1}+1}^{d_{1:i}}\bfx[s]=1,\; \sum_{s=d_{1:j-1}+1}^{d_{1:j}}\bfx[s]=0\Bigg\}.$$ For any $\bfx\in \calX^{(j)}$, let $\bfx^{(i)}$ be the vector in $\calX$ such that $\bfx^{(i)}[s]=\bfx[s]$ for all $s\notin \llbracket d_{1:j-1}+1,d_{1:j}\rrbracket$ and $\bfx^{(i)}[d_{1:j-1}+i]=1$.

    First, we consider the case where $d_j\geq 48$. Let us fix $\bfx\in \calX^{(j)}$. Now we claim that there is a set $\calS_{\bfx}\subseteq\llbracket d_j\rrbracket$ with at least $d_j/3$ indices such that for each $i\in \calS_\bfx$, we have $R_T^{(j)}(\bfx^{(i)})\geq c_0\sqrt{d_jT}$ where $c_0$ is some absolute constant.
    
    Before we prove our claim, we first show that if our claim holds true, then we have that $\mathbb{E}_{I_{\bfx'}\sim \mathrm{Unif}(\calI)}[R_T^{(j)}(\bfx')]\geq c\cdot \sqrt{KT}$ where $c$ is some absolute constant. Now we have the following:
    \begin{align*}
        \mathbb{E}_{I_{\bfx'}\sim \mathrm{Unif}(\calI)}[R_T^{(j)}(\bfx')]&=\frac{1}{\prod_{s=1}^md_s}\sum_{\bfx\in\calX^{(j)}}\sum_{i=1}^{d_j} R_T^{(j)}(\bfx^{(i)})\\
        &\geq \frac{1}{\prod_{s=1}^md_s}\sum_{\bfx\in\calX^{(j)}}\sum_{i\in \calS_{\bfx}}R_T^{(j)}(\bfx^{(i)})\\
        &\geq \frac{c_0}{\prod_{s=1}^md_s}\sum_{\bfx\in\calX^{(j)}}\sum_{i\in \calS_{\bfx}}\sqrt{d_jT}\\
        &\geq \frac{c_0}{\prod_{s=1}^md_s}\sum_{\bfx\in\calX^{(j)}}\frac{d_j}{3}\cdot \sqrt{d_jT} \\
        & = \frac{c_0}{\prod_{s=1}^md_s} \cdot \prod_{s\neq j}d_s\cdot \frac{d_j}{3}\cdot \sqrt{d_jT} \\
        & =\frac{c\sqrt{d_jT}}{3}
    \end{align*}

    Now we prove our claim. We use the following version of the chain rule in our analysis.
    \begin{lemma}[Chain Rule]
        Let $f(x_1,x_2,\ldots,x_n)$ and $g(x_1,x_2,\ldots,x_n)$ be two joint PMFs for a tuple of random variables $(X_i)_{i\in[n]}$. Let the sample space be $\Omega= \{0,1\}^{n}$. Then we have the following:
        \begin{equation*}
            \KL(f,g)=\sum\limits_{\omega\in \Omega}f(\omega)\left(\KL(f(X_1),g(X_1))+\sum_{i=2}^n \KL(f(X_i|X_{-i}=\omega_{-i}),g(X_i|X_{-i}=\omega_{-i}))\right)\;
        \end{equation*}
        where $X_{-i}=(X_1,\ldots,X_{i-1})$, $\omega_{-i}=(\omega_1,\ldots,\omega_{i-1})$.
    \end{lemma}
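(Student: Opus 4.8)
The plan is to prove the identity by unfolding the definition of relative entropy, factorizing each joint PMF coordinate by coordinate via the chain rule of probability, and then re-assembling the resulting conditional terms. The whole argument is purely algebraic, so there is no deep obstacle; the work is in keeping the marginalizations straight.

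First I would write $\KL(f,g) = \sum_{\omega \in \Omega} f(\omega) \log\frac{f(\omega)}{g(\omega)}$ and substitute the factorizations $f(\omega) = f(\omega_1)\prod_{i=2}^n f(\omega_i \mid \omega_{-i})$ and the analogous one for $g$. Taking the logarithm of the ratio converts the product into a sum, so $\log\frac{f(\omega)}{g(\omega)} = \log\frac{f(\omega_1)}{g(\omega_1)} + \sum_{i=2}^n \log\frac{f(\omega_i\mid\omega_{-i})}{g(\omega_i\mid\omega_{-i})}$. By linearity of the sum over $\omega$, $\KL(f,g)$ breaks into $n$ pieces, the $i$-th (for $i\geq 2$) being $\sum_{\omega\in\Omega} f(\omega)\log\frac{f(\omega_i\mid\omega_{-i})}{g(\omega_i\mid\omega_{-i})}$ and the first being $\sum_{\omega\in\Omega} f(\omega)\log\frac{f(\omega_1)}{g(\omega_1)}$.

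Next, for each fixed $i\geq 2$ I would note that the summand depends only on the prefix $\omega_{\leq i} := (\omega_1,\dots,\omega_i)$, so marginalizing out $\omega_{i+1},\dots,\omega_n$ collapses the sum to $\sum_{\omega_{\leq i}\in\{0,1\}^i} f(\omega_{\leq i})\log\frac{f(\omega_i\mid\omega_{-i})}{g(\omega_i\mid\omega_{-i})}$. Writing $f(\omega_{\leq i}) = f(\omega_{-i})\,f(\omega_i\mid\omega_{-i})$ and summing over $\omega_i$ first turns the inner sum into $\KL\bigl(f(X_i\mid X_{-i}=\omega_{-i}),\, g(X_i\mid X_{-i}=\omega_{-i})\bigr)$, leaving $\sum_{\omega_{-i}\in\{0,1\}^{i-1}} f(\omega_{-i})\,\KL\bigl(f(X_i\mid X_{-i}=\omega_{-i}),\, g(X_i\mid X_{-i}=\omega_{-i})\bigr)$. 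The $i=1$ piece reduces the same way to $\KL(f(X_1),g(X_1))$, which equals $\sum_{\omega\in\Omega}f(\omega)\,\KL(f(X_1),g(X_1))$ since it is a constant and $f$ sums to one. Finally I would re-lift each conditional term: for any quantity $\phi(\omega_{-i})$ depending only on the prefix, $\sum_{\omega_{-i}} f(\omega_{-i})\,\phi(\omega_{-i}) = \sum_{\omega\in\Omega} f(\omega)\,\phi(\omega_{-i})$ because $f(\omega_{-i})$ is the marginal of $f$; collecting the common factor $f(\omega)$ across all $n$ terms then yields the stated formula.

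The routine but slightly delicate part is exactly this bookkeeping: checking that summing a quantity that depends only on a prefix $\omega_{-i}$ against the full joint $f$ reproduces the marginal $f(\omega_{-i})$, and conversely, together with the standing assumption that $g(\omega)>0$ whenever $f(\omega)>0$ so that all ratios and conditional $\KL$ terms are well defined. I do not expect any substantive obstacle — this is just the standard chain rule for relative entropy specialized to a binary sample space.
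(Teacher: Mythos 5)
Your proposal is correct and follows essentially the same route as the paper's proof: expand the definition of $\KL$, factorize $f$ and $g$ via the probabilistic chain rule, split the log-ratio into a sum, marginalize each term down to a conditional $\KL$ weighted by the prefix marginal, and re-lift everything against the full joint $f$. The only difference is cosmetic — you make the prefix-marginalization bookkeeping and the absolute-continuity caveat explicit, which the paper leaves implicit.
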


    For an instance $I_{\bfx^{(i)}}$, let $f_i(\ell_1,\ldots,\ell_T)$ denote the joint PMF for the tuple of loss values observed by $\calA$ in each round under the probability law $\mathbb{P}_{I_{\bfx^{(i)}}}$. Observe that our sample space is $\Omega=\{0,1\}^T$. This is a valid sample space as $\calA$ is deterministic and the probability of it seeing a loss value of $1$ in round $t$ only depends on the loss values it observed in the previous rounds. Similarly for the alternate instance  $I_{\bfx}$, let $f_0(\ell_1,\ldots,\ell_T)$ denote the joint PMF for the tuple of loss values observed by $\calA$ in each round under the probability law $\mathbb{P}_{I_{\bfx}}$.

    First observe that the instances $I_{\bfx^{(i)}}$ and $I_{\bfx}$ only differ at index $d_{1:j-1}+i$. For each $\omega\in \Omega$, let $\bfx_{1,\omega},\bfx_{2,\omega},\ldots, \bfx_{T,\omega}$ be the sequence of arms chosen by $\calA$ on $\omega$.
    Conditioning on a set of outcomes $X_1=\omega_1,X_2=\omega_2,\ldots,X_{t-1}=\omega_{t-1}$, we have $X_t\sim \mathrm{Ber}(\mu_i)$ for the instance $I_{\bfx^{(i)}}$ and $X_t\sim \mathrm{Ber}(\mu_0)$ for the instance $I_{\bfx}$ where $\mu_0-\mu_i=\frac{\varepsilon_j}{m}\cdot \bfx_{t,\omega}[d_{1:j-1}+i]$. Let $T_i=\sum_{t=1}^T\mathbf{x}_t[d_{1:j-1}+i]$.  For each $\omega\in \Omega$, let $T_{i,\omega}=\sum_{t=1}^T\mathbf{x}_{t,\omega}[d_{1:j-1}+i]$. Note that $T_i$ is a random variable and $T_{i,\omega}$ is a fixed value. Now we have the following:
    \begin{align*}
         \KL(f_0,f_i)&=\sum\limits_{\omega\in \Omega}f_0(\omega)\left(\KL(f_0(X_1),f_i(X_1))+\sum_{t=2}^T \KL(f_0(X_t|X_{-t}=\omega_{-t}),f_i(X_t|X_{-t}=\omega_{-t}))\right)\\
         &\leq\frac{4\varepsilon_j^2}{m^2} \sum\limits_{\omega\in \Omega}f_0(\omega)\sum_{t=1}^T\bfx_{t,\omega}[d_{1:j-1}+i]\\
         &=\frac{4\varepsilon_j^2}{m^2} \sum\limits_{\omega\in \Omega}f_0(\omega)T_{i,\omega}\\
         &= \frac{4\varepsilon_j^2}{m^2}\cdot \mathbb{E}_{I_{\bfx}}[T_{i}]
    \end{align*}

    Now observe that $\sum_{i=1}^{d_j}\mathbb{E}_{I_{\bfx}}[T_i]=T$. Hence, there exists a set $\calS_\bfx\subseteq\llbracket d_j\rrbracket$ with at least $d_j/3$ indices such that for each $i\in \calS_x$, we have $\mathbb{E}_{I_{\bfx}}[T_i]\leq \frac{3T}{d_j}$. Fix $\varepsilon_j=\frac{m\cdot d_j^{1/2}}{10T^{1/2}}$. Now for each $i\in \calS_\bfx$, we have $\KL(f_0,f_i)\leq\frac{12\varepsilon_j^2T}{m^2d_j}=\frac{3}{25}$. 

    Fix $i\in \calS_x$. Let $A_i$ be the event that $T_i\leq \frac{12T}{d_j}$. Due to Markov's inequality, we have $\mathbb{P}_{I_{\bfx}}(A_i)\geq \frac{3}{4}$. Now due to Pinsker's inequality we have the following:
    \begin{align*}
        \mathbb{P}_{I_{\bfx^{(i)}}}(A_i)&\geq  \mathbb{P}_{I_{\bfx}}(A_i)-\sqrt{\frac{\KL(f_0,f_j)}{2}}\\
        &\geq \frac{3}{4}-\sqrt{\frac{3}{50}}\\
        &>\frac{1}{2}
    \end{align*}

    Using the above the inequality, we get $\mathbb{E}_{I_{\bfx^{(i)}}}[T_i]\leq T\cdot \mathbb{P}_{I_{\bfx^{(i)}}}(A_i^c)+\frac{12T}{d_j}\leq\frac{3T}{4}$ when $d_j\geq 48$.
     Now we have the following:
    \begin{align*}
        R_T^{(j)}(\bfx^{(i)})&=\frac{\varepsilon_j T}{m}-\frac{\varepsilon_j}{m}\sum_{t=1}^T\mathbb{P}_{I_{\bfx^{(i)}}}[\bfx_t[d_{1:j-1}+i]=1]\\
        &=\frac{\varepsilon_j T}{m}-\frac{\varepsilon_j}{m}\mathbb{E}_{I_{\bfx^{(i)}}}\Bigg[\sum_{t=1}^T\bfx_t[d_{1:j-1}+i]\Bigg]\\
        &=\frac{\varepsilon_j T}{m}-\frac{\varepsilon_j}{m}\mathbb{E}_{I_{\bfx^{(i)}}}[T_i]\\
        &\geq \frac{\varepsilon_j T}{m}-\frac{3\varepsilon_j T}{4m}\\
        &=\frac{\sqrt{d_jT}}{40}
    \end{align*}
    
    Next we look at the case when $d_j\leq 48$. For simplicity of presentation, let us assume that $d_j=2$. Our analysis can be easily extended to any constant between $2$ and $48$.

    For any $i\in \{1,2\}$ and $t\in [T]$, let $A_{i,t}$ be the event that $\bfx[d_{1:j-1}+i]=1$. Note that $A_{1,t}=A_{2,t}^c$. Fix $\bfx\in \calX^{(j)}$. Now we claim that there an index $i\in \{1,2\}$ such that $\mathbb{P}_{I_{\bfx^{(i)}}}(A_{i,t})<\frac{3}{4}$. For the sake of contradiction, let us assume that $\mathbb{P}_{I_{\bfx^{(i)}}}(A_{i,t})\geq\frac{3}{4}$ for all $i\in \{1,2\}$. Then we have $\mathbb{P}_{I_{\bfx^{(1)}}}(A_{1,t})-\mathbb{P}_{I_{\bfx^{(2)}}}(A_{1,t})>\frac{1}{2}$. 

    For an instance $I_{\bfx^{(i)}}$, let $f_i(\ell_1,\ldots,\ell_T)$ denote the joint PMF for the tuple of loss values observed by $\calA$ in each round under the probability law $\mathbb{P}_{I_{\bfx^{(i)}}}$. Our sample space is $\Omega=\{0,1\}^T$.

    First observe that the instances $I_{\bfx^{(1)}}$ and $I_{\bfx^{(2)}}$ only differ at the indices $d_{1:j-1}+1$ and $d_{1:j-1}+2$. For each $\omega\in \Omega$, let $\bfx_{1,\omega},\bfx_{2,\omega},\ldots, \bfx_{T,\omega}$ be the sequence of arms chosen by $\calA$ on $\omega$.
    Conditioning on a set of outcomes $X_1=\omega_1,X_2=\omega_2,\ldots,X_{t-1}=\omega_{t-1}$, we have $X_t\sim \mathrm{Ber}(\mu_1)$ for the instance $I_{\bfx^{(1)}}$ and $X_t\sim \mathrm{Ber}(\mu_2)$ for the instance $I_{\bfx^{(2)}}$ where $|\mu_1-\mu_2|=\frac{\varepsilon_j}{m}$. Now we have the following:
    \begin{align*}
         \KL(f_1,f_2)&=\sum\limits_{\omega\in \Omega}f_1(\omega)\left(\KL(f_1(X_1),f_2(X_1))+\sum_{t=2}^T \KL(f_1(X_t|X_{-t}=\omega_{-t}),f_2(X_t|X_{-t}=\omega_{-t}))\right)\\
         &\leq\frac{4\varepsilon_j^2T}{m^2} \sum\limits_{\omega\in \Omega}f_1(\omega)\\
         &=\frac{4\varepsilon_j^2T}{m^2}
    \end{align*}
Fix $\varepsilon_j=\frac{m}{4\sqrt{T}}$. Due to Pinsker's inequality we arrive at the following contradiction:
    \begin{align*}
        \mathbb{P}_{I_{\bfx^{(1)}}}(A_{1,t})-\mathbb{P}_{I_{\bfx^{(2)}}}(A_{1,t})&\leq \sqrt{\frac{\KL(f_1,f_2)}{2}}\\
        &\leq \sqrt{\frac{2\varepsilon_j^2T}{m^2}}\\
        &<\frac{1}{2}
    \end{align*}
Now we have the following:
    \begin{align*}
       R_T^{(j)}(\bfx^{(1)})+ R_T^{(j)}(\bfx^{(2)})&= \frac{2\varepsilon_j T}{m}-\frac{\varepsilon_j}{m}\sum_{t=1}^T\mathbb{P}_{I_{\bfx^{(1)}}}\big[\bfx_t[d_{1:j-1}+1]=1\big]+\mathbb{P}_{I_{\bfx^{(2)}}}\big[\bfx_t[d_{1:j-1}+2]=1\big]\\
       &= \frac{2\varepsilon_j T}{m}-\frac{\varepsilon_j}{m}\sum_{t=1}^T\mathbb{P}_{I_{\bfx^{(1)}}}[A_{1,t}]+\mathbb{P}_{I_{\bfx^{(2)}}}[A_{2,T}]\\
       &> \frac{2\varepsilon_j T}{m}-\frac{7\varepsilon_j}{4m} \tag{$\mathbb{P}_{I_{\bfx^{(1)}}}[A_{1,t}]+\mathbb{P}_{I_{\bfx^{(2)}}}[A_{2,T}]<\frac{7}{4}$}\\
       &= \frac{\varepsilon_j T}{4m}\\
       &= \frac{\sqrt{T}}{16}
    \end{align*}
Now we have the following:
    \begin{align*}
        \mathbb{E}_{I_{\bfx'}\sim \mathrm{Unif}(\calI)}[R_T^{(j)}(\bfx')]&=\frac{1}{\prod_{s=1}^md_s}\sum_{\bfx\in \calX^{(j)}}\sum_{i\in \{1,2\}} R_T^{(j)}(\bfx^{(i)})\\
        &\geq \frac{1}{16\prod_{s=1}^md_s}\sum_{\bfx\in \calX^{(j)}}\sqrt{T}\\
        &=\frac{1}{16\prod_{s=1}^md_s}\cdot \prod_{s\neq j}d_s\cdot \sqrt{T}\\
        & =\frac{\sqrt{T}}{32}
    \end{align*}
Hence, our claim holds and therefore we have 
$$\mathbb{E}_{I_{\bfx'}\sim \mathrm{Unif}(\calI)}[R_T(\bfx')]=\sum_{j=1}^m \mathbb{E}_{I_{\bfx'}\sim \mathrm{Unif}(\calI)}[R_T^{(j)}(\bfx')]\geq c'\sum_{j=1}^m\sqrt{d_jT}$$ 
where $c'$ is some absolute constant. Due to Yao's lemma we have that any randomized algorithm should also have a regret of at least $c''\sum_{j=1}^m\sqrt{d_jT}$  where $c''$ is some absolute constant.

\subsection{Lower Bound for EXP3 with Kiefer-Wolfowitz Exploration}\label{appendix:exp3-lower-bound}
For any set of arms $\calX\subseteq \{0,1\}^d$ such that the dimension $\calX$ is $\Theta(d)$, EXP3 with Kiefer-Wolfowitz exploration plays a fixed distribution $\pi$ over the set of arms with probability at least $\sqrt{\frac{d\log |\calX|}{cT}}$ where $c$ is an absolute constant.

Consider the Multi-task MAB instance with set of arms $\calX$ where $d_i=2$ for all $i\in \llbracket  m-1 \rrbracket$ and $d_m=m^2$ where $m\geq 2$. Recall that $d=\sum_{j=1}^md_j$ and $d_{1:i}=\sum_{j=1}^id_j$. There exists an index $i_\star\in \llbracket d_{1:m-1}+1,d_{m}\rrbracket$, such that $\sum_{\bfx\in \calX}\pi[\bfx[i_\star]]\leq \frac{1}{2}$. For all $t\in\llbracket T\rrbracket$, we choose a loss function $\bfy_t:\llbracket d\rrbracket\to[-1,1]$ such that $\bfy_t[i]=-1$ if $i=i_\star$ and $\bfy_t[i]=0$ otherwise. It is easy to observe that EXP3 with Kiefer-Wolfowitz exploration incurs an expected regret of at least $\sqrt{\frac{d\log |\calX|}{cT}}\cdot \frac{1}{2}$ in each round. Hence, EXP3 with Kiefer-Wolfowitz exploration incurs a regret of at least $\Omega(\sqrt{dT\log |\calX|})=\Omega(\sqrt{m^3T})$.

\subsection{A Simple, Efficient Algorithm for Multi-task MAB}\label{appendix:multi-mab-simple}
Recall that in the Multi-task MAB problem, we are given a set of $m$ multi-armed bandit (MAB) problems, where the $i$-th MAB problem has $d_i$ arms. In each round, we simultaneously choose one arm from each MAB problem and receive the sum of the losses of the chosen arms as the loss feedback. The objective is to minimize regret with respect to the best arm in each MAB problem in hindsight.

Consider the following algorithm. For each $i \in \llbracket m \rrbracket$, we independently execute EXP3-IX \cite{neu2015explore} on the $i$-th MAB problem. Note that for any MAB problem with $K$ arms and losses in $[-1,1]$, the version of EXP3-IX under consideration incurs a regret of at most $c\sqrt{KT\log(K/\delta')}$ with probability at least $1 - \delta'$, where $c$ is an absolute constant.

Let $\bfy_{t,i} : \llbracket d_i \rrbracket \to \mathbb{R}$ be the loss function for the arms in the $i$-th MAB. For each $i \in \llbracket m \rrbracket$, let $I_{t,i}$ be the arm selected by the EXP3-IX algorithm for the $i$-th MAB in round $t$. We choose these recommended arms and observe the total loss $\ell_t = \sum_{i=1}^{m} \bfy_{t,i}[I_{t,i}]$, which satisfies $\ell_t \in [-1,1]$. We then provide $\ell_t$ as the loss feedback to each EXP3-IX algorithm.

If we set $\delta'=\delta/m$, then with probability at least $1-\delta$, the regret incurred in the multi-task MAB problem is upper-bounded as follows:
\begin{align*}
    \mathrm{Regret}(T)&=\max_{(j_1,j_2,\ldots,j_m)\in [d_1]\times[d_2]\times\ldots\times [d_m]}\sum_{t=1}^T\sum_{i=1}^m\bfy_{t,i}[I_{t,i}]-\sum_{t=1}^T\sum_{i=1}^m\bfy_{t,i}[j_i]\\
    &=\sum_{i=1}^m\max_{j_i\in[d_i]}\sum_{t=1}^T\bfy_{t,i}[I_{t,i}]-\sum_{t=1}^T\bfy_{t,i}[j_i]\\
    &\leq c\cdot\sum_{i=1}^m \sqrt{d_iT\log(md_i/\delta)}
\end{align*}

We obtain the last inequality because the EXP3-IX algorithm running on the $i$-th MAB effectively operates on a bandit instance where the loss of the $j$-th arm is adaptively chosen as $\bfy_{t,i}[j] + \sum_{i' \neq i} \bfy_{t,i'}[I_{t,i'}]\in [-1,1]$. Consequently, with probability at least $1-\delta/m$, we have: 
\begin{align*}
&\quad\max_{j_i \in [d_i]} \sum_{t=1}^T \bfy_{t,i}[I_{t,i}] - \sum_{t=1}^T \bfy_{t,i}[j_i]\\
&= \max_{j_i \in [d_i]} \sum_{t=1}^T \left(\bfy_{t,i}[I_{t,i}] + \sum_{i' \neq i} \bfy_{t,i'}[I_{t,i'}] \right) - \sum_{t=1}^T \left(\bfy_{t,i}[j_i] + \sum_{i' \neq i} \bfy_{t,i'}[I_{t,i'}] \right) \\
&\leq c \cdot \sqrt{d_i T \log (m d_i / \delta)}.
\end{align*}
We then obtain the high-probability regret guarantee by applying the union bound.

\textbf{Remark:} Prior to our work, \citet{zimmert2019beating} used a similar approach for Hypercube.

\subsection{Minimax Lower Bound for DAGs}\label{appendix:minimax-lower}
We prove the following theorem in this section.
\begin{theorem}\label{thm:minimax-lower-bound}
    Consider integers $d, N \geq 4$ satisfying $d \leq N \leq 2^{d/2}$. There exists a DAG $G$ with at most $d$ edges and at most $N$ paths from the source to the sink such that the regret is lower bounded by $\Omega\left( \sqrt{dT \log(N)/\log(d)} \right)$.
\end{theorem}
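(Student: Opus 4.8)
The plan is to realize the hard instance as a \emph{multi-task MAB} problem encoded as a DAG, reusing the reduction of \Cref{appendix:application-multi-task} together with the lower bound of \Cref{appendix:multi-MAB-lower-bound}. Concretely, I set
\[
m := \left\lfloor \frac{\log N}{\log d} \right\rfloor, \qquad k := \left\lfloor \frac{d}{2m} \right\rfloor,
\]
and take $G$ to be the DAG associated with the multi-task MAB instance having $m$ tasks, each with $k$ arms (i.e.\ $d_1 = \dots = d_m = k$ in the construction of \Cref{appendix:application-multi-task}).

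First I would check admissibility of these parameters. Since $d \le N$ we have $\log N / \log d \ge 1$, hence $m \ge 1$; and since $N \le 2^{d/2}$ we have $\log N \le d/2$, so $d/(2m) \ge d\log d /(2\log N) \ge \log d \ge 2$ (using $d \ge 4$), which gives $k \ge 2$ — in particular the hypotheses of \Cref{appendix:multi-MAB-lower-bound} are met. The DAG of \Cref{appendix:application-multi-task} with these parameters has exactly $2\sum_i d_i = 2mk \le d$ edges and exactly $\prod_i d_i = k^m \le (d/(2m))^m \le d^{\lfloor \log N/\log d\rfloor} \le d^{\log N/\log d} = N$ paths from source to sink (here I use that $\log$ is base $2$), matching the two size constraints in the statement.

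Next I would invoke \Cref{appendix:multi-MAB-lower-bound}: against this instance every randomized algorithm incurs expected regret at least $c\sum_{i=1}^m \sqrt{d_i T} = c\,m\sqrt{kT}$ for an absolute constant $c$, which a fortiori lower-bounds the high-probability regret. It remains to verify $m\sqrt{kT} = \Omega\!\big(\sqrt{dT\log N/\log d}\big)$. From $\log N/\log d \le d/4$ (a consequence of $\log N \le d/2$ and $\log d \ge 2$) we get $m \le d/4$, hence $k = \lfloor d/(2m)\rfloor \ge d/(4m)$, so $m\sqrt{kT} \ge \tfrac12\sqrt{mdT}$; and $\log N/\log d \ge 1$ forces $m \ge \tfrac12\,\log N/\log d$, whence $m\sqrt{kT} = \Omega\!\big(\sqrt{dT\log N/\log d}\big)$, which is the claim.

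The difficulty here is not conceptual but arithmetic: one must choose $m$ and $k$ so that all three desiderata — at most $d$ edges, at most $N$ paths, and $mk = \Theta(d)$ with $m = \Theta(\log N/\log d)$ — hold \emph{simultaneously} over the whole admissible range $d \le N \le 2^{d/2}$, including the extreme regimes $N \approx d$ (where $m = 1$ and $G$ is essentially a single $\Theta(d)$-armed bandit) and $N \approx 2^{d/2}$ (where $m = \Theta(d/\log d)$ and each block has size $\Theta(\log d)$). The flooring in the definitions of $m$ and $k$ is chosen precisely so that $k^m \le N$ holds with no slack while still keeping $mk$ of order $d$. A secondary point of care is that \Cref{appendix:multi-MAB-lower-bound} is stated for $T$ large enough that the biases $\varepsilon_j$ appearing in its proof are valid (e.g.\ $T = \Omega(k)$), an assumption that is implicit in any minimax lower bound of this type and that I would state explicitly.
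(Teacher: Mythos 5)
Your proposal is correct and follows essentially the same route as the paper: instantiate a multi-task MAB instance with $m \approx \log N/\log d$ tasks of $\approx d/(2m)$ arms each, apply the DAG reduction of \Cref{appendix:application-multi-task}, and invoke the $\Omega\bigl(\sum_i \sqrt{d_i T}\bigr)$ lower bound of \Cref{appendix:multi-MAB-lower-bound}. The only difference is that you carry out the flooring/integrality bookkeeping explicitly (and check $d_i \ge 2$), which the paper glosses over by assuming divisibility.
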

\begin{proof}
Consider the instance of the multi-task MAB problem where $m = \log(N)/\log(d)$ and $d_i = \frac{d}{2m}$. For simplicity, we assume that both $m$ and $\frac{d}{2m}$ are integers. As shown in \Cref{appendix:multi-MAB-lower-bound}, this instance has a regret lower bound of $\Omega\left( \sum_{i=1}^m \sqrt{d_i T} \right) = \Omega\left( \sqrt{d T \log(N)/\log(d)} \right)$.

Next, consider the reduction of this multi-task MAB problem to a directed acyclic graph (DAG) $G$, which is described in \Cref{appendix:application-multi-task}. First, note that for the graph $G$, the regret lower bound remains $\Omega\left( \sqrt{d T \log(N)/\log(d)} \right)$. Furthermore, the graph $G$ contains $d$ edges and has at most $\left( \frac{d}{2m} \right)^m \leq N$ paths from the source to the sink. This follows from the fact that $m \log \left( \frac{d}{2m} \right) \leq m \log(d) = \log(N)$. Hence, DAG $G$ is the required graph.
\end{proof}

\section{Extensive-Form Games: Additional Details}
\subsection{Linear Bandit Formulation of Extensive-Form Games}\label{appendix:extensive-linear}
First, we prove the following lemma.
\begin{lemma}\label{lem:leaf-non-leaf}
    Consider a tree such that each non-leaf node has at least 2 children. Then the number of leaf nodes in the tree at least the number of non-leaf nodes in the tree.
\end{lemma}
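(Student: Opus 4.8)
The plan is to prove the slightly sharper statement that the number of leaves is at least one more than the number of non-leaf (internal) nodes, and to obtain it by a direct edge-counting argument on the rooted tree. Write $n$ for the total number of nodes, $L$ for the number of leaves and $I$ for the number of non-leaf nodes, so that $n = L + I$. A (nonempty) rooted tree on $n$ nodes has exactly $n-1$ edges, and each edge joins a non-root node to its parent; hence the number of parent--child incidences is $n-1$. Grouping these incidences by parent, each internal node contributes its number of children, which is at least $2$ by hypothesis, while each leaf contributes $0$. Therefore $n-1 \ge 2I$, i.e.\ $L + I - 1 \ge 2I$, which rearranges to $L \ge I + 1 > I$. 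This is the claim, with room to spare; note the single-node case ($n=1$, $I=0$, $L=1$) is handled automatically by the inequality $n-1 \ge 2I$.

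As an alternative I would run an induction on $n$, but it is worth observing that one must carry the stronger invariant $L \ge I+1$ through the induction, since the naive invariant $L \ge I$ does not close. The base case is the single node, where $L=1 \ge 0+1 = I+1$. For the inductive step, delete the root (which, the tree having at least two nodes, is internal and so has $k \ge 2$ children), obtaining subtrees $T_1,\dots,T_k$ that each satisfy the hypothesis, and sum: $L = \sum_{i=1}^k L_i \ge \sum_{i=1}^k (I_i + 1) = \bigl(\sum_{i=1}^k I_i\bigr) + k \ge (I-1) + 2 = I+1$.

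The argument is essentially routine, so there is no real obstacle; the only points needing care are (i) fixing the convention that the tree is rooted, so that ``leaf'', ``children'' and ``$n-1$ edges'' are all meaningful and consistent, and (ii) the degenerate single-node tree, where the unique node counts as a leaf with no internal nodes. I would then invoke this lemma in the extensive-form-game analysis: since every non-terminal node of the game tree is a decision node $x$ with $|A_x| > 1$ children or an observation node $y$ with $|B_y| > 1$ children, the lemma applied to the game tree yields a lower bound on the number of terminal nodes $|\calZ|$ in terms of the number of non-terminal nodes, which is what the linear-bandit formulation needs.
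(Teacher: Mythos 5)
Your primary edge-counting argument is correct and is essentially identical to the paper's proof: both count the $n-1$ edges of the tree, lower-bound them by $2I$ using the branching hypothesis, and conclude $L \ge I+1$. The alternative induction and the remarks on conventions are fine but not needed beyond that.
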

\begin{proof}
    Let $L_1$ be the number of leaf nodes in the tree and $L_2$ be the number of non-leaf nodes in the tree. Let $E$ be the total number of edges in the tree. Then we have $E=L_1+L_2-1$. Also, we have $E\geq 2L_2$ as each non-leaf node contributes to at least two edges as they have at least 2 children each. Hence, we have $L_1+L_2-1\geq 2L_2$ which implies that $L_1\geq L_2+1$.
\end{proof}

Now we describe the linear bandit formulation of Extensive-form games. For a configuration $\bfa=\{a_x\}_{\bfx\in\calX}$ of actions at the decision nodes, we describe a vector $s^{\bfa}\in\{0,1\}^{|\calX|+|\calY|+|\calZ|}$ indexed by the nodes in the game as follows:
\begin{align*}
    &s^{\bfa}[x^\mathsf{r}]=1\\
    &s^{\bfa}[x]=s^{\bfa}[\rho_x[a_x]] \quad \forall \bfx\in \calX\\
    &s^{\bfa}[\rho_x[a]]=0\quad \forall \bfx\in \calX,\forall a\in A_x\setminus\{a_x\}\\
    &s^{\bfa}[y]=s^{\bfa}[\rho_y[b_y]]\quad \forall y\in \calY, \forall b_y\in B_y\\
\end{align*}
Let $\calS_x$ be the set of all such vectors $s^{\bfa}$ corresponding to all possible configurations $\bfa$ of actions at the decision nodes. $\calS_x$ now is the set of arms in our linear bandit formulation. Recall the definition of $N$. It is easy to observe that $|\calS_x|=N$.

Next for a configuration $\bfb=\{b_y\}_{y\in\calY}$ of actions at the decision nodes, we describe a vector $s^{\bfb}\in\{0,1\}^{|\calX|+|\calY|+|\calZ|}$ indexed by the nodes in the game as follows:
\begin{align*}
    &s^{\bfb}[x^\mathsf{r}]=1\\
    &s^{\bfb}[y]=s^{\bfb}[\rho_y[b_y]] \quad \forall y\in \calY\\
    &s^{\bfb}[\rho_y[b]]=0\quad \forall y\in \calY,\forall b\in B_y\setminus\{b_y\}\\
    &s^{\bfb}[x]=s^{\bfb}[\rho_x[a_x]]\quad \forall \bfx\in \calX, \forall a_x\in A_x\\
\end{align*}
 
Next, for a loss function $\bfy_t:\calZ\to[-1,1]$ over the terminal nodes and a configuration $\bfb_t:=\{b_{y,t}\}_{y\in\calY}$ of actions at the observation nodes, we describe a loss function $\widehat\bfy_t:\calX\cup\calY\cup\calZ\to[-1,1]$ as follows. For all $v\in\calX\cup\calY$, we have $\widehat\bfy_t[v]:=0$. For all $z\in \calZ$, we have $\widehat \bfy_t[z]=\bfy_t[z]\cdot\ind[s^{\bfb_t}[z]=1]$. It is easy to observe that $\bfy_t[z(\bfa_t,\bfb_t)]=\langle s^{\bfa_t},\widehat\bfy_t\rangle$.

Let $\calT=(\calV,\calE)$ be the extensive-form game tree where $\calV=\calX\cup\calY\cup\calZ$ and $\calE=\{(x,\rho_x[a_x]):\bfx\in \calX,a_x\in A_x\}\cup\{(y,\rho_y[b_y]):y\in\calY,b_y\in B_y\}$. Due to Lemma \ref{lem:leaf-non-leaf}, we have $|\calX|+|\calY|\leq |\calZ|$. Hence, by using the EXP3 algorithm, we get an upper bound of $\calO(\sqrt{|\calZ|T\log(N)})$.

\subsection{Additional Details on Reduction to DAG}\label{appendix:extensive-fact}
In this section, we show that the DAG $G=(V,E)$ that we constructed during the reduction from extensive-form games has $\calO(|\calZ|)$ nodes. Let $\calT=(\calV,\calE)$ be the extensive-form game tree where $\calV=\calX\cup\calY\cup\calZ$ and $\calE=\{(x,\rho_x[a_x]):\bfx\in \calX,a_x\in A_x\}\cup\{(y,\rho_y[b_y]):y\in\calY,b_y\in B_y\}$.  Due to \Cref{lem:leaf-non-leaf}, we have $|\calX|+|\calY|\leq |\calZ|$. Hence, there are $\calO(|\calZ|)$ edges in $\calT$. 

Recall the construction of $G$. Each terminal node $z$ is associated with the edge $(z_{\mathsf{s}},z_{\mathsf{t}})$. Next each edge of the type $(x,\rho_x[a])$ is associated with the edges $(x_{\mathsf{s}},u_{\mathsf{s}})$ and $(u_{\mathsf{t}},x_{\mathsf{t}})$. Similarly, each edge of the type $(y,u^{(i)})$ is associated with the edges $(v_1,u^{(i)}_{\mathsf{s}})$ and $(u^{(i)}_{\mathsf{s}},v_2)$, where $u^{(i)}=\rho_y[b]$ for some $b\in B_y$, $v_1$ is either $y_{\mathsf{s}}$ or $u^{(i-1)}_{\mathsf{s}}$, and $v_2$ is either $y_{\mathsf{t}}$ or $u^{(i+1)}_{\mathsf{s}}$. Hence, $G$ has $\calO(|\calZ|)$ edges.

\end{document}